\def\1{\bm{1}}
\def\ry{{\textnormal{y}}}
\def\rvx{{\mathbf{x}}}
\def\vmu{{\bm{\mu}}}
\def\vlambda{{\bm{\lambda}}}
\def\vx{{\bm{x}}}
\def\vz{{\bm{z}}}
\DeclareMathAlphabet{\mathsfit}{\encodingdefault}{\sfdefault}{m}{sl}
\SetMathAlphabet{\mathsfit}{bold}{\encodingdefault}{\sfdefault}{bx}{n}
\newcommand{\E}{\mathbb{E}}
\DeclareMathOperator*{\argmin}{arg\,min}
\DeclareMathOperator{\VC}{VC}
\newcommand{\cA}{\mathcal{A}}
\newcommand{\cD}{\mathcal{D}}
\newcommand{\cF}{\mathcal{F}}
\newcommand{\cG}{\mathcal{G}}
\newcommand{\cH}{\mathcal{H}}
\newcommand{\cN}{\mathcal{N}}
\newcommand{\cM}{\mathcal{M}}
\newcommand{\cO}{\mathcal{O}}
\newcommand{\cX}{\mathcal{X}}
\newcommand{\cY}{\mathcal{Y}}
\newcommand{\ind}{\mathbbm{1}}
\newcommand{\abst}{\varnothing}
\newcommand{\cS}{\mathcal{S}}
\renewcommand{\P}{\mathbb{P}}
\newcommand{\err}{\mbox{err}}
\renewcommand{\phi}{\varphi}
\newcommand{\reals}{{\mathbb R}}
\newcommand{\hy}{{\tilde{y}}}
\newtheorem{theorem}{Theorem}[section]
\newtheorem{lemma}{Lemma}[section]
\newtheorem{proposition}{Proposition}[section]
\newtheorem{corollary}{Corollary}[section]
\newtheorem{definition}{Definition}
\DeclareFontFamily{U}{mathx}{\hyphenchar\font45}
\DeclareFontShape{U}{mathx}{m}{n}{
      <5> <6> <7> <8> <9> <10>
      <10.95> <12> <14.4> <17.28> <20.74> <24.88>
      mathx10
      }{}
\DeclareSymbolFont{mathx}{U}{mathx}{m}{n}
\DeclareMathAccent{\widebar}{0}{mathx}{"73}
\newtheorem*{theorem*}{Theorem}
\newtheorem*{definition*}{Definition}
\newtheorem*{assumption*}{Assumption}
\newcommand{\overbar}[1]{\mkern 1.5mu\overline{\mkern-1.5mu#1\mkern-1.5mu}\mkern 1.5mu}
\title{Theoretical Analysis of Weak-to-Strong Generalization}
\author{
Hunter Lang\\
MIT
\And David Sontag\\
MIT\\
\And Aravindan Vijayaraghavan\\
Northwestern University
}
\begin{document}

\maketitle

\begin{abstract}
Strong student models can learn from weaker teachers: when trained on the predictions of a weaker model, a strong pretrained student can learn to correct the weak model's errors and generalize to examples where the teacher is not confident, even when these examples are excluded from training. 
This enables learning from cheap, incomplete, and possibly incorrect label information, such as coarse logical rules or the generations of a language model.
We show that existing weak supervision theory fails to account for both of these effects, which we call \textit{pseudolabel correction} and \textit{coverage expansion}, respectively.
We give a new bound based on \textit{expansion} properties of the data distribution and student hypothesis class that directly accounts for pseudolabel correction and coverage expansion.
Our bounds capture the intuition that weak-to-strong generalization occurs when the strong model is unable to fit the mistakes of the weak teacher without incurring additional error.
We show that these expansion properties can be checked from finite data and give empirical evidence that they hold in practice.
\end{abstract}

\section{Introduction}
\label{sec:intro}
Weakly-supervised learning allows practitioners to train models with possibly-incorrect, easy-to-obtain \textit{pseudo}labels instead of accurate and expensive ground-truth labels.
For example, suppose the goal is to classify documents based on whether they have positive or negative sentiment. 
Instead of employing humans to label examples $\vx_i$ with positive/negative sentiment labels $y_i$, weak supervision enables models to learn from simple rules, such as: \texttt{if `incredible' $\in \vx_i$, sentiment = positive}.
Called \textit{programmatic weak supervision}, a wealth of literature has shown how to aggregate rules, often called ``labeling functions'', into individual pseudolabels that can be used to train a model \citep[e.g.,][]{ratner2016data, ratner2017snorkel, varma2019learning, fu2020fast}. 
In typical pipelines, this consists of fine-tuning a pre-trained neural network \citep{zhang2022survey}.
This method has met with a huge amount of empirical success in natural language processing \citep{ratner2016data, meng2018weakly, karamanolakis2021self, lang2021self, yu2021fine} computer vision \citep{zhang2018w2f, varma2018snuba}, and verticals such as healthcare \citep{irvin2019chexpert, johnson2019mimic, fries2019weakly, wang2019clinical}.

Another emerging trend in weak supervision is to use the zero-shot or few-shot outputs of a large language model (LLM) as pseudolabels for training another language model---the student model often outperforms its noisy ``teacher'', and this technique even works when the teacher is a less powerful model than the student, distinguishing it from classical knowledge distillation \citep{wang2021want, ding2022gpt, eisenstein2022honest, lang2022co, agrawal2022large, he2023annollm, kuzman2023chatgpt, wang-etal-2023-self-instruct, li2023self, burns2023weak}.
Good data selection can be critical, and several approaches carefully choose a ``confident'' subset of the pseudolabels \citep[e.g.][]{lang2022co, eisenstein2022honest, li2023self, wang-etal-2023-self-instruct}.
The increasing prevalence of LLM use in crowdwork \citep{veselovsky2023prevalence} highlights the importance of better understanding this learning method.

\looseness=-1 In both cases, the pseudolabels $\hy$ are given by the \textit{pseudolabeler} or \textit{teacher model}, which is some function of the input example $\vx$.
The pseudolabeler may make errors ($\hy(\vx) \ne y(\vx)$), and it may not \textit{cover} every point---there are some points where the teacher abstains from providing a weak label.
In the examples above, these are points not covered by the rules and points where the teacher LLM is not confident, respectively. 
\textit{A priori}, it seems that a powerful enough classifier should exactly fit the pseudolabeler on the covered data and have trivial performance on the uncovered data.
However, this is not what happens in practice---the empirical success of weak supervision is due to two surprising, related phenomena that together comprise \textit{weak-to-strong generalization:}
(a) \textit{Pseudolabel correction:} The performance of the model exceeds the performance of the pseudolabels used to train it; and
(b) \textit{Coverage expansion:} The model performs well even on the portion of example space $\cX$ that is not covered by pseudolabels.
These empirical outcomes are key to the success of weak supervision.

Surprisingly, the existing theoretical literature on programmatic weak supervision does not address either phenomenon---the majority of weak supervision theory literature either focuses on how to adeptly combine the outputs of multiple ``weak rules'' into a single pseudolabel $\tilde{y}_i$, \citep{ratner2016data, ratner2017snorkel,ratner2019training, fu2020fast}, or
treats learning from weak supervision as learning from noisy labels  \citep[e.g.][]{northcutt2021confident, sugiyama2022machine, chiang2023unified}, but as we discuss in Section \ref{sec:related}, this framing does not capture the setting we consider, where the pseudolabels used to train one model are the outputs of another model.

In this work, we give a theoretical analysis of weakly-supervised learning that provably accounts for the effects of pseudolabel correction and coverage expansion. 
Our results use a natural \textit{expansion} condition on the population data distribution. 
Informally, expansion implies that ``bad'' points (points with incorrect pseudolabels, or points with no pseudolabel at all) have many ``good'' neighbors (points with correct pseudolabels).
If the learned student model is relatively robust on the neighborhoods of interest, then making a mistake on a bad point means making many mistakes on good points as well.
This allows us to prove a relationship between the student model's error on the weak labels (the training objective) and the student model's error on the true labels (the desired objective).
Our assumptions and bounds in Section \ref{sec:bounds} formalize this intuition. Section \ref{sec:checking-expansion} details a procedure for checking our expansion conditions from finite data, and in Section \ref{sec:experiments} and Appendix \ref{apdx:experiments}, we give empirical evidence that these conditions hold on real data.

To the best of our knowledge, our results provide the first error bounds for programmatic weak supervision with realistic assumptions. %
We show that our bounds generalize and connect several existing results from the co-training, self-training, and distribution shift literature \citep[e.g.,][]{blum1998combining, balcan2004co, wei2020theoretical, cai2021theory} and adapt them to the weak supervision setting. For example, we show in Appendix \ref{apdx:co-training} that Theorem \ref{thm:covex-pseudorandom-advrobust} generalizes the co-training results of \citet{blum1998combining}. 
We discuss these generalizations in detail in Sections \ref{sec:related} and \ref{sec:bounds} and Appendix \ref{apdx:other-bounds}.
Our result in Section \ref{sec:checking-expansion} is the first among these works to prove that the expansion assumptions can, in principle, be checked using finite data. Unlike most prior work in this space, our experiments in Section \ref{sec:experiments} attempt to check whether the expansion assumptions hold in practice. While our experiments are limited in scope, our attempt to systematically check expansion in a practical scenario is a major departure from previous work.%

Finally, prior work with expansion assumptions similar to ours (in the co-training \citep{blum1998combining, balcan2004co}, self-training \citep{wei2020theoretical}, and distribution shift \citep{cai2021theory} literature) requires that the classifiers are either perfectly robust \citep{blum1998combining, balcan2004co} or adversarially robust \citep{wei2020theoretical, cai2021theory} for their bounds to apply.
Empirical results suggest that adversarial training has fairly limited value for improving coverage expansion and pseudolabel correction \citep{li2023characterizing}, and that these two effects still occur for student models that are not adversarially trained \citep{zhang2022survey, burns2023weak}.
To close this gap, we make a connection to the literature on \textit{robust expansion} \citep{kannan2006blocking, kwok2016improved, makarychev2023higher} and prove error bounds for student models that are merely ``robust on average.'' Unlike prior work, these bounds allow for the presence of adversarial examples for every input point.

\section{Related Work}
\label{sec:related}
\citet{ratner2016data, ratner2017snorkel,ratner2019training, fu2020fast} focus on how to combine the outputs of multiple ``weak rules'' into a single pseudolabel $\tilde{y}(\vx)$ for each covered example, a problem with a long history in the crowdsourcing literature \citep[e.g.][]{dawid1979maximum, khetan2018learning, karger2011iterative}. 
However, empirical results indicate that this is not the important aspect of weak supervision: most methods for combining weak rules fail to significantly outperform majority vote once the final classifier is trained \citep{zhang2022understanding}.
Works in this literature that \textit{do} provide error bounds for the student (e.g., \citet{fu2020fast, ratner2016data}) either fail to capture weak-to-strong generalization effects, as shown in Section \ref{sec:background}, or make difficult-to-justify assumptions---for example, \citet{ratner2016data} assumes that $y(\rvx)$ and $\rvx$ are conditionally independent given $\hy(\rvx)$.
If this were true, there would be no gain from training a classifier, since $\hy(\rvx)$ already captures all the information that $\rvx$ contains about $y(\rvx)$.
Work that treats learning from weak supervision as a noisy label learning problem \citep[e.g.][]{natarajan2013learning, northcutt2021confident, sugiyama2022machine, chiang2023unified} does not capture the types of weak supervision we consider. 
When the supervision comes from weaker model (be it rule-based or a weaker LLM), there is no exogenous noise process that corrupts the training labels.
There are simply some points that deterministically get the wrong labels and some points with no label.
This rules out common noise models like class-conditional noise \citep{northcutt2021confident} and Tsybakov noise \citep{tsybakov}, and is arguably not appropriate to model as instance-dependent noise \citep{sugiyama2022machine, chiang2023unified}, since for each $\vx$ the noise is deterministically 0 or 1.

\citet{burns2023weak} conducted a large empirical study showing widespread weak-to-strong generalization effects when training a strong language model on the generations of a weaker model. 
Our results are a step toward a theoretical understanding of these effects.
Several works have used expansion to give provable guarantees in other settings where models are learning from each other.
\citet{balcan2004co} use expansion to analyze co-trained \citep{blum1998combining} classifiers. 
Our expansion assumption is similar to their ``left-right'' expansion, but we generalize beyond the multi-view setup of co-training and account for error propagation.
\citet{wei2020theoretical} give provable guarantees for self-training under expansion assumptions similar to ours.
We provide a different pseudolabel correction bound, tighter guarantees for coverage expansion under weaker assumptions, and generalize both results to classifiers that are not adversarially robust.
\citet{cai2021theory} use expansion to prove general guarantees for pseudolabel correction, semi-supervised learning, and unsupervised domain adaptation, but
their results require the student to be very adversarially robust.
Compared to all these expansion works, we also prove that expansion can be statistically checked from finite data and provide more empirical evidence for our assumptions.
We discuss more related work in Appendix \ref{apdx:additional-related}.

\section{Setup and Shortcomings of Existing Bounds}
\label{sec:background}
\textbf{Notation.} $\rvx$ refers to a random variable with distribution $\cD$ and italicized letters $\vx$ refer to realizations of $\rvx$.
We will assume for ease of exposition that the input space $\cX$ is a discrete\footnote{This is done to simplify some of the arguments, as in \citet{haochen2021provable}. As shown there, the results can be generalized to continuous $\cX$ with additional regularity conditions.
Our bounds have no dependence on $|\cX|$.} (but possibly very large) set, such as all vectors in $\mathbb{R}^d$ up to a fixed numerical precision.
For $A\subset \cX$ we use $\widebar{A} = \cX \setminus A$.
We assume there is a ground-truth function of interest, $y: \cX \to \cY = \{1,\ldots,k\}$, and a \textit{pseudolabeler} $\hy: \cX \to \cY \cup \{\abst\}$, which assigns to each point $\vx$ either a label in $\cY$ or the special ``abstention'' symbol $\abst$.
The function $\hy$ can also be thought of as the \textit{teacher model}, but we are primarily concerned with instances where the teacher is much less capable than the ``student'' it will be used to train.

Define $S = \{\vx | \tilde{y}(\vx) \ne \abst\}$ to be the \textit{covered} subset of $\cX$, i.e., the subset of $\cX$ that has a pseudolabel, and let $T = \{\vx | \tilde{y}(\vx) = \abst\} = \cX \setminus S$ be the uncovered set.
This notation serves to emphasize that training occurs on a (pseudolabeled) \textit{source} subset $S$, and then evaluation occurs on the union of $S$ and the (uncovered) \textit{target} $T$. 
Let $\{\cX_i\}$ be a partition of $\cX$ such that within each $\cX_i$, the ground-truth label is constant.
For example, we could set $\cX_i = \{\vx | y(\vx) = i \}$ to be the set of points with ground-truth label $i$.
We will use this definition of $\cX_i$ for convenience, but all our results hold for more general partitions.
Each of $S$ and $T$ can further be partitioned as $S_i = S \cap \cX_i$, $T_i = T \cap \cX_i$. Finally, each $S_i$ can be further partitioned into the correctly-pseudolabeled examples $S_i^{good} = \{\vx \in S_i | \tilde{y}(\vx) = y(\vx)\}$ and the incorrectly-pseudolabeled examples $S_i^{bad} = S_i \setminus S_i^{good}$. Let $\alpha_i := \P(S_i^{bad} | S_i)$ be the error rate of $\tilde y$ on~$S_i$. We assume $0<\alpha_i<\frac{1}{2}$ for all $i$.

\textbf{Problem Setup.}
For two classifiers $f, g: \cX \to \cY$ and a set $U \subset \cX$, we use 
$\err(f,g | U)$ to represent $\P(f(\rvx) \ne g(\rvx) | \rvx \in U)$,
their probability of disagreement conditioned on $\rvx$ falling in the set $U$. 
Here the probability is over $\rvx \sim \mathcal{D}$; this will often be omitted for notational convenience. 
We will be particularly interested in classifiers obtained by minimizing the error on the non-abstaining weak labels over the strong model hypothesis class $\cF$, i.e., (approximate) solutions to $\argmin_{f \in \cF} \err(f,\hy | S)$.
The ultimate goal is to obtain upper bounds on the error $\err(f, y | \cX)$ for such classifiers.
\looseness=-1 That is, we want to upper bound the error of a classifier $f$ on the \textit{true labels} over the \textit{entire} input space $\cX$.

There are two key challenges. 
First, the classifier is trained using $\hy$, not $y$, and $\hy$ may have arbitrary errors that are not captured by any well-studied noise model such as class-conditional or Tsybakov noise \citep{tsybakov}---we've assumed $\hy$ and the true labels $y$ are both deterministic functions of the input, so there is no exogenous noise process that corrupts the training labels.

Second, we care about the performance of $f$ on the entire space $\cX$, but we only train on the \emph{covered} samples from $S \subset \cX$. 
Again, since $\hy$ is an arbitrary deterministic function, our samples from $S$ are not distributed according to $\cD$, and $S$ and $T$ have no overlap, ruling out approaches like importance-weighting.
The following example elaborates on the issues at play and illustrates the shortcomings of existing bounds in the weak supervision literature.
\subsection{Shortcomings of Existing Bounds: Illustrative Example}
A special case of weak-to-strong generalization is training a strong pretrained model on the outputs of very coarse rules. %
Following the example from Section \ref{sec:intro}, suppose our goal is to obtain a sentiment classifier, so $\cX$ is the space of text documents, $\cY = \{-1,1\}$ %
and $\hy$ is given by the following rules: if $\texttt{`incredible'} \in \vx$, $\hy(\vx) = +1$. If $\texttt{`horrible'} \in \vx$, $\hy(\vx) = -1$. Otherwise, $\hy(\vx)=\abst$. 

Assume for simplicity that ``incredible'' and ``horrible'' never co-occur, so $\hy$ is well-defined.
This example shows that the student model hypothesis class $\mathcal{F}$ and the training procedure both play a vital role in weak-to-strong generalization.
Suppose $\cF$ is the class of bag-of-words classifiers, and we obtain a student $f$ by minimizing $\err(f,\hy |S)$. 
Without modifications to the training procedure (such as L2 regularization), $f$ may place a large positive weight on ``incredible'', a large negative weight on ``horrible'', and zero weight on all other tokens.
This model has zero error on the weak labels, so it exactly minimizes the training objective. 
It reproduces the pseudolabels on the covered set and has trivial performance on the uncovered set, so there is no weak-to-strong generalization.
On the other hand, if we were to instead train a linear probe on top of a SentenceBERT \citep{sentencebert} representation, we would obtain a model that improves over $\hy$ on $S$ (the \textit{covered} set of documents containing either ``horrible'' or ``incredible'') and has reasonable performance on $T$ (the uncovered set).
Section \ref{sec:experiments} contains precise results for this example, but the critical (seemingly obvious) aspect is that the student representation and the training details matter for achieving weak-to-strong generalization.

The following proposition (proven in Appendix \ref{apdx:prop-proofs}) illustrates how existing error bounds in the programmatic weak supervision literature, which do not account for training details and the student hypothesis class, are unable to capture pseudolabel correction and coverage expansion.
\begin{proposition}
\label{prop:existing-bound}
Suppose the label marginals for the above example satisfy $\P(\ry=y) = \frac{1}{2}$ for $y\in\{-1,1\}$, and assume that the weak label error rates $\alpha_{-1} = \alpha_1 = \alpha$, and that the weak labels cover each class equally often: $\P(\hy = \abst | \ry = y) = \P(\hy = \abst)$.
Let $\widetilde{f} = \min_{f\in\cF}\err(f,\hy|S)$ be the classifier minimizing the weak label error on the covered set.
Then the bound from \citet[Theorem 3]{fu2020fast} simplifies (in our notation) to:
$\err(\tilde{f},y) \le \P(S)\cdot 4\alpha(1-\alpha) + \P(T).$
\end{proposition}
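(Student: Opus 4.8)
The plan is to specialize the end-model generalization bound of \citet[Theorem 3]{fu2020fast} to this example and then simplify it with the three symmetry assumptions. At a high level, that bound controls the true-label risk of a model $\widetilde f$ trained on weak labels by a sum of three contributions: a finite-sample generalization term for the end model (depending on $\cF$ and the number of training examples), an estimation term for the label-model parameters (depending on the amount of data used to calibrate the weak-label accuracies), and an \emph{irreducible} term governed only by the accuracy of the weak labels on the covered set $S$. Since the proposition concerns the population minimizer $\widetilde f = \argmin_{f\in\cF}\err(f,\hy\mid S)$, the two purely statistical terms vanish, and the whole content of the bound on $S$ reduces to the irreducible term --- which, tellingly, does not depend on $\cF$ or the training procedure at all.

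First I would fix the dictionary between the two settings. In this example the two rules never co-occur, so on $S$ the ``label model'' aggregation is trivial: the aggregated weak label is just $\hy$ itself, and $S$ is exactly the covered set of \citet{fu2020fast}. Under the assumptions I would then compute the class-conditional quantities entering their bound: $\alpha_{1}=\alpha_{-1}=\alpha$ gives $\P\big(\hy(\rvx)=y(\rvx)\mid \rvx\in S_i\big)=1-\alpha$ for each $i$; combining $\P(\ry=y)=\tfrac12$ with the balanced-coverage assumption $\P(\hy=\abst\mid \ry=y)=\P(\hy=\abst)$ gives $\P\big(y(\rvx)=1\mid \rvx\in S\big)=\tfrac12$; and Bayes' rule then identifies the posterior label $q(\rvx):=\P\big(y(\rvx)=1\mid \hy(\rvx),\,\rvx\in S\big)\in\{\alpha,\,1-\alpha\}$. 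Feeding these into the irreducible term collapses it to $4\,\condE{q(\rvx)(1-q(\rvx))}{\rvx\in S}=4\alpha(1-\alpha)$.

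Second I would handle the uncovered set $T$, which is outside the scope of the cited result: on $T$ the weak labels carry no information about $y$, so the only available bound is $\err(\widetilde f,y\mid T)\le 1$. Writing $\err(\widetilde f,y\mid\cX)=\P(S)\,\err(\widetilde f,y\mid S)+\P(T)\,\err(\widetilde f,y\mid T)$ and substituting the two pieces yields exactly $\err(\widetilde f,y)\le \P(S)\cdot 4\alpha(1-\alpha)+\P(T)$.

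I expect the one real difficulty to be faithful translation rather than mathematical depth. \citet{fu2020fast} state everything in terms of a probabilistic graphical model over a latent label with several conditionally independent labeling functions and estimated accuracies, whereas here we have a single deterministic pseudolabeler that abstains on part of $\cX$. The work is in verifying that under this reduction (i) the label-model aggregation step is trivial, (ii) the parameter-estimation term is genuinely vacuous once the symmetry assumptions pin the accuracies down, so that nothing about $\cF$ enters the surviving bound, and (iii) the covered/uncovered decomposition above is the right way to lift their $S$-restricted guarantee to a statement over all of $\cX$. Once the posterior $q$ is identified, turning the accuracy-dependent irreducible term into $4\alpha(1-\alpha)$ is routine algebra.
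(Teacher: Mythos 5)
Your proposal arrives at the same final inequality, and your covered-set computation is equivalent to the paper's: your quantity $4\,\condE{q(\rvx)(1-q(\rvx))}{\rvx\in S}$ is exactly the conditional expectation of the (doubled) total-variation term that the paper evaluates case-by-case, since given $\hy$ the posterior is $q\in\{\alpha,1-\alpha\}$ and the expected doubled TV is $q\cdot 2(1-q)+(1-q)\cdot 2q=4q(1-q)=4\alpha(1-\alpha)$. The route differs in two places. First, the paper does not work from a generic ``generalization + estimation + irreducible'' decomposition; it takes the intermediate form of \citet[Theorem 3]{fu2020fast}, namely $\err(f)\le \err_{\vmu}(f)+\E\big[\sum_y|\P(\ry=y\mid\rvx)-\P_{\vmu}(\ry=y\mid\lambda(\rvx))|\big]$, plugs in the optimal graphical-model parameters, drops the nonnegative $\err_{\vmu}(f)$ (which only makes the cited bound look tighter), and computes the TV term over the six regions $S_i^{good},S_i^{bad},T_i$. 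Your sketch asserts the structure of the cited bound rather than deriving it, and omits the $\err_{\vmu}(f)$ term entirely; this happens to be harmless here because discarding it only favors the bound being criticized, but it is the step you flagged as ``faithful translation'' and it is where the actual work lies. Second, and more substantively, your claim that the uncovered set $T$ is ``outside the scope of the cited result'' is not how the framework treats it: when the single labeling function abstains, the label model still outputs a posterior, namely the prior $\tfrac12$, so the TV term on $T$ is $|1-\tfrac12|+|0-\tfrac12|=1$ and the $\P(T)$ contribution comes from \emph{inside} Fu et al.'s bound, not from an externally pasted trivial bound. Your workaround gives the same number, but the proposition's point is precisely that the existing bound itself charges every uncovered point as an error, so deriving $\P(T)$ from the bound (as the paper does) rather than appending it by hand is the more faithful, and rhetorically necessary, reading.
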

\looseness=-1 The first term accounts for the error of $\tilde{f}$ on the covered set $S$. 
The weak labels themselves have error $\alpha$ on $S$, but the bound for $\tilde{f}$ is $4\alpha(1-\alpha) > \alpha$ whenever $\alpha < \frac{3}{4}$, so \citet{fu2020fast}'s bound does not allow for pseudolabel correction in this example.
The second term accounts for the error of $\tilde{f}$ on the uncovered set $T$.
A random guess achieves error $\frac{1}{2}$ on $T$, but the bound charges every point in $T$ as an error, so it also does not account for coverage expansion or even the performance of random guessing on $T$. %

Expansion assumptions similar to ours have also been studied in the context of self-training \citep{wei2020theoretical} and domain adaptation \citep{cai2021theory}.
The following proposition shows that while the results of \citet{wei2020theoretical} can be adapted to weakly-supervised learning, our bounds capture the full weak supervision setup better, since \citet[Theorem 4.3]{wei2020theoretical} was not designed to deal with partial coverage $\P(\rvx\in S) < 1$, so applying it to weakly-supervised learning still requires fairly large coverage.
\begin{proposition}[informal]
\label{prop:wei-bound-no-apply}
Suppose the coverage $\P(\rvx \in S)$ in the example above is less than $2/3$. 
Then the bound from \citet[Theorem 4.3]{wei2020theoretical} does not apply since directly adapting it to the weak supervision setting requires $\P(\rvx \in S) \ge 2/3$.
\end{proposition}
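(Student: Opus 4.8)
The plan is to show that, under the most direct translation of the self-training setup of \citet{wei2020theoretical} to weakly-supervised learning, the pseudolabeler's error over all of $\cX$ is necessarily at least $\P(\rvx\in T)=1-\P(\rvx\in S)$, and then to use the fact that \citet[Theorem 4.3]{wei2020theoretical} only yields a (non-vacuous) guarantee when the pseudolabeler's error is below the threshold $1/3$ fixed by its expansion hypothesis. Combining these two facts immediately forces $\P(\rvx\in S)\ge 2/3$, so that under the hypothesis $\P(\rvx\in S)<2/3$ the theorem cannot be applied.

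First I would recall the shape of \citet[Theorem 4.3]{wei2020theoretical}: it takes a pseudolabeler $G_{pl}$ defined on the \emph{entire} support of $\cD$, assumes a (robust) expansion property imposed on sets of probability at most some $\bar q\le 1/3$, and concludes a bound on the error of the self-trained classifier that is a multiple of $\mathrm{Err}(G_{pl})$; this conclusion is meaningful, and the expansion property is invoked on the relevant ``bad'' set, only when $\mathrm{Err}(G_{pl})\le\bar q\le 1/3$. In particular, once $\mathrm{Err}(G_{pl})>1/3$ the theorem yields nothing.

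The crux is the translation step. In the weak supervision setting $\hy$ outputs the abstention symbol $\abst$ on all of $T$, so it is not a classifier into $\cY$ on all of $\cX$ and Theorem 4.3 does not apply to $\hy$ verbatim (this is exactly the ``partial coverage'' obstruction). The direct fix is to treat $\hy$ itself as $G_{pl}$, with $\abst$ simply never equal to a ground-truth label; this charges every point of $T$ as a pseudolabel error, giving
\[
\mathrm{Err}(\hy)\;=\;\P(S)\,\alpha+\P(T)\;\ge\;\P(T)\;=\;1-\P(\rvx\in S).
\]
Any attempt to repair this by extending $\hy$ to a total function $\hy_{\mathrm{ext}}\colon\cX\to\cY$ cannot do meaningfully better: $T$ is by definition the region on which the teacher supplies no signal, a valid procedure cannot consult $y$ on $T$, and the assumptions of Proposition~\ref{prop:existing-bound} leave $y|_T$ essentially unconstrained, so there are instances of that example on which $\err(\hy_{\mathrm{ext}},y\mid T)=1$ and hence $\mathrm{Err}(\hy_{\mathrm{ext}})\ge\P(T)=1-\P(\rvx\in S)$ as well.

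Finally, combining $\mathrm{Err}(G_{pl})\le 1/3$ with $\mathrm{Err}(G_{pl})\ge 1-\P(\rvx\in S)$ forces $1-\P(\rvx\in S)\le 1/3$, i.e.\ $\P(\rvx\in S)\ge 2/3$; contrapositively, if $\P(\rvx\in S)<2/3$ the hypotheses of Theorem 4.3 cannot be met, which is the claim. The main difficulty is not computational---the displayed arithmetic is trivial---but in pinning down what ``directly adapting Theorem 4.3 to the weak supervision setting'' should mean: I would argue that the uncovered mass $\P(T)$ must be charged against the pseudolabeler's error because the teacher genuinely provides no signal on $T$ and correctness there cannot be presumed, and I would phrase the final statement in terms of the threshold $\bar q$ in their expansion hypothesis (with $\bar q\le 1/3$ for Theorem 4.3) so that the constant $2/3$ is traceable back to it.
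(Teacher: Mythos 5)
Your proposal is correct and takes essentially the same route as the paper: both reduce the weak-supervision setting to Wei et al.'s by charging every abstention point in $T$ as a teacher mistake, so the precondition of their Theorem 4.3 that the pseudolabeler's error be below $1/3$ (in the paper, $\bar a = \max_i \P(M_i(\hy)\mid\cX_i) < 1/3$, with $T_i \subset M_i(\hy)$) forces $\P(T) < 1/3$ and hence $\P(\rvx\in S)\ge 2/3$. The only cosmetic differences are that the paper argues through the class-conditional error rather than the overall error (equivalent here by the example's symmetry), and your extra remark ruling out ``repaired'' total extensions of $\hy$ is a reasonable addition the paper does not need.
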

Empirically, coverage expansion and pseudolabel correction can both occur in the low-coverage regime \citep{li2023characterizing}.
Finally, as mentioned in Section \ref{sec:intro}, \citet{wei2020theoretical} assumes the classifier is adversarially robust.
In contrast, we provide bounds that directly account for coverage expansion, place no restrictions on the amount of weak label coverage, and allow for the presence of many adversarial examples.
As the example in this section suggests is necessary, the model hypothesis class and the training details (in particular, the \textit{robustness} of the model) play a central role in our bounds.
The following definitions attempt to capture these properties.

\subsection{Definitions}
\begin{definition}[Neighborhood]
Let $\cN$ be a \emph{neighborhood function} that maps each point $\vx$ to a set of points $\cN(\vx) \subset \cX$ that we call the neighborhood of $\vx$.
We will assume that $\cN$ satisfies $\vx \in \cN(\vx') \iff \vx' \in \cN(\vx)$, i.e., that the neighborhoods are symmetric.
We can extend $\cN$ to a function of sets as $\cN(A) = \bigcup_{\vx \in A}\cN(\vx)$.
Examples to keep in mind are $\cN(\vx) = \{\vx' : ||\phi(\vx) - \phi(\vx')|| \le r\}$ for some representation $\phi: \cX \to \reals^d$, or, in the case of text inputs $\vx$, the set of fluent paraphrases of $\vx$. However, our results work with any definition of $\cN$.
\end{definition}

\begin{figure*}
\centering
\includegraphics[width=\textwidth]{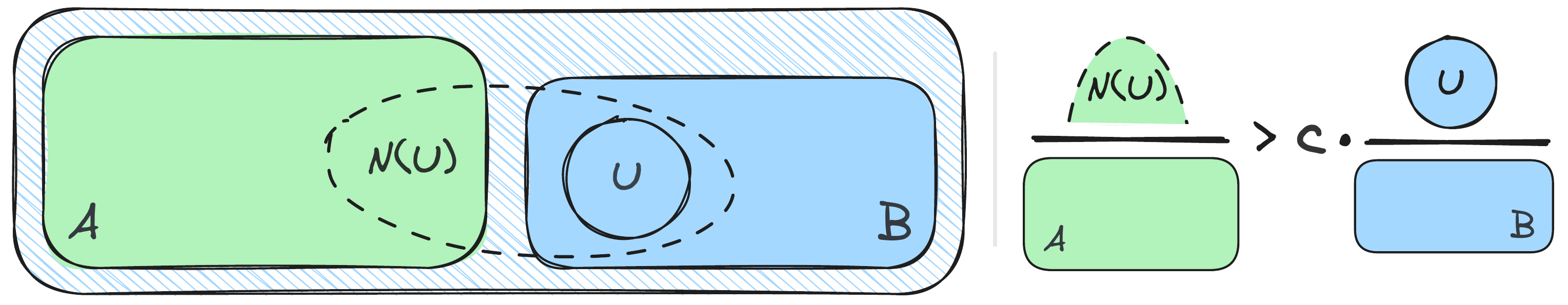}
\caption{Relative expansion (Definition \ref{def:relative-expansion}) on the sets $(A,B)$. Expansion requires that certain subsets $U\subset B$ have neighborhoods $\cN(U)$ such that $\P(\cN(U)|A) \ge c\P(U|B)$. These probabilities are represented graphically on the right-hand-side as the fractions $|\cN(U)\cap A| / |A|$ and $|U| / |B|$.}
\label{fig:expansion-overview}
\vspace{-3mm}
\end{figure*}

\begin{definition}[$\eta$-robust]
\label{def:eta-robust}
For an arbitrary classifier $f$ and point $\vx$, define $r(f,\vx) = \P(f(\rvx') \ne f(\vx) | \rvx' \in \cN(\vx))$ as the probability $f$ gives different labels to $\vx$ and a random neighbor $\rvx'$ of $\vx$.
    A classifier $f: \cX \to \cY$ is said to be $\eta$-robust at a point $\vx$ if $r(f,\vx) \le \eta$.
    For an arbitrary classifier $f$, let $R_{\eta}(f) = \{\vx : r(f,\vx) \le \eta\}$ be the set of $\eta$-robust points for $f$.
\end{definition}
If $\eta=0$, $f$ is $\eta$-robust at $\vx$ if and only if $f$ is adversarially robust over $\cN(\vx)$, so this definition generalizes adversarial robustness.
By Markov's inequality, any classifier $f$ with:
\begin{equation}
\label{eqn:generally-robust}
    \E_{\rvx\sim\cD, \rvx'\sim\cD|\cN(\rvx)}[f(\rvx)\ne f(\rvx')] \le \gamma
\end{equation}
is $\eta$-robust on a set of probability at least $1-\gamma/\eta$ (see Lemma \ref{lemma:avg-robust-implies-eta-robust}).
The requirement \eqref{eqn:generally-robust} is significantly more natural than adversarial robustness: a classifier satisfies \eqref{eqn:generally-robust} whenever it gives most points the same labels as most of their neighbors.
We refer to \eqref{eqn:generally-robust} as ``average-case robustness''.
\begin{definition}[Expansion]
\label{def:relative-expansion}
Fix sets $A,B \subset \cX$. 
We say the distribution $\P_\rvx$ satisfies $(c,q)$-expansion on $(A,B)$ if for all sets $U \subset B$ with $\P(U|B) > q$,
$\P(\cN(U) | A) > c\P(U|B)$.
\end{definition}

Figure \ref{fig:expansion-overview} shows examples of Definition \ref{def:relative-expansion} graphically.
Intuitively, a pair of sets $A$ and $B$ satisfy this definition if large subsets of $B$ correspond/\textit{expand} (via the neighborhood $\cN$) to large subsets of $A$.
Definition \ref{def:relative-expansion} requires \textit{all sets} $U$ with large enough probability in $B$ to expand to $A$.
However, this is unnecessarily strong.
Our theorems will only need certain \textit{structured} sets, corresponding to elements of the student hypothesis class, to expand.
This is captured by Definition \ref{def:relative-expansion-family} and is the key to our results in Section \ref{sec:checking-expansion} on checking the expansion property.

\begin{definition}[Expansion of a set collection]
\label{def:relative-expansion-family}
Fix sets $A, B \subset \cX$ and suppose $\cM$ is a collection of subsets of $B$. 
Then we say $\cM$ satisfies $(c,q)$-expansion on $(A,B)$ if all sets $U\in \cM$ with $\P(U|B)>q$ satisfy $\P(\cN(U) | A) > c\P(U|B)$.
\end{definition}

\vspace{-2mm}
\section{Error Bounds for Weakly-Supervised Classifiers}
\label{sec:bounds}
\vspace{-2mm}
In this section, we upper bound the \textit{gold} error of $f$ on the covered and uncovered sets---$\err(f,y|S)$ and $\err(f,y|T)$---with expressions involving the \textit{weak} error $\err(f,\hy|S)$ of $f$ on the covered set, expansion parameters, and robustness parameters.
These bounds give a theoretical justification for why empirical risk minimization using the weak labels leads to weak-to-strong generalization.
We condition on subsets $S_i \subset S$ (for pseudolabel correction) or pairs of subsets $S_i \subset S$, $T_i\subset T$ (for coverage expansion).
This subset-wise conditioning allows for different parts of the distribution to expand in different amounts, yielding tighter bounds, an approach also followed by \citet{cai2021theory}.

\subsection{Adversarially Robust Models}
\vspace{-2mm}
We begin by stating our theorems for the case when the classifier of interest is $\eta$-robust with $\eta=0$ on most points $\vx$.
That is, we assume that for most points $\vx$, the classifier is adversarially robust over $\cN(\vx)$.
This follows the assumptions in related work from other domains \citep{wei2020theoretical, cai2021theory}.
We describe our results for the significantly more general average-case-robust classifiers in Section \ref{subsec:general-robust}.

\paragraph{Expanding Families.} We first define the families $\cM$ and $\cM'$ of sets that must expand according to Definition \ref{def:relative-expansion-family}. 
Let $\cF$ be the hypothesis class of the strong model and for each $f \in \cF$, define $R(f) = R_0(f) = \{\vx : r(f,\vx) = 0\}$ to be the set of adversarially robust points for $f$.
For $B\subset \cX$ and $f\in \cF$, define $U(B, f) = \{\vx \in B \cap R(f) : f(\vx) \ne y(\vx)\}$ as the set of robust points in $B$ where $f$ makes a mistake on the true label $y$.
Now define $\cM(B,\cF)$ to be the class of these robust mistakes sets:
$\cM(B,\cF) = \{U(B,f) : f \in \cF\}$.
Similarly, define
$\cM'(B,\cF) = \{(B \setminus U(B,f)) \cap R(f) : f \in \cF\}$
as the family of robust \textit{non}-mistakes on $B$.

\paragraph{Pseudolabel Correction.}
Here we relate the gold error of the student model $f$ on a covered subset, $\err(f,y|S_i)$, to the weak error of $f$ on that set, $\err(f,\hy|S_i)$.
The goal is to allow for the \textit{correction} of some of the incorrect weak labels ($S_i^{bad}$): we want our bounds for $\err(f,y|S_i)$ to be less than $\alpha_i$, the error rate of the weak labels. 
Expansion between the points with correct pseudolabels, $S_i^{good}$, and points with incorrect pseudolabels, $S_i^{bad}$, implies that there are many bad points with good neighbors.
If the classifier is suitably robust on the neighborhoods, pseudolabel correction can occur.
The bound in this section makes this intuition quantitative.
\begin{theorem}[Pseudolabel correction]
\label{thm:plc-advrobust}
Suppose $\cM'(S_i^{good}, \cF)$ satisfies $(c,q)$-expansion on the sets $(S_i^{bad}, S_i^{good})$ for $q<\frac{3}{4}(1-2\alpha)$.
Consider an arbitrary classifier $f\in \cF$ such that
$\P(f(\rvx) \ne \hy(\rvx) \text{ or } f \text{ not robust at }  \rvx | S_i) \le \frac{1-\alpha+3c\alpha}{4}$.
Then the \textit{true error} of $f$ on $S_i$ satisfies:
\[
\err(f,y|S_i) \le \frac{2\alpha_i}{1-2\alpha_i}\P(\widebar{R(f)} | S_i) + \err(f,\hy|S_i) + \alpha_i\left(1- \frac{3}{2}c\right).
\]
\end{theorem}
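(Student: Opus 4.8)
Fix $i$, let $\ell$ be the (constant) value of $y$ on $\cX_i\supseteq S_i$, and partition $S_i^{bad}$ into $B=\{\vx\in S_i^{bad}:f(\vx)=\ell\}$ (bad pseudolabels that $f$ corrects), $A=\{\vx\in S_i^{bad}:f(\vx)=\hy(\vx)\}$ (uncorrected), and $C=S_i^{bad}\setminus(A\cup B)$ ($f$ picks a third label); these are disjoint because $\hy\ne\ell$ on $S_i^{bad}$. Since $\hy$ and $y$ agree on $S_i^{good}$, elementary bookkeeping gives $\err(f,y|S_i)-\err(f,\hy|S_i)=\P(A|S_i)-\P(B|S_i)$, and using $\P(A|S_i)+\P(B|S_i)+\P(C|S_i)=\alpha_i$ this rearranges to
\[ \err(f,y|S_i)=\err(f,\hy|S_i)+\alpha_i-2\,\P(B|S_i)-\P(C|S_i). \]
So the whole theorem reduces to the lower bound $\P(B|S_i)\ge \tfrac34 c\,\alpha_i-\tfrac{\alpha_i}{1-2\alpha_i}\P(\widebar{R(f)}|S_i)-\tfrac12\P(C|S_i)$; I in fact expect the argument below to give the stronger (but messier) $\P(B|S_i)\gtrsim \tfrac34 c\,\alpha_i$, with the extra nonnegative terms in the statement being slack.

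Second, I would lower-bound $\P(B|S_i)$ with a single use of expansion. Put $G=\{\vx\in S_i^{good}\cap R(f):f(\vx)=\ell\}$, which is exactly the element $(S_i^{good}\setminus U(S_i^{good},f))\cap R(f)$ of $\cM'(S_i^{good},\cF)$. The crucial observation is that $0$-robustness forces $f$ to be constant on $\cN(\vx)$ for each $\vx\in G$, with value $f(\vx)=\ell$; hence $f\equiv\ell$ on $\cN(G)$, so $\cN(G)\cap S_i^{bad}\subseteq B$, and therefore $\P(B|S_i)=\alpha_i\,\P(B|S_i^{bad})\ge \alpha_i\,\P(\cN(G)|S_i^{bad})$. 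Provided $\P(G|S_i^{good})>q$, applying the expansion hypothesis on $(S_i^{bad},S_i^{good})$ to $U=G$ gives $\P(\cN(G)|S_i^{bad})>c\,\P(G|S_i^{good})$, so $\P(B|S_i)>\alpha_i c\,\P(G|S_i^{good})$.

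Third, I would close the loop. The complement of $G$ inside $S_i^{good}$ is precisely $(\{f\ne\hy\}\cup\widebar{R(f)})\cap S_i^{good}$, so writing $\beta:=\P(\{f\ne\hy\}\cup\widebar{R(f)}|S_i)$ (the quantity the hypothesis caps at $\frac{1-\alpha+3c\alpha}{4}$) and $\beta_b$ for the probability, given $S_i$, of that same event restricted to $S_i^{bad}$, one has $\P(G|S_i^{good})=1-\frac{\beta-\beta_b}{1-\alpha_i}$. Using only $\beta_b\ge 0$ yields $\P(G|S_i^{good})\ge 1-\frac{\beta}{1-\alpha_i}$, which together with $q<\tfrac34(1-2\alpha)$ certifies $\P(G|S_i^{good})>q$, so expansion is applicable. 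For the final bound I would instead use the sharper estimate $\beta_b\ge \P(\{f\ne\hy\}\cap S_i^{bad}|S_i)=\P(B|S_i)+\P(C|S_i)$ — the point being that weak-label disagreements on bad points are themselves counted by $\beta$ — so that $\P(G|S_i^{good})\ge 1-\frac{\beta-\P(B|S_i)-\P(C|S_i)}{1-\alpha_i}$. Substituting into $\P(B|S_i)>\alpha_i c\,\P(G|S_i^{good})$ produces an inequality with $\P(B|S_i)$ on both sides; solving it — this is where $\beta\le\frac{1-\alpha+3c\alpha}{4}$ enters, via $1-\alpha-\beta\ge \tfrac34(1-\alpha-\alpha c)$, along with $1-\alpha-\alpha c>0$ — gives $\P(B|S_i)>\tfrac34 c\,\alpha_i$, and plugging this back into the identity from Step 1 completes the proof.

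I expect the main obstacle to be purely the constant-management in this last step: verifying that $q<\tfrac34(1-2\alpha)$ genuinely forces $\P(G|S_i^{good})>q$ given the hypothesis on $f$, correctly solving the self-referential inequality for $\P(B|S_i)$ while carrying the non-robust mass $\P(\widebar{R(f)}|S_i)$ and the multiclass term $\P(C|S_i)$ through without error, and coping with the large-$c$ regime in which $1-\alpha-\alpha c$ need not be positive and the clean division breaks down — which is presumably why the stated bound retains the extra $\P(\widebar{R(f)}|S_i)$ and $\P(C|S_i)$ headroom rather than the tighter $\alpha_i(1-2c)$-type expression the moderate-$c$ argument yields. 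A secondary subtlety is whether the $\alpha$ in the hypotheses denotes $\alpha_i$ or $\max_j\alpha_j$; the argument goes through with $\alpha=\alpha_i$, and otherwise the discrepancy should be absorbed into the slack terms.
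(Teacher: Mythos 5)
Your proposal is correct, and its engine is the same as the paper's: the paper's proof (Theorem \ref{thm:plc-avgrobust-unsimplified} specialized to $\eta=0$, then Corollary \ref{corr:simplified} with $\Delta=3/4$) also applies expansion to exactly your set $G$ (there called $V_i=R(f)\cap U_i\cap S_i^{good}$, an element of $\cM'(S_i^{good},\cF)$), verifies $\P(V_i\,|\,S_i^{good})>q$ from the hypothesis on $f$ together with $q<\tfrac34(1-2\alpha)$ and $c\le 1$, and uses robustness to push correctness from $G$ onto its neighbors in $S_i^{bad}$ — your observation that $\cN(G)\cap S_i^{bad}\subseteq B$ (up to probability-zero neighbors; $0$-robustness only forces agreement on the positive-mass part of $\cN(\vx)$, which is all that matters) is precisely the paper's ``good edge'' step yielding $\P(U_i\,|\,S_i^{bad})\ge c\,\P(V_i\,|\,S_i^{good})$, and your $B$ is the paper's $U_i\cap S_i^{bad}$. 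Where you genuinely diverge is the bookkeeping that turns this into the error bound: the paper routes through a ``local-to-global'' lemma and a decomposition of the disagreement set $D_i$, solves for $\P(M_i|S_i)$ to get an unsimplified bound with denominator $1-2c'\alpha_i$, $c'=c/(1-\alpha_i+c\alpha_i)$, and only then simplifies; you instead use the exact identity $\err(f,y|S_i)=\err(f,\hy|S_i)+\alpha_i-2\P(B|S_i)-\P(C|S_i)$ and a self-referential inequality in $\P(B|S_i)$ (via $\beta_b\ge\P(B|S_i)+\P(C|S_i)$), which solves cleanly to $\P(B|S_i)>\tfrac34 c\,\alpha_i$ because $1-\alpha-\beta\ge\tfrac34(1-\alpha-c\alpha)$ under the stated cap. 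This buys you something the paper's route does not give for this statement: the final bound holds with no $\P(\widebar{R(f)}\,|\,S_i)$ penalty at all, since the hypothesis already caps the union event including the non-robust mass; the paper keeps that term because its Theorem \ref{thm:plc-avgrobust-unsimplified} is proved in the more general $\eta$-robust/robust-expansion setting, where it is needed. Two caveats, both of which you flagged and both of which the paper shares: the reduction from Theorem \ref{thm:plc-avgrobust-unsimplified} to the stated theorem also silently uses $c\le 1$ (your divisions need $1-\alpha-c\alpha>0$ and your applicability check needs $c\le 2(1-\alpha)$, both implied by $c\le 1$ and $\alpha<\tfrac12$), and the unsubscripted $\alpha$ in the hypotheses should be read as $\alpha_i$.
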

\vspace{-2mm}
The expansion condition intuitively states that ``good'' sets (elements of $M'(S_i^{good}, \cF)$) must have suitably many neighbors with the wrong pseudolabel (elements of $S_i^{bad}$).
The trivial error bound obtained via the triangle inequality is $\err(f,y|S_i) \le \err(f,\hy|S_i) + \alpha_i$.
The bound in Theorem \ref{thm:plc-advrobust} has almost the same form, but the multiplicative term on $\alpha_i$ allows it to be much tighter than the trivial bound.
Theorem \ref{thm:plc-advrobust} allows for pseudolabel correction because the right-hand-side can be much less than $\alpha_i$ (the error of the weak teacher) when $c$ is large and $\err(f,\hy|S_i)$ and $\P(\widebar{R(f)}|S_i)$ are small.
While \citet[Theorem 4.3]{wei2020theoretical} also gives pseudolabel correction guarantees for adversarially robust classifiers, Theorem \ref{thm:plc-advrobust} is a different bound with several desirable properties that \citet[Theorem 4.3]{wei2020theoretical} lacks---we compare the two in detail in Appendix \ref{apdx:ours-vs-wei} and also show how to generalize \citet{wei2020theoretical}'s results to average-case-robustness.

\paragraph{Coverage Expansion.}
\label{sec:cov-expansion-bounds}
In this section, we relate the error of $f$ on an uncovered subset, $\err(f,y|T_i)$, to the weak error of $f$ on the corresponding covered subset, $\err(f,\hy|S_i)$.
The goal is to give a nontrivial error bound on these points even though we see none of them during training.
Expansion from $T_i$ to $S_i^{good}$ implies that subsets of $T_i$ have enough correctly-pseudolabeled neighbors.
If the student model is robust on $\cN$, this is already enough to prove an error bound for $T_i$.
However, we \textit{also} assume that subsets of $T_i$ have enough \textit{incorrectly}-pseudolabeled neighbors.
Intuitively, this means that subsets of $T_i$ have the ``correct'' number of neighbors in $S_i^{good}$ \textit{and} the ``correct'' number of neighbors in $S_i^{bad}$.
This implies a regular structure in the $S_i$--$T_i$ neighborhood connections that allows us to prove a much tighter error bound.
Our empirical results suggest that this structure is present in real-world examples. 
We prove a weaker bound that only assumes expansion from $T_i$ to $S_i^{good}$ in Appendix \ref{apdx:proofs}.

\begin{theorem}[Error bound for uncovered points]
\label{thm:covex-pseudorandom-advrobust}
Suppose $\cM(T_i, \cF)$ satisfies $(c,q)$-expansion on $(S_i^{good}, T_i)$, and $\cM'(T_i, \cF)$ satisfies $(c,q)$-expansion on $(S_i^{bad}, T_i)$.
Consider an arbitrary classifier $f\in \cF$ that fits the weak labels well on $S_i$ and is fairly robust on $T_i$:
$
\err(f,\hy|S_i) + \P(\widebar{R(f)} | T_i) < c(1 - q-\alpha_i)
$
Then the true error of $f$ on $T_i$ satisfies:
\[
\err(f,y | T_i) \le \left(1+\frac{\alpha_i}{1-2\alpha_i}\right)\P(\widebar{R(f)}|T_i) + \max\left(q, \frac{\err(g,\hy| S_i) - c\alpha_i}{c(1-2\alpha_i)}\right).
\]
\end{theorem}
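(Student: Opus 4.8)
The plan is to argue by contradiction: suppose $f$ makes many mistakes on $T_i$ and use expansion to manufacture more ``wrong weak labels'' or ``robustness violations'' on $S_i$ than the hypothesis permits. Concretely, let $U = U(T_i, f) = \{\vx \in T_i \cap R(f) : f(\vx) \ne y(\vx)\}$ be the robust-mistakes set on the uncovered part, which lies in $\cM(T_i,\cF)$, and let $V = (T_i \setminus U) \cap R(f)$ be the robust-non-mistakes set, which lies in $\cM'(T_i,\cF)$. The quantity we want to bound is $\err(f,y|T_i) \le \P(\widebar{R(f)}|T_i) + \P(U|T_i)$, so it suffices to show $\P(U|T_i)$ is at most the second term in the bound (up to the $\frac{\alpha_i}{1-2\alpha_i}\P(\widebar{R(f)}|T_i)$ slack). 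Assume for contradiction that $\P(U|T_i) > q$; then also consider whether $\P(V|T_i)>q$ and handle the complementary case.

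First I would exploit the two expansion hypotheses. Since $\cM(T_i,\cF)$ expands from $T_i$ into $S_i^{good}$, we get $\P(\cN(U)|S_i^{good}) > c\,\P(U|T_i)$; and since $\cM'(T_i,\cF)$ expands from $T_i$ into $S_i^{bad}$, we get $\P(\cN(V)|S_i^{bad}) > c\,\P(V|T_i)$. The next step is the robustness transfer: if $\vx' \in S_i^{good}$ is a neighbor of some $\vx \in U \subset R(f)$, then because $f$ is robust at $\vx$ (so $f$ is constant on $\cN(\vx)$, in the $\eta = 0$ case) and $\vx \in \cN(\vx')$, we have $f(\vx') = f(\vx) \ne y(\vx) = y(\vx')$ (equality of labels since $S_i^{good}, T_i \subset \cX_i$ have constant ground truth), while $\hy(\vx') = y(\vx')$ on $S_i^{good}$; hence $f(\vx') \ne \hy(\vx')$. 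So every robust neighbor in $S_i^{good}$ of a point in $U$ is a weak-label mistake for $f$. Symmetrically, a robust neighbor in $S_i^{bad}$ of a point in $V$ gets $f(\vx') = f(\vx) = y(\vx) \ne \hy(\vx)$ (using $V$ being non-mistakes and $S_i^{bad}$ being wrongly pseudolabeled), so it is also a weak-label mistake. Modulo the non-robust points of $S_i$ (whose mass is controlled by $\P(\widebar{R(f)}|S_i)$, itself bounded via the hypothesis), this shows $\err(f,\hy|S_i) \gtrsim \P(S_i^{good}|S_i)\cdot c\,\P(U|T_i) + \P(S_i^{bad}|S_i)\cdot c\,\P(V|T_i)$, i.e. $\err(f,\hy|S_i) + \P(\widebar{R(f)}|\cdot) \ge c\big((1-\alpha_i)\P(U|T_i) + \alpha_i \P(V|T_i)\big)$ after translating $\P(S_i^{good}|S_i) = 1-\alpha_i$, $\P(S_i^{bad}|S_i)=\alpha_i$.

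Now I would close the loop. On $R(f)\cap T_i$ the sets $U$ and $V$ partition, so $\P(U|T_i) + \P(V|T_i) = \P(R(f)|T_i) = 1 - \P(\widebar{R(f)}|T_i)$; substituting $\P(V|T_i) = 1 - \P(\widebar{R(f)}|T_i) - \P(U|T_i)$ into the inequality above and solving for $\P(U|T_i)$ yields, after rearranging, $\P(U|T_i) \le \frac{\err(f,\hy|S_i) - c\alpha_i}{c(1-2\alpha_i)} + (\text{a }\P(\widebar{R(f)}|T_i)\text{ term with coefficient }\tfrac{\alpha_i}{1-2\alpha_i})$. Combining with the case $\P(U|T_i) \le q$ (which is where the $\max$ with $q$ comes from) and adding back $\P(\widebar{R(f)}|T_i)$ gives exactly the stated bound, with the $\P(\widebar{R(f)}|T_i)$ coefficients adding to $1 + \frac{\alpha_i}{1-2\alpha_i}$. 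The fitting hypothesis $\err(f,\hy|S_i) + \P(\widebar{R(f)}|T_i) < c(1-q-\alpha_i)$ is precisely what is needed to guarantee the contradiction is reached when $\P(U|T_i)>q$, i.e. to make the expansion inequalities usable (the $q$-threshold in the definition of expansion must be cleared).

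\textbf{Main obstacle.} The delicate part is the bookkeeping that converts a statement about \emph{neighbor sets in $S_i^{good}$ and $S_i^{bad}$} into a statement about $\err(f,\hy|S_i)$, which is conditioned on all of $S_i$. One must be careful that the weak-label-mistake sets produced from $\cN(U)$ and $\cN(V)$ are disjoint (they sit in $S_i^{good}$ vs $S_i^{bad}$, so they are), restrict everything to $R(f)$ (handling the $\widebar{R(f)}$ slippage, which is where $\P(\widebar{R(f)}|S_i)$ enters and must be absorbed using the hypothesis on $f$), and correctly renormalize between conditioning on $S_i^{good}$, $S_i^{bad}$, and $S_i$ using $\alpha_i$. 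I also need the ground-truth-constancy of $\cX_i$ at every step where I equate $y(\vx)$ with $y(\vx')$ for a neighbor pair, which silently uses that $S_i, T_i \subset \cX_i$. Getting the constants in the final algebra to land on exactly $\frac{\err(f,\hy|S_i)-c\alpha_i}{c(1-2\alpha_i)}$ and $1 + \frac{\alpha_i}{1-2\alpha_i}$ is the routine-but-error-prone endgame.
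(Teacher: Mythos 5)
Your proposal follows essentially the same route as the paper's proof: define the robust mistake set $U$ and robust non-mistake set $V$ in $T_i$, use the two expansion hypotheses to push them into disjoint disagreement regions inside $S_i^{good}$ and $S_i^{bad}$, combine with $\P(U|T_i)+\P(V|T_i)=\P(R(f)|T_i)=1-\P(\widebar{R(f)}|T_i)$ to solve for $\P(U|T_i)$, and case-split on the $q$-thresholds, with the fitting hypothesis ruling out the case $\P(U|T_i)>q$, $\P(V|T_i)\le q$ --- exactly the paper's three cases and the same algebra. One small correction: no control of $\P(\widebar{R(f)}|S_i)$ is needed (and the stated hypothesis only bounds $\P(\widebar{R(f)}|T_i)$ anyway); adversarial robustness at the points of $U$ and $V$ already forces $f$ to agree with them on \emph{all} of their neighbors in $S_i$, so the clean inequality $\err(f,\hy|S_i) \ge c(1-\alpha_i)\P(U|T_i)+c\alpha_i\P(V|T_i)$ holds with no slack term, which is exactly why the final constants land as stated.
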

\vspace{-2mm}
To qualify for the bound, $f$ must fit the weak labels well on $S_i$, so $\err(f,\hy|S_i)$ is small, and be adversarially robust at most points on $T_i$, so $\P(\widebar{R(f)} | T_i)$ is small.
We show in Appendix \ref{apdx:other-bounds} that the original co-training setup of \citet{blum1998combining} satisfies the assumptions of Theorem \ref{thm:covex-pseudorandom-advrobust} with $c=1$, $q=0$, and $\P(\widebar{R(f)} | T_i) = 0$.
Theorem \ref{thm:covex-pseudorandom-advrobust} exactly recovers the bounds of \citet{blum1998combining, lang2022training} in this case, so Theorem \ref{thm:covex-pseudorandom-advrobust} is a direct generalization of \citet{blum1998combining} but without the restrictive assumptions regarding multi-view data and conditional independence.
In Appendix \ref{apdx:proofs}, we prove a generalization of Theorem \ref{thm:covex-pseudorandom-advrobust} that allows $T_i$ to expand to $S_i^{good}$ and $S_i^{bad}$ at different rates (i.e., different expansion parameters).

\subsection{Relaxing Robustness Requirements}
\label{subsec:general-robust}

\looseness=-1 The previous bounds in this section assumed the student is adversarially robust at most points.
Here, we considerably generalize this requirement so our results apply to any classifier $f$ that satisfies \eqref{eqn:generally-robust}, i.e., any classifier that gives \textit{most} points the same label as \textit{most} of their neighbors.
This requires several additional definitions and goes beyond the assumptions made in other work with expansion-based error bounds, which assume \textit{adversarial} robustness at most \citep{wei2020theoretical, cai2021theory} or (effectively) all \citep{blum1998combining} points.

To allow for this relaxed assumption on the classifier, we assume a more robust version of expansion, aptly called \textit{robust expansion} \citep{kannan2006blocking, kwok2016improved, makarychev2023higher}.
To define robust expansion, we start by defining a graph over examples with edges induced by the neighborhood $\cN$ and weights given by the underlying probability measure.
\citet{haochen2021provable} study this graph in the context of contrastive pretraining.
\begin{definition}[Example graph]
\label{def:example-graph}
    Let $G = (\cX,E)$ be a graph with one node for each element of $\cX$ (we assumed $\cX$ is a possibly very large, but finite, set), and connect two nodes $(\vx, \vx')$ if $\vx \in \cN(\vx')$ or, equivalently, if $\vx' \in \cN(\vx)$, with an edge weight of $w(\vx, \vx') := \P(\vx)\P(\vx')\ind[\vx \in \cN(\vx')]$.
\end{definition}

\begin{figure*}
\centering
\includegraphics[width=0.95\textwidth]{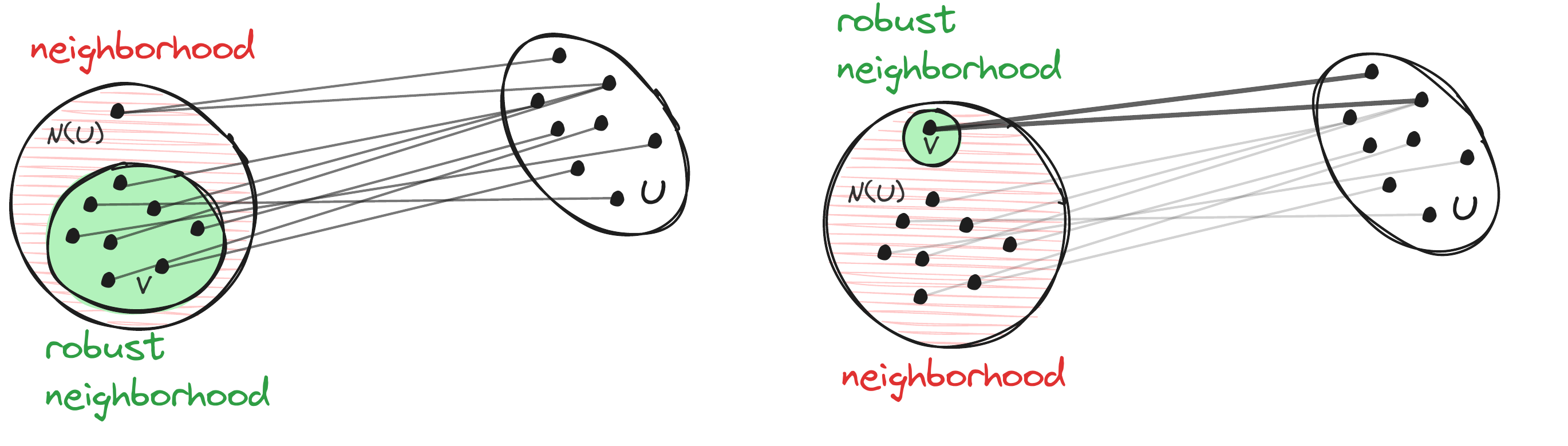}
\caption{Examples of good (left) and bad (right) robust expansion. In both cases, there is a core subset $V \subset \cN(U)$ that accounts for most of the edge weight incident on $U$ (at least a $1-\eta$ fraction, for some small $\eta$).
The robust expansion is good when every such subset has large probability.
}
\vspace{-2mm}
\label{fig:robust-expanders}
\end{figure*}

Definition \ref{def:relative-expansion} (regular, non-robust expansion) is very sensitive to removal of a few edges from the example graph.
A set $U \subset B$ may have $\P(\cN(U) | A)$ large, but only because a small fraction of the edges (by probability mass) are connected to many $\vx \in A$ with $\P(\vx | A)$ large.
If we ignored these small-probability edges, the neighborhood would be much smaller.
Figure \ref{fig:robust-expanders} (appendix) shows an example.
The \textit{robust neighborhood} tries to address this issue:

\begin{definition}[$\eta$-robust neighborhood size]
Let $A, U \subset \cX$. The size of the $\eta$-robust neighborhood of $U$ in $A$ is:
$
    P_{1-\eta}(U,A) := \min_{V\subset \cX}\{\P(V|A) : w(V,U) \ge (1-\eta)w(\cN(U),U)\}.
$
\end{definition}
$P_{1-\eta}(U,A)$ is the probability of the ``smallest'' subset of $A$ that still captures at least a $1-\eta$ fraction of the edge weight incident on $U$. 
When $\eta=0$, we have $P_1(U,A) = \P(\cN(U) | A)$, so this recovers the size of the non-robust neighborhood.
In the pathological example described above, we would have $\P(\cN(U) | A)$ large, but $P_{1-\eta}(U,A)$ small for some $\eta>0$. 
We are now ready to give a ``robustified'' definition of expansion, which is identical to Definition \ref{def:relative-expansion-family} except that it requires the \textit{robust} neighborhood, rather than the regular neighborhood, to be large.

\begin{definition}[Robust expansion]
\label{def:robust-expansion}
Fix sets $A, B \subset \cX$ and suppose $\cM$ is a collection of subsets of $B$. $\cM$ satisfies $(c,q,\eta)$-robust expansion on $(A,B)$ if for all $U \in \cM$ with $\P(U|B) >q$, $P_{1-\eta}(U,A) > c\P(U|B)$. This exactly recovers Definition \ref{def:relative-expansion} when $\eta=0$.
\end{definition}

The following (informal) theorem shows that Theorems \ref{thm:plc-advrobust} and \ref{thm:covex-pseudorandom-advrobust} hold for average-case-robust classifiers when we replace expansion with robust expansion and $R(f)$ with $R_{\eta}(f)$.
We state and prove formal versions of Theorems \ref{thm:plc-advrobust} and \ref{thm:covex-pseudorandom-advrobust} for average-case-robust classifiers in Appendix \ref{apdx:proofs}.
\begin{theorem}[Informal]
\label{thm:robust-generalization}
Theorems \ref{thm:plc-advrobust} and \ref{thm:covex-pseudorandom-advrobust} hold exactly with $(c,q,\eta)$-expansion instead of $(c,q)$-expansion and $R_{\eta}(f)$ instead of $R(f)$.
\end{theorem}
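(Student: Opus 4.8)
The strategy is to locate the single place where the Appendix~\ref{apdx:proofs} proofs of Theorems~\ref{thm:plc-advrobust} and~\ref{thm:covex-pseudorandom-advrobust} invoke adversarial robustness, and replace that step with an argument phrased in terms of the $\eta$-robust neighborhood size. In those proofs, robustness enters only through the implication ``$\vx\in R(f)$ and $\vx'\in\cN(\vx)\Rightarrow f(\vx')=f(\vx)$'', which is used to conclude that, for a member $U$ of one of the expanding families, $\cN(U)\cap A$ is contained in a set on which $f$ takes a predictable label, so that expansion of $U$ into $A$ forces a lower bound on the probability of that predictable-label set. Everything afterward — the triangle inequalities, the small-$U$ case $\P(U\mid B)\le q$, the accounting of $\P(\widebar{R(f)}\mid\cdot)$, and the final rearrangements — uses no property of $R(f)$ beyond this conclusion.

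The key lemma to establish is the robustified version of that step: if $U\subseteq R_\eta(f)$ and $f\equiv j$ on $U$ for some label $j$, then, writing $V:=\{\vx'\in\cX: f(\vx')=j\}$, one has $w(V,U)\ge(1-\eta)\,w(\cN(U),U)$. This is a short edge-count: $w(\cN(U),U)=\sum_{\vx\in U}\P(\vx)\P(\cN(\vx))$, and for each $\vx\in U$ the weight of edges from $\vx$ into $\cX\setminus V$ equals $\P(\vx)\P(\cN(\vx))\,\P(f(\rvx')\ne j\mid \rvx'\in\cN(\vx))=\P(\vx)\P(\cN(\vx))\,r(f,\vx)\le\eta\,\P(\vx)\P(\cN(\vx))$, so at most an $\eta$-fraction of the edge weight incident on $U$ exits $V$. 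By the definition of $P_{1-\eta}(U,A)$ and of $(c,q,\eta)$-robust expansion, whenever $U$ belongs to a collection that $(c,q,\eta)$-robustly expands on $(A,B)$ and $\P(U\mid B)>q$, we obtain $\P(V\mid A)\ge P_{1-\eta}(U,A)>c\,\P(U\mid B)$ — exactly the conclusion the adversarial proof extracted from $\P(\cN(U)\mid A)$, but now for the honest preimage $V$ rather than $\cN(U)$ itself.

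Plugging this in: in the pseudolabel-correction proof, apply the lemma to $U\in\cM'(S_i^{good},\cF)$ (on which $f\equiv i$) with $V=\{f=i\}$ to lower-bound $\err(f,\hy\mid S_i^{bad})$; in the coverage-expansion proof, apply it to $U\in\cM(T_i,\cF)$ with $V=\{f\ne i\}$ to lower-bound $\err(f,\hy\mid S_i^{good})$, and to $U'\in\cM'(T_i,\cF)$ with $V=\{f=i\}$ to lower-bound the mass of $S_i^{bad}$ on which $f$ agrees with $y$. With these substitutions and $R(f)$ everywhere replaced by $R_\eta(f)$ (including in the definitions of the expanding families), the remainder of both proofs is syntactically unchanged; setting $\eta=0$ recovers Theorems~\ref{thm:plc-advrobust} and~\ref{thm:covex-pseudorandom-advrobust} verbatim, since then $R_\eta(f)=R(f)$ and $P_{1-\eta}(U,A)=\P(\cN(U)\mid A)$.

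The main obstacle is that on the \emph{robust-mistakes} families $U(T_i,f)$ the function $f$ need not be constant, so no single label's preimage equals the robust neighborhood. The fix is to take $V=\{f\ne i\}$, the complement of the common true label of the class, and use the inclusion $\{\vx':f(\vx')=i\}\subseteq\{\vx':f(\vx')\ne f(\vx)\}$, valid for every $\vx\in U$ because $f(\vx)\ne i$ there, so that the stray edge weight out of $V$ is still controlled by $r(f,\vx)\le\eta$. The only remaining care is to thread the $\eta$-slack through consistently — ``robust'' now means $\eta$-robust throughout, so $R(f)$ in the hypotheses and conclusions of both theorems becomes $R_\eta(f)$ — which is routine; the formal statements and complete proofs are deferred to Appendix~\ref{apdx:proofs}.
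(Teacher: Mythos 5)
Your proposal is correct and takes essentially the same route as the paper's Appendix proof: the paper likewise isolates a single edge-counting lemma (its Lemma showing $w(\widetilde{\cN}(U),U)\ge(1-\eta)\,w(\cN(U),U)$ whenever $U\subseteq R_{\eta}(f)$, proved by exactly your Markov-style computation with $r(f,\vx)\le\eta$), feeds the resulting qualifying set into the definition of $P_{1-\eta}(U,A)$ and the $(c,q,\eta)$-robust expansion hypothesis, and then reruns the adversarial arguments verbatim with $R_{\eta}(f)$ in place of $R(f)$ and the families $\cM_{\eta},\cM'_{\eta}$ in place of $\cM,\cM'$. The only difference is cosmetic: you take the witness set to be a label preimage ($\{f=j\}$, or $\{f\ne i\}$ for the mistake family), whereas the paper uses the good-edge neighborhood $\widetilde{\cN}(U)$, which covers both cases uniformly; the downstream inclusions into the disagreement set and all subsequent algebra are identical.
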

By Markov's inequality, for any $\eta>0$ a classifier $f$ with $\E_{\rvx\sim \cD|A, \rvx'\sim \cD | \cN(\rvx)}[f(\rvx) \ne f(\rvx')] \le \gamma$ has $\P(\widebar{R_{\eta}(f)} | A) \le \frac{\gamma}{\eta}$. 
Theorem \ref{thm:robust-generalization} shows that by assuming the data distribution follows a slightly more ``regular'' structure (robust expansion), we can give guarantees for average-case-robust classifiers.
This generalization is important since it matches with empirical results: adversarial training and adversarial robustness are \textit{not} required for weak-to-strong generalization to occur \citep{zhang2021wrench, li2023characterizing, burns2023weak}, and most empirical work on weak supervision does not include adversarial training in the pipeline \citep{zhang2022survey}.

\section{Checking Expansion}
\label{sec:checking-expansion}
\vspace{-2mm}
We now outline a statistical theory for checking the expansion properties of the population distribution from finite data.
This is possible because, as described in Section~\ref{sec:bounds}, our results do not actually require \textit{all sets} to expand---rather, they only require expansion for a class of sets that is generated by the student hypothesis class.
This means we can check expansion on a finite dataset and control the generalization of our estimate using the complexity of the hypothesis class.
The purpose of checking expansion is not (currently) algorithmic---the goal of the procedures described in this section is to give empirical evidence that our assumptions hold in the real world and that our bounds correlate with actual occurrences of pseudolabel correction and coverage expansion. 
Exactly checking the expansion of \textit{all subsets} is coNP-complete \citep{blum1981complexity}; whether our notion of expansion with respect to a certain family of sets $\cM$ can be checked efficiently is an interesting direction for future research.
We show that expansion can at least be checked \textit{statistically} (i.e., from finite data), if not efficiently.

For a fixed choice of $q$, the (non-robust) expansion of a set family $\cM$ between sets $A$ and $B$ is:
$
c = \min_{U \in \cM :\ \P(U|B) > q}\frac{\P(\cN(U)|A)}{\P(U|B)}.
$
Suppose we have two samples $\cS_A = \{(\vx_i, y(\vx_i))\}_{i=1}^{n_A}$ with $\rvx \sim \P(\cdot | A)$, and $\cS_B = \{(\vx_i, y(\vx_i))\}_{i=1}^{n_B}$ with $\rvx \sim \P(\cdot | B)$.
For a fixed $U$, the denominator is straightforward to estimate using $\cS_B$ as:
$\P(U|B) \approx \frac{1}{n_B}\sum_{\vx_i \in \cS_B}\ind[\vx_i \in U].$
Estimating the numerator is less straightforward: due to finite sampling, $\cS_A \times \cS_B$ may contain no pairs $(\vx, \vx')$ with $\vx \in \cN(\vx')$.
That is, the empirical neighbor graph may be empty even when the population distribution expands (see \citet{wei2020theoretical} for a more thorough discussion).
This is a major difference between our assumptions and similar work that uses expansion-like assumptions to analyze the performance of label-propagation algorithms that use the empirical graph, such as \citet{pukdee2022label}.
To overcome this, we assume we have access to a neighborhood oracle $n: A \to B$ that for each $\vx \in A$ returns a point $n(\vx) \in B$ such that $n(\vx) \in \cN(\vx)$.
We assume nothing about the distribution of $n(\vx)$ values (i.e., we do not assume that they are drawn from $\P(\cdot | B)$, merely that $\P(n(\vx) | B) > 0$).
We describe how to construct $n$ in a practical scenario in Section \ref{sec:experiments}.

The neighborhood oracle makes estimating the expansion numerator more straightforward, since if $n(\vx) \in U$, then by construction, $\vx \in \cN(U)$.
Formally, 
$\P(\cN(U)|A) \ge \P(n(\rvx) \in U | A)$,
where the quality of $n(\vx)$ determines the tightness of this bound.
This inequality is valid for any $n: A\to B$ as long as $\vx \in \cN(n(\vx))$.
Now we can estimate:
$\P(n(\rvx) \in U | A) \approx \frac{1}{n_A}\sum_{\vx_i\in \cS_A}\ind[n(\vx_i) \in U].$
Putting it all together, we can form our empirical estimate of the expansion by solving
$\hat{c} = \min_{U \in \cM}\frac{\frac{1}{n_A}\sum_{\vx_i\in \cS_A}\ind[n(\vx_i) \in U]}{\frac{1}{n_B}\sum_{\vx_i \in \cS_B}\ind[\vx_i \in U]}$
subject to: $\frac{1}{n_B}\sum_{\vx_i \in \cS_B}\ind[\vx_i \in U] \ge q - \epsilon$,
where $\epsilon$ is chosen appropriately to account for empirical error in estimating the probability $\P(U | B)$.
The following theorem, proven in Appendix \ref{apdx:checking-expansion}, shows that the expansion on the population distribution can't be too much smaller than the expansion on the empirical distribution.

\begin{theorem}[Expansion generalization, informal]
\label{thm:expansion-generalization}
For arbitrary $U \in \cM$, define the population and empirical expansion estimates as: 
$
c(U) := \P(n(\rvx) \in U | \rvx \in A)/\P(\rvx \in U | \rvx \in B)$
and $\hat{c}(U) := \frac{1}{n_A}\sum_{i=1}^{n_A}\ind[n(\vx_i) \in U] / \frac{1}{n_B}\sum_{j=1}^{n_B}\ind[\vx_i \in U]$.
Then for any $\delta \in (0,1]$, with probability at least $1-\delta$,
$\sup_{U\in \cM} \hat{c}(U) - c(U) \le \widetilde{\cO}(\sqrt{\VC(\cM)/n_A}),
$
where $\widetilde{\cO}$ hides constants and log factors in $\VC(\cM)$, $n_A + n_B$, and $1/\delta$.
\end{theorem}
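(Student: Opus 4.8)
The plan is to read this as a two-sample uniform-convergence statement for a \emph{ratio} of empirical averages, with the deviations controlled by $\VC(\cM)$. Write $p_A(U) := \P(n(\rvx)\in U \mid \rvx\in A)$ and $p_B(U) := \P(\rvx \in U \mid \rvx \in B)$, and let $\hat p_A(U), \hat p_B(U)$ be their empirical counterparts on $\cS_A$ and $\cS_B$, so $c(U) = p_A(U)/p_B(U)$ and $\hat c(U) = \hat p_A(U)/\hat p_B(U)$. The argument splits into two parts: (i) a uniform deviation bound for the numerator and the denominator separately, driven by the VC dimension of $\cM$; and (ii) an elementary algebraic step that turns these into a bound on the ratio, using the constraint $\P(U\mid B) > q$ to keep the denominator bounded away from $0$.

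For part (i), the denominator $\hat p_B(U)$ is an empirical average over the iid sample $\cS_B$ of the indicators $\ind[\vx\in U]$, $U\in\cM$, a class of VC dimension $\VC(\cM)$ by definition. For the numerator, note that $\vx \mapsto \ind[n(\vx)\in U]$ is the composition of the fixed oracle $n$ with the indicator of $U$; since pulling a set system back through a fixed function cannot increase its VC dimension, the class $\{\,\vx\mapsto \ind[n(\vx)\in U] : U\in\cM\,\}$ has VC dimension at most $\VC(\cM)$, and because $\rvx$ is iid and $n$ is deterministic, $\ind[n(\vx_i)\in U]$ are iid $\mathrm{Bernoulli}(p_A(U))$. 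Applying the standard VC uniform-convergence bound to each of the two classes (symmetrization plus McDiarmid, or Vapnik's growth-function inequality) and taking a union bound gives: with probability at least $1-\delta$, simultaneously for all $U\in\cM$,
\[
|\hat p_A(U) - p_A(U)| \le \epsilon_A := \widetilde{\cO}\!\left(\sqrt{\VC(\cM)/n_A}\right), \qquad |\hat p_B(U) - p_B(U)| \le \epsilon_B := \widetilde{\cO}\!\left(\sqrt{\VC(\cM)/n_B}\right),
\]
where $\widetilde{\cO}$ hides the usual logarithmic factors in $\VC(\cM)$, $n_A+n_B$, and $1/\delta$.

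For part (ii), condition on this event and fix any $U\in\cM$ with $\P(U\mid B) > q$. (In the estimator of Section~\ref{sec:checking-expansion} one enforces the empirical constraint $\hat p_B(U)\ge q-\epsilon$ instead; once $\epsilon \ge \epsilon_B$ this implies $p_B(U) \ge q-2\epsilon$, so translating between the two constraints costs only a harmless reparametrization of $q$.) Then $\hat p_A(U) \le p_A(U) + \epsilon_A$ and $\hat p_B(U) \ge p_B(U) - \epsilon_B > q - \epsilon_B > 0$ once $n_B$ is large enough, so
\[
\hat c(U) - c(U) \;\le\; \frac{p_A(U)+\epsilon_A}{p_B(U)-\epsilon_B} \;-\; \frac{p_A(U)}{p_B(U)} \;=\; \frac{\epsilon_A + \epsilon_B\, c(U)}{p_B(U) - \epsilon_B}.
\]
Using $p_B(U) > q$, $c(U) = p_A(U)/p_B(U) \le 1/q$, and $\epsilon_B \le q/2$ (again for $n_B$ large), the right-hand side is at most $\widetilde{\cO}\big((\epsilon_A + \epsilon_B)/q\big) = \widetilde{\cO}(\sqrt{\VC(\cM)/n_A})$, where the $1/q$ factor and the $\epsilon_B$ term (which is of the same or smaller order under the natural scaling $n_B \gtrsim n_A$) are absorbed into the hidden constants and logs. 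Taking the supremum over $U\in\cM$ with $\P(U\mid B)>q$ gives the claim.

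The step I expect to be the real obstacle is part (ii), not (i): the ratio is genuinely unstable when $\P(U\mid B)$ is small, so the argument must restrict to sets with $\P(U\mid B)>q$, and a fully rigorous version has to handle carefully the mismatch between the population constraint used to state the theorem and the empirical, $\epsilon$-slackened constraint actually imposed by the estimator. A cleaner route that sidesteps some of this bookkeeping is to replace the additive VC bound in part (i) with Vapnik's relative (multiplicative) deviation inequality, which controls $|\hat p(U)-p(U)|/\sqrt{p(U)}$ uniformly over $\cM$ and therefore degrades gracefully as $p_B(U)\to 0$; this yields the same rate up to logarithmic factors. The remaining ingredient—bounding the VC dimension of the oracle-composed class by $\VC(\cM)$—is standard and should not cause difficulty.
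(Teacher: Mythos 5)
Your proof is correct in outline, but it follows a different route from the paper's. The paper proves a tailored two-sample concentration lemma for the \emph{ratio} at a fixed $U$ (conditioning on a multiplicative Chernoff event for the denominator $\hat{q}(U)$, then applying Hoeffding to the numerator), and then threads that lemma through a from-scratch ghost-sample symmetrization argument, a union bound over the finite trace $\cM_{\vx_1,\ldots,\vx_m,\vx'_1,\ldots,\vx'_m}$, and Sauer--Shelah; the oracle is absorbed by redefining $\vx_i \gets n(\vx_i)$ for the $A$-sample, which is exactly your observation that composing with the fixed deterministic $n$ does not increase the VC dimension. You instead invoke standard uniform convergence twice---once for $\{\vx \mapsto \ind[n(\vx)\in U]\}$ over $\cS_A$ and once for $\{\vx\mapsto\ind[\vx\in U]\}$ over $\cS_B$---and then finish with a deterministic perturbation bound for the ratio, using the lower bound $\P(U|B) > q$ to keep the denominator away from zero. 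Your approach is more modular (two off-the-shelf VC bounds plus algebra, and it gives two-sided control of numerator and denominator), at the cost of cruder constants: the $1/q$ factors appear explicitly, and the $\sqrt{\VC(\cM)/n_B}$ term means your rate matches the stated $\widetilde{\cO}(\sqrt{\VC(\cM)/n_A})$ only under $n_B \gtrsim n_A$; the paper's formal bound has the analogous dependence hidden in its factor $(4+\sqrt{\gamma})$ with $\gamma = n_B\bar{q}/n_A$ and in $1/\bar{q}^2$, so both arguments bury the same two-sample and density-lower-bound dependence inside the $\widetilde{\cO}$. Your restriction to sets with $\P(U|B)>q$ likewise corresponds to the paper's standing assumption $q(U)\ge\bar{q}$ for all $U\in\cM$, so the bookkeeping you flag about the empirical versus population constraint is a real but minor issue that the paper sidesteps by building the lower bound into the class; your suggested alternative via Vapnik's relative deviation inequality would indeed handle small $\P(U|B)$ more gracefully but is not what the paper does.
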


\paragraph{Heuristic approximation.}
While Theorem \ref{thm:expansion-generalization} gives a rigorous statistical theory for checking expansion from finite data, it is unfortunately still intractable to compute the set with the worst expansion on the empirical data, i.e., to solve $\hat{U} = \argmin_{U\in\cM} \hat{c}(U)$.
Instead, our experiments in Section \ref{sec:experiments} use a simple randomized heuristic for approximating this minimization.
If the learning algorithm $\cA: S^m \to \cF$ is deterministic conditioned on the observed training data, we can simplify our hypothesis class $\cF$ of interest to those $f \in \cF$ such that there exists a training sample $\cS \subset S$ with $f = \cA(\cS)$.
Since each $f\in \cF$ generates a set $U(f) \in \cM$, given a training sample $\cS$, we can compute $f = \cA(\cS)$, then use our ``test'' sample $\cS_A, \cS_B$ to compute $\hat{c}(U(f))$.
Repeating this procedure for many samples $\cS$ and choosing the smallest value approximates $\min_{U\in\cM} \hat{c}(U)$.
Table \ref{table:expansion-numbers} shows the expansion measurements for different hypothesis classes on a weakly-supervised classification task inspired by the example in Section \ref{sec:background}.
Appendix \ref{sec:experiments} contains more results and a much more detailed description of the setup and process of checking expansion, but these results indicate that expansion is present and correlates with performance.

\section{Experiments}
\label{sec:experiments}
\begin{table}[t]
\centering
\caption{Measured expansion and error bounds for the covered sets $S_i$. 
Expansion values for the family of sets $\cM'(S_i^{good}, \cF)$ are measured using the heuristic described in Section \ref{sec:checking-expansion} and shown in the \emph{$(S_i^{bad}, S_i^{good})$ exp.} column. 
This column shows our heuristic finds expansion in practice.
Pseudolabel error $\alpha_i = \P(\hy \ne y | S_i)$. 
Worst-case error of trained classifier $f$ on the weak labels $\hy$, $\err(f,\hy | S_i)$, across 5 independent training runs.
This column shows the student can't exactly fit the teacher labels using this representation.
Value of the error upper bound in Theorem \ref{thm:plc-advrobust} (specifically, the tighter version, \ref{thm:plc-avgrobust-unsimplified}), computed using the numbers from the other columns (details in Appendix \ref{apdx:experiments}).
For label $i=0$, the bound being strictly less than the error $\alpha_i$ of the teacher $\hy$ suggests pseudolabel correction may occur.
Finally, the actual worst-case error of trained classifier $f$ on the \textit{true} labels $y$, $\err(f,y | S_i)$, across 5 independent training runs, shows pseudolabel correction \textit{does} occur for label $i=0$. }%
\label{table:expansion-numbers}
\begin{tabular}{lllllll}
\toprule
\textbf{Model} & $i$ & $(S_i^{bad}, S_i^{good})$ exp. & $\alpha_i$ & $\err(f,\hy | S_i)$ & Bound val & $\err(f,y|S_i)$\\
\midrule
\multirow{ 2}{*}{SentenceBERT} & 0 & 0.848 & 0.11 & 0.12 & 0.05 & 0.04\\
 & 1 & 0.497 & 0.33 & 0.29 & 0.37 & 0.35\\ 
\bottomrule
\end{tabular}
\end{table}
\begin{table}[t]
\centering
\caption{Measured expansion values and error bounds for the uncovered sets $T_i$. 
Expansion values for the families $\cM(S_i^{bad}, \cF)$ on $(S_i^{good}, S_i^{bad})$ and $\cM'(S_i^{good}, \cF)$ on $(S_i^{bad}, S_i^{good})$, are measured using the heuristic described in Section \ref{sec:checking-expansion}.
The detection of \textit{both} types of expansion (expansion from $T_i$ to $S_i^{good}$ \textit{and} to $S_i^{bad}$) gives evidence for the extra structure we described in Section \ref{sec:cov-expansion-bounds} and justifies our use of Theorem \ref{thm:covex-pseudorandom-advrobust}, which uses this structure, instead of Theorem \ref{thm:covex-direct-genrobust-apdx}, which only uses expansion from $T_i$ to $S_i^{good}$ and gives a looser bound.
Worst-case value of the error bound in Theorem \ref{thm:covex-pseudorandom-advrobust} (specifically, the tighter version \ref{thm:covex-pseudorandom-avgrobust-diffc}, which allows for different amounts of expansion between $S_i^{good}/T_i$ and $S_i^{bad}/T_i$), computed using the smallest expansion values and largest weak errors $\err(f,\hy|S_i)$ from the 5 training runs. The $\err(f,\hy|S_i)$ and $\alpha_i$ values used in the bound computation are identical to the values in Table \ref{table:expansion-numbers}. Unlike in Table \ref{table:expansion-numbers}, where the ``baseline'' for pseudolabel correction effects is to have error bounds strictly better than $\alpha_i$, for coverage expansion, the more relevant comparison is against random/arbitrary guessing. The actual worst-case error of the student on each $T_i$ is shown as $\err(f,\hy|T_i)$. As suggested by our bound values, the errors on each $T_i$ are non-trivial (much better than random or arbitrary guessing). }
\label{table:expansion-numbers-covex}
\begin{tabular}{lccccccc}
\toprule
\textbf{Model} & $i$ & $(S_i^{good},T_i)$ & $(S_i^{bad},T_i)$ & $\err(f,\hy | S_i)$ & Bd. val & $\err(f,y|T_i)$ \\
\midrule
\multirow{ 2}{*}{SentenceBERT} & 0 & 0.16 & 0.98 & 0.12 & 0.37 & 0.16  \\
 & 1 & 0.75 & 0.55 & 0.29 & 0.33 & 0.29 \\ 
\bottomrule
\end{tabular}
\end{table}

\looseness=-1 \textbf{Setup.} 
We explore training linear classifiers on top of the contrastively-fine-tuned %
SentenceBERT embeddings\footnote{HuggingFaceHub model ID \texttt{sentence-transformers/all-mpnet-base-v2}} \citep{sentencebert}.
As shown in \citet{muennighoff2022mteb}, training simple classifiers on top of these complex pretrained representations leads to very competitive performance.
We study binary sentiment prediction for movie reviews on the IMDb dataset \citep{imdbdata}, continuing with the example from Section \ref{sec:background}. 
For the teacher model, we use a very coarse rule \citep{ratner2017snorkel} based on the presence of the unigrams ``incredible'' and ``horrible''.
Let $C(w,\vx)$ be the number of times word $w$ appears in input $\vx$.
The weak label $\hy(\vx)$ is 1 when $C(\text{\texttt{incredible}}, \vx) > C(\text{\texttt{horrible}}, \vx)$, 0 when $C(\text{\texttt{horrible}}, \vx) > C(\text{\texttt{incredible}}, \vx)$, and $\abst$ otherwise.
This counts the occurrences of ``horrible'' and ``incredible'' and assigns the binary label corresponding to the word that occurs strictly more often, and abstains otherwise.

\textbf{Neighborhood function and oracle.} We set $\cN(\vx)$ to be the examples obtainable from $\vx$ by sampling from a pretrained paraphrase model.
As described in Section \ref{sec:checking-expansion}, to measure expansion between sets $A$ and $B$, we need a neighborhood oracle $n: A \to B$ that, given $\vx \in A$, returns a point $\vx' \in \cN(\vx) \cap B$.
Our results require us to measure expansion between $(S_i^{good}, T_i)$, $(S_i^{bad},T_i)$, and $(S_i^{bad}, S_i^{good})$.
For $\vx \in S_i^{good}$ (resp. $\vx \in S_i^{bad}$), we generate a target point $\vx' \in \cN(\vx)\cap T$ by rejection sampling from a pretrained paraphrase model.
Because $\hy$ takes a simple form, we can efficiently approximate this step by setting the logits of tokens ``horrible'' and ``incredible'' to $-\infty$ during decoding so they are never generated.
For $\vx\in S_i^{bad}$, to generate a neighbor $\vx'' \in \cN(\vx) \cap S_i^{good}$, we prompt GPT-4 to paraphrase a randomly chosen sentence from $\vx' \in \cN(\vx) \cap T_i$ and include the \textit{correct} word in its paraphrase, so that $\vx''\in S_i^{good}$.
We rejection sample until this constraint is satisfied.
Figure \ref{fig:nlp-paraphrase-examples} (appendix) shows examples of these procedures.

\vspace{-5mm}
\paragraph{Expansion results.}
Table \ref{table:expansion-numbers} measures the expansion of the set family $\cM'(S_i^{good},\cF)$ on the sets $(S_i^{bad},S_i^{good})$ using the procedure from Section \ref{sec:checking-expansion}.
Theorem \ref{thm:plc-advrobust} shows this is related to pseudolabel correction.
For the SentenceBERT model, the measured expansion is high and the student fits the weak labels well, but doesn't overfit to the teacher labels (i.e., $\err(f,\hy|S_i) > 0$).
For label 0, our bound indicates that pseudolabel correction should be present, since the value for our error bound is less than $\alpha_i$.
There is indeed pseudolabel correction on this label: $\err(f,y|S_0) < \alpha_0$.
For label 1, our bound indicates that pseudolabel correction may not occur---the bound value is greater than $\alpha_i$ since the the measured expansion is lower for this label and the error of the student on the weak labels is higher. 
As suggested by the bound, there is no pseudolabel correction: $\err(f,y|S_1) > \alpha_1$.
Our expansion-based theory can therefore differentiate between cases where pseudolabel correction does and does not occur.
Table \ref{table:expansion-numbers-covex} shows the measured expansion values between the set pairs $(S_i^{good}, T_i)$ and $(S_i^{bad}, T_i)$, which Theorem \ref{thm:covex-pseudorandom-advrobust} shows are related to the amount of coverage expansion.
For example, for label 1, $T_i$ expands to both $S_i^{good}$ ($c=0.75$) \textit{and} to $S_i^{bad}$ ($c=0.55$). 
The fact that both expansions are nonzero gives evidence for the structure assumed Theorem \ref{thm:covex-pseudorandom-advrobust}. 
In this case, our coverage expansion bounds show that the student model has nontrivial performance on the uncovered sets $T_i$---for example, for label 1, the worst-case value of $\err(f,y|T_1)$ in all the training runs is 0.29, and the value of our bound is 0.33.
Appendix \ref{apdx:experiments} contains more details on how the models are trained and the bounds are computed.

\section{Limitations \& Conclusion}
\label{sec:conclusion}
In this work, we proved error bounds based on \textit{expansion} properties of the data distribution and student hypothesis class that directly allow for weak-to-strong generalization, gave a statistical theory for checking these expansion properties, and gave empirical evidence that they hold in practice. 
Our empirical procedure for finding the worst-expanding set generated by our hypothesis class is ultimately still a heuristic, and our experiments are limited in scope. 
However, Sections \ref{sec:checking-expansion} and \ref{sec:experiments} go beyond prior work by testing our assumptions more carefully.
Finally, this work contains no new weak supervision \textit{algorithms} (e.g., new training methods) for improving weak-to-strong generalization.
While our work does not propose new weak supervision algorithms, we believe our theory suggests a framework for encouraging weak-to-strong generalization effects: find a neighborhood structure and student hypothesis class pair that expands, then find the student model $f\in \cF$ that minimizes the error on the weak labels while staying as robust as possible on the neighborhoods.
Both expansion and contrastive pre-training are related to spectral properties of the underlying neighborhood graph \citep{haochen2021provable}.
Can we improve the performance of weakly-supervised learning by imbuing the contrastive pretraining objective with knowledge of the pseudolabeler $\hy$?

\acksection{Thanks to Hussein Mozannar and Ilker Demirel for feedback on drafts of this paper. DS and HL were partially supported by NSF AitF award CCF-1723344, AV was supported by NSF grants EECS-2216970 and CCF-2154100, and HL was supported by the NDSEG fellowship. Finally, this project was partially supported by an OpenAI ``Superalignment Fast'' grant.}

\bibliography{main}
\bibliographystyle{apalike}

\appendix
\section{Additional Related Work}
\label{apdx:additional-related}
\paragraph{Programmatic Weak Supervision.}
\citet{pukdee2022label} study the performance of label propagation for weak supervision.
Unlike our work, they assume expansion with respect to the empirical sample graph and study the performance of a particular learning algorithm (label propagation).
\citet{chen2022shoring} show how to propagate weak labels using the embeddings of a strong pretrained model to provably improve performance, but they also focus on the empirical graph, and not on guarantees for the classifiers trained on the weak labels.
Unlike most other work on weak supervision, which tries to give separate algorithms and guarantees for (i) the procedure for creating a pseudolabel out of multiple labeling functions and (ii) the student training, \citet{sam2023losses, ruhling2021end} consider direct end-to-end weak supervision.
However, they do not give theoretical error guarantees for the student models.
\citet{cabannnes2021disambiguation} give theoretical guarantees for a different flavor of weak supervision, when the ``teacher'' gives a set of labels that contains the ground-truth but may also contain other incorrect labels.
\citet{robinson2020strength} show that when only a few gold labels are available, weak labels from a teacher model can speed up the learning process.

\paragraph{Domain Adaptation.}
What we call ``coverage expansion'' is very related to some work on domain adaptation that can still apply when the source and target distributions do not overlap, such as \citet{ben2006analysis, blitzer2007learning}.
Our expansion assumptions for coverage expansion, which require the expansion of certain families of sets (generated by the student hypothesis class), are qualitatively very related to the $\cF\Delta\cF$ distance \citep{ben2006analysis}.
Assuming the $\cF\Delta\cF$ distance is small essentially says that the mistake set of any hypothesis in $\cF$ must have similar probability in both the source and target distribution, so the target error can't be much higher than the source error.
\citet{kifer2004detecting} showed that this distance can be statistically estimated from a finite sample.
In an imprecise sense, our results suggest that the $\cF\Delta\cF$ distance is small when every hypothesis $f\in\cF$ is robust on the neighborhoods $\cN$ and the distribution has good expansion.
We also have one unified set of assumptions that leads to guarantees for both coverage expansion/domain adaptation \textit{and} pseudolabel correction, and our coverage expansion bounds account for the error of the teacher model on the source domain.
\citet{abbe2023generalization} also give theoretical guarantees for learning boolean functions when part of the data domain is completely unseen during training.

\paragraph{Knowledge Distillation, Pseudolabel correction.}
The observation that a distilled model can outperform its teacher (i.e., what \citet{burns2023weak} calls weak-to-strong generalization) goes back at least to \citet{bucilua2006model}.
In the context of knowledge distillation, \citet{stanton2021does} show that even when student models have the capacity to match the teacher, they don't match them exactly.
They give empirical examples of this phenomenon and argue that it can be due to optimization issues during student training.
Clearly, this effect is critical for weak-to-strong generalization, and our expansion and robustness assumptions effectively try to capture the structure (whether implicit, due to the optimization process, or explicit, due to the choice of student hypothesis class) that rules out this exact-teacher-fitting behavior.
In the context of self-training, \citet{chen2020self, kumar2020understanding, oymak2021theoretical, frei2022self} all (effectively) give pseudolabel correction guarantees under certain distributional assumptions, such as when the data are distributed according to a Gaussian Mixture Model \citep{oymak2021theoretical, frei2022self}.
In contrast, like other work with expansion assumptions \citep{balcan2004co, wei2020theoretical, cai2021theory, haochen2021provable}, our results do not assume the input data follows any specific distributional form.

\section{Error bound proofs}
\label{apdx:proofs}
We directly prove the ``average-case-robust'' versions of Theorems \ref{thm:plc-advrobust} and \ref{thm:covex-pseudorandom-advrobust}. Theorems \ref{thm:plc-advrobust} and \ref{thm:covex-pseudorandom-advrobust} directly follow from the equivalence between expansion and robust expansion, and adversarial robustness and $\eta$-robustness, when $\eta=0$.
For convenience, we reproduce the definitions of the example graph, $\eta$-robustness, and robust expansion here. Figure \ref{fig:robust-expanders} shows examples of good and bad robust expansion.

We begin by proving some lemmas that will be useful for both the pseudolabel correction and coverage expansion bounds.
\begin{definition*}[Example graph]
    Let $G = (\cX,E)$ be a graph with one node for each element of $\cX$ (we assumed $\cX$ is a possibly very large, but finite, set), and connect two nodes $(\vx, \vx')$ if $\vx \in \cN(\vx')$ or, equivalently, if $\vx' \in \cN(\vx)$, with an edge weight of $w(\vx, \vx') := \P(\vx)\P(\vx')\ind[\vx \in \cN(\vx')]$.
\end{definition*}

\begin{definition*}[$\eta$-robust]
For an arbitrary classifier $f$ and point $\vx$, define $r(f,\vx) = \P(f(\rvx') \ne f(\vx) | \rvx' \in \cN(\vx))$ as the probability $f$ gives different labels to $\vx$ and a random neighbor $\rvx'$ of $\vx$.
    A classifier $f: \cX \to \cY$ is said to be $\eta$-robust at a point $\vx$ if $r(f,\vx) \le \eta$.
    For an arbitrary classifier $f$, let $R_{\eta}(f) = \{\vx : r(f,\vx) \le \eta\}$ be the set of $\eta$-robust points for $f$.
\end{definition*}
\begin{definition*}[$\eta$-robust neighborhood size]
Let $A, U \subset \cX$. The size of the $\eta$-robust neighborhood of $U$ in $A$ is:
$
    P_{1-\eta}(U,A) := \min_{V\subset \cX}\{\P(V|A) : w(V,U) \ge (1-\eta)w(\cN(U),U)\}.
$
\end{definition*}
\begin{definition*}[Robust expansion]
Fix sets $A, B \subset \cX$ and suppose $\cM$ is a collection of subsets of $B$. $\cM$ satisfies $(c,q,\eta)$-robust expansion on $(A,B)$ if for all $U \in \cM$ with $\P(U|B) >q$, $P_{1-\eta}(U,A) > c\P(U|B)$. This exactly recovers Definition \ref{def:relative-expansion-family} when $\eta=0$.
\end{definition*}

The following lemma shows that a classifier that is robust on average must also be $\eta$-robust on a set of large probability.
\begin{lemma}
\label{lemma:avg-robust-implies-eta-robust}
Fix a set $A \subset \cX$ and a classifier $f$, and suppose
\[
\E_{\rvx \sim \cD|A, \rvx' \sim \cN(\rvx)}[f(\rvx) \ne f(\rvx')] \le \gamma
\]
for some $\gamma > 0$.
Then for any $\eta > 0$, $\P(\widebar{R{\eta}(f)} | A) \le \frac{\gamma}{\eta}$.
\end{lemma}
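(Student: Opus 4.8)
The plan is to recognize this as a one-line application of Markov's inequality to the pointwise robustness function $r(f,\cdot)$ from Definition \ref{def:eta-robust}. First I would rewrite the hypothesis so that the left-hand side becomes the expectation of $r(f,\rvx)$ over $\rvx \sim \cD|A$. For a fixed $\vx$, the inner expectation (where $\rvx'$ is drawn from $\cD$ conditioned on $\cN(\vx)$, following the same convention as in \eqref{eqn:generally-robust}) is exactly
\[
\E_{\rvx'\sim\cD|\cN(\vx)}\bigl[\ind(f(\vx)\ne f(\rvx'))\bigr] = \P\bigl(f(\rvx')\ne f(\vx)\mid \rvx'\in\cN(\vx)\bigr) = r(f,\vx).
\]
Applying the law of total expectation over the outer draw then gives $\E_{\rvx\sim\cD|A}[r(f,\rvx)] = \E_{\rvx\sim\cD|A,\,\rvx'\sim\cD|\cN(\rvx)}[\ind(f(\rvx)\ne f(\rvx'))] \le \gamma$.

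Next I would observe that $r(f,\rvx)$ is a nonnegative random variable (it is a conditional probability, hence valued in $[0,1]$) and that, by Definition \ref{def:eta-robust}, the complement of the $\eta$-robust set satisfies $\widebar{R_{\eta}(f)} = \{\vx : r(f,\vx) > \eta\}$. Markov's inequality applied under the conditional law $\cD|A$ then yields, for any $\eta>0$,
\[
\P\bigl(\widebar{R_{\eta}(f)}\mid A\bigr) = \P\bigl(r(f,\rvx) > \eta \mid \rvx\in A\bigr) \le \frac{\E_{\rvx\sim\cD|A}[r(f,\rvx)]}{\eta} \le \frac{\gamma}{\eta},
\]
which is exactly the claimed bound.

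There is essentially no obstacle in this argument; the only step requiring a moment of care is the identification of the two-stage expectation appearing in the hypothesis with $\E[r(f,\rvx)\mid A]$, which is just the tower property once one fixes the convention that the inner sample $\rvx'$ is drawn from $\cD$ restricted to $\cN(\rvx)$. Everything else is a textbook Markov estimate, so I would keep the write-up to these two displays plus a sentence recalling the relevant definitions.
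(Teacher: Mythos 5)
Your proposal is correct and matches the paper's own proof: both rewrite the hypothesis via the tower property as $\E_{\rvx\sim\cD|A}[r(f,\rvx)]\le\gamma$ and then apply Markov's inequality to $r(f,\rvx)$ under $\cD|A$, using $\widebar{R_{\eta}(f)}=\{\vx: r(f,\vx)>\eta\}$. Nothing is missing.
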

\begin{proof}
    Rewriting the condition on $f$ slightly,
    \[
    \E_{\rvx\sim\cD|A}[\P_{\rvx'}(f(\rvx) \ne f(\rvx') | \rvx' \in \cN(\rvx))] \le \gamma.
    \]
    Recall that $r(\vx, f) = \P_{\rvx'}(f(\vx) \ne f(\rvx') | \rvx' \in \cN(\vx))$.
    So we have $\E_{\rvx\sim\cD|A}[r(\rvx,f)] \le \gamma$.
    Markov's inequality implies that for any $\eta > 0$,
    \[
    \P(r(\rvx, f) > \eta | A) \le \frac{\E_{\rvx \sim \cD|A}[r(\rvx, f)]}{\eta} \le \frac{\gamma}{\eta}.
    \]
    Since $R_{\eta}(f) = \{\vx : r(\vx, f) \le \eta\}$, we thus have
    \[
    \P(\widebar{R_{\eta}(f)} | A) \le \frac{\gamma}{\eta},
    \]
    which concludes the proof.
\end{proof}

\paragraph{Good and bad edges.} Consider an arbitrary classifier $f$ and arbitrary set $U\subset \cX$ and fix $\vx' \in U$, $\vx \in \cN(U)$. 
We say the pair $(\vx, \vx')$ is \textit{bad} if $f(\vx) \ne f(\vx')$; otherwise the pair is \textit{good}.
Let $\widetilde{\cN}(U)$ be the subset of $\cN(U)$ reachable by good edges.
Formally, $\widetilde{\cN}(\vx') = \{\vx \in \cN(\vx') : (\vx, \vx') \text{ good}\}$ and $\widetilde{\cN}(A) = \cup_{\vx' \in A}\widetilde{\cN}(\vx')$.
We are suppressing the dependence of $\widetilde{\cN}$ on the classifier $f$ for notational convenience.
The following lemma guarantees that if $f$ is $\eta$-robust on all points in $U$, bad edges do not account for much of the weight between $\cN(U)$ and $U$ in the example graph (Definition \ref{def:example-graph}).
This is the key to our average-case-robustness results, since it implies that if the robust expansion is large and $f$ is $\eta$-robust on $U$, the neighborhood $\widetilde\cN(U)$ of points reachable \textit{by good edges} must be large. 

\begin{lemma}
\label{lemma:robust-nbhrd-enough-weight}
Consider an arbitrary set $U\subset \cX$ and suppose that for all $\vx \in U$, $f$ is such that $r(f, \vx) \le \eta$. I.e., $U\subset R_{\eta}(f)$. Then:
\[
    w(\widetilde{\cN}(U), U) \ge (1-\eta)w(\cN(U), U).
\]
\end{lemma}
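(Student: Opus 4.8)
The plan is to split the total edge weight $w(\cN(U),U)$ into the contribution of \emph{good} edges and \emph{bad} edges (in the sense defined just above the lemma), observe that $w(\widetilde\cN(U),U)$ accounts for at least all of the good weight, and then use the hypothesis $U\subset R_\eta(f)$ to show the bad weight is at most an $\eta$-fraction of the total.

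First I would rewrite $w(\cN(U),U)$ in a pointwise form convenient for applying the robustness assumption. Since $\cN(\vx')\subseteq\cN(U)$ for every $\vx'\in U$, the indicator $\ind[\vx\in\cN(\vx')]$ restricts the inner sum exactly to $\cN(\vx')$, so
\[
w(\cN(U),U) \;=\; \sum_{\vx'\in U}\P(\vx')\!\!\sum_{\vx\in\cN(\vx')}\!\!\P(\vx) \;=\; \sum_{\vx'\in U}\P(\vx')\,\P(\cN(\vx')),
\]
where $\P(\cN(\vx')):=\sum_{\vx\in\cN(\vx')}\P(\vx)$. Splitting the inner sum according to whether $f(\vx)=f(\vx')$ (good) or $f(\vx)\ne f(\vx')$ (bad) decomposes $w(\cN(U),U)$ as $W_{\mathrm{good}}+W_{\mathrm{bad}}$. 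For a fixed $\vx'\in U$, the bad edges incident to $\vx'$ contribute $\P(\vx')\sum_{\vx\in\cN(\vx'):\,f(\vx)\ne f(\vx')}\P(\vx) = \P(\vx')\,\P(\cN(\vx'))\,r(f,\vx')$, because $r(f,\vx')=\P(f(\rvx)\ne f(\vx')\mid\rvx\in\cN(\vx'))$ is precisely that conditional probability (here I insert and cancel the normalizing factor $\P(\cN(\vx'))$). Using $r(f,\vx')\le\eta$ for all $\vx'\in U$ and summing gives $W_{\mathrm{bad}}\le\eta\sum_{\vx'\in U}\P(\vx')\P(\cN(\vx'))=\eta\,w(\cN(U),U)$, hence $W_{\mathrm{good}}=w(\cN(U),U)-W_{\mathrm{bad}}\ge(1-\eta)\,w(\cN(U),U)$.

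Finally I would observe that every good edge $(\vx,\vx')$ with $\vx'\in U$ has $\vx\in\widetilde\cN(\vx')\subseteq\widetilde\cN(U)$, so each such edge is counted in $w(\widetilde\cN(U),U)$; this quantity may additionally pick up some bad edges (from a point of $\widetilde\cN(U)$ that is good-reachable from one point of $U$ but bad-adjacent to another), which only strengthens the inequality. Therefore $w(\widetilde\cN(U),U)\ge W_{\mathrm{good}}\ge(1-\eta)\,w(\cN(U),U)$, which is the claim. There is no substantive obstacle: the only point requiring care is the translation between the \emph{conditional} quantity $r(f,\vx')$ and the \emph{unnormalized} edge weights, where the factor $\P(\cN(\vx'))$ must be introduced on the bad-edge side and then cancels against the identical factor in the pointwise expansion of $w(\cN(U),U)$.
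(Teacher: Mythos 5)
Your proposal is correct and follows essentially the same route as the paper's proof: both decompose the edge weight incident on $U$ into good and bad contributions, use $r(f,\vx')\le\eta$ pointwise for $\vx'\in U$ (with the normalizing factor $\P(\cN(\vx'))$ converting the conditional probability into unnormalized edge weight) to bound the bad weight by $\eta\,w(\cN(U),U)$, and note that $w(\widetilde{\cN}(U),U)$ captures at least the good weight, possibly plus some bad edges, which only helps. No gaps.
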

\begin{proof}
Since every $\vx' \in U$ is in $R_{\eta}(f)$, we have:
\begin{align*}
\eta \ge r(f,\vx') &= \P_{\rvx}(f(\rvx) \ne f(\vx') | \rvx \in \cN(\vx'))\\
&= \frac{\P_{\rvx}(f(\rvx) \ne f(\vx'), \rvx\in \cN(\vx'))}{\P(\rvx \in \cN(\vx'))}\\
&= \frac{\sum_{\vx\in \cX}\ind[f(\vx) \ne f(\vx')]\ind[\vx \in \cN(\vx')]\P(\vx)}{\sum_{\vx \in \cX}\ind[\vx \in \cN(\vx')]\P(\vx)}\\
&= \frac{\sum_{\vx\in \cX}\ind[f(\vx) \ne f(\vx')]\ind[\vx \in \cN(\vx')]\P(\vx)\P(\vx')}{\sum_{\vx \in \cX}\ind[\vx \in \cN(\vx')]\P(\vx)\P(\vx')}\\
&= \frac{\sum_{\vx\in \cX}\ind[f(\vx) \ne f(\vx')]w(\vx,\vx')}{\sum_{\vx \in \cX}w(\vx,\vx')},
\end{align*}
so for all $\vx' \in U$ we have:
\[
\sum_{\vx\in \cX}\ind[(\vx,\vx') \text{ bad}]w(\vx, \vx') \le \eta \sum_{\vx\in \cX}w(\vx,\vx').
\]
Because $w(\vx,\vx') > 0 \iff \vx \in \cN(\vx')$, we can replace the summations over all of $\cX$ with the sum over $\cN(\vx')$, so:
\begin{equation}
\label{eqn:single-x-badedge-bound}
\sum_{\vx\in \cN(\vx')}\ind[(\vx,\vx') \text{ bad}]w(\vx, \vx') \le \eta \sum_{\vx\in \cN(\vx')}w(\vx,\vx').
\end{equation}
Now we can simplify:
\begin{align*}
w(\widetilde{\cN}(U), U) &= \sum_{\vx' \in U}\sum_{\vx\in \widetilde{\cN}(\vx')}w(\vx,\vx')\\
&\ge \sum_{\vx' \in U}\sum_{\vx\in \cN(\vx')}\ind[(\vx,\vx') \text{ good}]w(\vx,\vx'),
\end{align*}
where the inequality is because some elements of $\widetilde{\cN}(U)$ might be reachable by a mixture of good and bad edges. The left-hand-side counts both, and the right-hand-side only counts the contribution of the good edges.
Note that $\ind[(\vx,\vx') \text{ good}] = 1 - \ind[(\vx,\vx') \text{ bad}]$.
Plugging this in gives:
\begin{align*}
w(\widetilde{\cN}(U), U) &\ge \sum_{\vx' \in U}\sum_{\vx\in \cN(\vx')}(1-\ind[(\vx,\vx') \text{ bad}])w(\vx,\vx')\\
&= \sum_{\vx' \in U}\sum_{\vx\in \cN(\vx')}w(\vx, \vx') - \sum_{\vx' \in U}\sum_{\vx\in \cN(\vx')}\ind[(\vx,\vx') \text{ bad}]w(\vx,\vx')\\
&\ge \sum_{\vx' \in U}\sum_{\vx\in \cN(\vx')}w(\vx, \vx') - \eta\sum_{\vx' \in U}\sum_{\vx\in \cN(\vx')}w(\vx,\vx')\\
&= (1-\eta)\sum_{\vx' \in U}\sum_{\vx\in \cN(\vx')}w(\vx, \vx')\\
&= (1-\eta)w(\cN(U), U).
\end{align*}
where we used \eqref{eqn:single-x-badedge-bound} in the second inequality.
\end{proof}

\paragraph{Expanding set family.}
Now we construct the set families that must expand for our results in this section. Let $\cF$ be the hypothesis class for the strong model, $y$ the ground-truth function, and $B\subset \cX$ an arbitrary set.
For each $f\in \cF$ let $U(B,f) = \{\vx \in B : f(\vx) \ne y(\vx)\}$ be the set of $f$'s mistakes on $y$ in $B$.
Then we define the family:
\[
\cM_{\eta}(B, \cF) = \{R_{\eta}(f) \cap U(B,f) : f \in \cF\}.
\]
That is, $\cM_{\eta}(B,\cF)$ is the family of $\eta$-robust (Definition \ref{def:eta-robust}) mistake sets of $\cF$ on the set $B$. 
This exactly agrees with the definition of $\cM(B,\cF)$ from Section \ref{sec:bounds} when $\eta=0$.
Similarly, we define:
\[
\cM_{\eta}'(B, \cF) = \{R_\eta(f) \cap (B \setminus U(B,f)) : f \in \cF\}
\]
as the family of $\eta$-robust non-mistake sets. Again, this exactly agrees with the definition of $\cM'(B,\cF)$ from Section \ref{sec:bounds} when $\eta=0$.

\subsection{Pseudolabel correction}
We directly prove the ``average-case-robust'' version of Theorem \ref{thm:plc-advrobust}. Theorem \ref{thm:plc-advrobust} then follows from the equivalence between expansion and robust expansion, and adversarial robustness and $\eta$-robustness, when $\eta=0$.

\begin{theorem}[Pseudolabel correction]
\label{thm:plc-avgrobust-unsimplified}
Suppose $M'_{\eta}(S_i^{good}, \cF)$ satisfies $(c,q,\eta)$-robust expansion on $(S_i^{bad}, S_i^{good})$ for some $c>0$ and $\eta \ge 0$. 
Consider an arbitrary classifier $f$ such that $\P(f(\rvx) \ne \hy(\rvx) \text{ or } f \text{ not } \eta\text{-robust at } \rvx | S_i) \le 1 - q - \alpha_i$.
Then $f$ satisfies the following error bound:
\[
\err(f,y|S_i) \le \frac{\err(f,\hy|S_i) - \alpha_i(2c'-1) + 2c'\alpha_i\P(\widebar{R_{\eta}(f)}|S_i)}{1-2c'\alpha_i}
\]
where $c'=c/(1-\alpha_i+c\alpha_i)$.
\end{theorem}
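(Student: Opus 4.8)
The plan is to run the standard ``average-case'' template of this section: exhibit one set $W\in\cM_{\eta}'(S_i^{good},\cF)$ — the robustly-correct part of $S_i^{good}$ under $f$ — show it is large enough to trigger the robust-expansion hypothesis, push the expansion through the good-edge lemma (Lemma~\ref{lemma:robust-nbhrd-enough-weight}) to locate a large subset of $S_i^{bad}$ on which $f$ is \emph{forced} to emit the gold label, and then convert the resulting inequalities into the stated bound by elementary (if bookkeeping-heavy) algebra.

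First I would take $W := R_{\eta}(f)\cap\{\vx\in S_i^{good}:f(\vx)=y(\vx)\}$, which is exactly $R_{\eta}(f)\cap\bigl(S_i^{good}\setminus U(S_i^{good},f)\bigr)$ and hence lies in $\cM_{\eta}'(S_i^{good},\cF)$. Since $\hy=y$ on $S_i^{good}$, the complement $S_i^{good}\setminus W$ is contained in $\{f\ne\hy\}\cup\widebar{R_{\eta}(f)}$, so the hypothesis $\P(f\ne\hy\text{ or }f\text{ not }\eta\text{-robust}\mid S_i)\le 1-q-\alpha_i$ gives $\P(W\mid S_i)\ge\P(S_i^{good}\mid S_i)-(1-q-\alpha_i)=q$, hence $\P(W\mid S_i^{good})\ge q/(1-\alpha_i)>q$ (using $\alpha_i>0$). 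This lets me apply $(c,q,\eta)$-robust expansion of $\cM_{\eta}'(S_i^{good},\cF)$ on $(S_i^{bad},S_i^{good})$ to get $P_{1-\eta}(W,S_i^{bad})>c\,\P(W\mid S_i^{good})$. Because $W\subseteq R_{\eta}(f)$, Lemma~\ref{lemma:robust-nbhrd-enough-weight} shows $w(\widetilde{\cN}(W),W)\ge(1-\eta)w(\cN(W),W)$, so $\widetilde{\cN}(W)$ is feasible in the definition of $P_{1-\eta}$ and thus $\P(\widetilde{\cN}(W)\cap S_i^{bad}\mid S_i^{bad})>c\,\P(W\mid S_i^{good})$. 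The structural point is that every $\vx\in\widetilde{\cN}(W)\cap S_i^{bad}$ lies on a \emph{good} edge to some $\vx'\in W$, so $f(\vx)=f(\vx')=y(\vx')=i=y(\vx)$: on this set $f$ is correct on $y$ but disagrees with $\hy$ (since $\hy\ne i$ on $S_i^{bad}$). This yields two facts: (a) $\P(f\ne y\mid S_i^{bad})<1-c\,\P(W\mid S_i^{good})$, and (b) $\P(f\ne\hy\mid S_i^{bad})>c\,\P(W\mid S_i^{good})$.

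To finish I would set $m_G:=\P(f\ne y,\,S_i^{good}\mid S_i)=\P(f\ne\hy,\,S_i^{good}\mid S_i)$ (equal because $\hy=y$ on $S_i^{good}$), write $\err(f,y\mid S_i)=m_G+\alpha_i\P(f\ne y\mid S_i^{bad})$ and $\err(f,\hy\mid S_i)=m_G+\alpha_i\P(f\ne\hy\mid S_i^{bad})$, and use the crude lower bound $\P(W\mid S_i^{good})\ge 1-\bigl(m_G+\P(\widebar{R_{\eta}(f)}\mid S_i)\bigr)/(1-\alpha_i)$. Substituting this into (b) bounds $m_G$ above in terms of $\err(f,\hy\mid S_i)$ and $\P(\widebar{R_{\eta}(f)}\mid S_i)$; substituting it into (a) bounds $\alpha_i\P(f\ne y\mid S_i^{bad})$ above; adding the two and rewriting everything with $c':=c/(1-\alpha_i+c\alpha_i)$ via the identity $\tfrac{\alpha_i c}{1-\alpha_i}=\tfrac{\alpha_i c'}{1-\alpha_i c'}$ (equivalently $\alpha_i c'(c-1)=c-c'$) collapses the expression to $\frac{\err(f,\hy|S_i)-\alpha_i(2c'-1)+2c'\alpha_i\P(\widebar{R_{\eta}(f)}|S_i)}{1-2c'\alpha_i}$. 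The final division is by $1-2\alpha_i c'$, so the argument presumes $1-2\alpha_i c'>0$, i.e.\ $c<(1-\alpha_i)/\alpha_i$; this should be recorded as an implicit requirement.

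I expect the main obstacle to be purely computational: getting the algebra of the last step to land exactly on the clean closed form. The two substitutions are coupled — the bound on $m_G$ feeds back into the bound on the $S_i^{bad}$ contribution — and the numerator only telescopes after passing to the variable $c'$ and cancelling the cross term $2\alpha_i^2 c'(c-1)$ against $2\alpha_i c-2\alpha_i c'$. A secondary trap is keeping the conditioning straight: the good-edge subset $\widetilde{\cN}(W)$ must be measured with $\P(\cdot\mid S_i^{bad})$ on the ``numerator'' side and $W$ with $\P(\cdot\mid S_i^{good})$ on the ``denominator'' side, since conflating these rescales the effective expansion constant by $\alpha_i/(1-\alpha_i)$ and breaks the identification with $c'$.
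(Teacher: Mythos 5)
Your proposal is correct and is essentially the paper's own argument: you work with the same expanding set $W = R_{\eta}(f)\cap\{\vx\in S_i^{good}: f(\vx)=y(\vx)\}$ (the paper's $V_i$), establish its largeness from the hypothesis on $\P(f\ne\hy \text{ or not } \eta\text{-robust}\mid S_i)$ in the same way, and combine robust expansion with Lemma \ref{lemma:robust-nbhrd-enough-weight} to force $f=y\ne\hy$ on a subset of $S_i^{bad}$ of mass greater than $c\,\P(W\mid S_i^{good})$, exactly as in the paper. Your endgame bookkeeping (parameterizing by $m_G$ and the two conditional errors on $S_i^{bad}$ rather than routing through the paper's Lemma \ref{lemma:local-to-global} and the disagreement decomposition) is only organizationally different and does collapse to the stated closed form, and the condition $1-2\alpha_i c'>0$ you flag is indeed equally implicit in the paper's final rearrangement (automatic when $c\le 1$ and $\alpha_i<\tfrac12$).
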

\begin{corollary}[Simplified bound, average-case-robust version of Theorem \ref{thm:plc-advrobust}]
\label{corr:simplified}
    Suppose the conditions of Theorem \ref{thm:plc-avgrobust-unsimplified} hold. 
    For any $\Delta$ such that $\err(f,\hy|S_i) \le c\alpha_i\Delta + (1-\alpha_i)(1-\Delta)$,
    the error of $f$ satisfies:
    \[
    \err(f,y|S_i) \le \frac{2\alpha_i}{1-2\alpha_i}\P(\widebar{R_{\eta}(f)}|S_i) + \err(f,\hy|S_i) + \alpha_i(1-2c\Delta).
    \]
    Moreover, if $f$ satisfies 
    \[
    \E_{\rvx\sim \cD|S_i, \rvx'\sim \cD|\cN(\rvx)}[f(\rvx) \ne f(\rvx')] \le \gamma,
    \]
    then
    \[
    \err(f,y|S_i) \le \frac{2\alpha_i}{1-2\alpha_i}\frac{\gamma}{\eta} + \err(f,\hy|S_i) + \alpha_i(1-2c\Delta).
    \]    
\end{corollary}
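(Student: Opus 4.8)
The plan is to obtain both displays by purely algebraic manipulation of the bound already furnished by Theorem \ref{thm:plc-avgrobust-unsimplified}, so no new structural argument is needed. Write $e=\err(f,\hy|S_i)$, $p=\P(\widebar{R_\eta(f)}|S_i)$, and $a=\alpha_i$, and recall $c'=c/(1-a+ca)$. First I would clear the denominator in Theorem \ref{thm:plc-avgrobust-unsimplified} by multiplying numerator and denominator by $1-a+ca$; using $1-2c'a=\tfrac{1-a-ca}{1-a+ca}$ and $2c'a=\tfrac{2ca}{1-a+ca}$, the theorem's bound rewrites as
\[
\err(f,y|S_i)\;\le\;\frac{(1-a+ca)\,e\;-\;a\bigl[(2c-1)-a(c-1)\bigr]\;+\;2ca\,p}{\,1-a-ca\,},
\]
where $1-a-ca>0$ in the regime of interest, since $1-2c'a\le 0$ would make the conclusion of Theorem \ref{thm:plc-avgrobust-unsimplified} vacuous.

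Next I would regard this right-hand side as an affine function of $e$: its slope in $e$ is $\tfrac{1-a+ca}{1-a-ca}\ge 1$ (because $ca\ge 0$), whereas the target expression $\tfrac{2a}{1-2a}p+e+a(1-2c\Delta)$ has slope exactly $1$ in $e$. Hence the difference ``target $-$ bound'' is nonincreasing in $e$, so to prove the target bound for all admissible $e$ it suffices to check it at the largest value the hypothesis on $\Delta$ permits, namely $e^\star:=ca\Delta+(1-a)(1-\Delta)$. This is the one genuinely nontrivial observation: monotonicity in $\err(f,\hy|S_i)$ is exactly what lets the extra hypothesis on $\Delta$ be exploited by evaluating at a single endpoint.

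At $e=e^\star$ I would multiply through by $1-a-ca>0$ and divide by $a>0$; the $e$-dependent terms reduce the claimed inequality to $2c\,e^\star-(2c-1)+a(c-1)\le(1-a-ca)(1-2c\Delta)$, and substituting $e^\star$ and expanding both sides shows the two sides are literally equal, so the non-$p$ part of ``target $-$ bound'' vanishes at $e^\star$. The only remaining term is the robustness contribution: the bound carries $\tfrac{2c'a}{1-2c'a}p=\tfrac{2ca}{1-a-ca}\,p$ while the target carries $\tfrac{2a}{1-2a}\,p$, and $\tfrac{2ca}{1-a-ca}\le\tfrac{2a}{1-2a}$ is equivalent to $c'\le 1$ (i.e.\ $c\le 1$), which holds in the setting considered. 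Combining these gives $(\text{target}-\text{bound})(e^\star)\ge 0$, hence by monotonicity the target bound holds for every $e\le e^\star$, which is the first display. For the ``moreover'' part I would apply Lemma \ref{lemma:avg-robust-implies-eta-robust} with $A=S_i$: the average-robustness hypothesis gives $\P(\widebar{R_\eta(f)}|S_i)\le\gamma/\eta$, and substituting this into the first display yields the second.

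The main obstacle is purely bookkeeping: the expressions for $c'$ and for the unsimplified numerator are unwieldy, so the real risk is an arithmetic slip when verifying that $2c\,e^\star-(2c-1)+a(c-1)$ and $(1-a-ca)(1-2c\Delta)$ coincide. There is no conceptual difficulty beyond recognizing the monotonicity-in-$e$ reduction and keeping careful track of the direction of the inequality when multiplying by $1-a-ca$.
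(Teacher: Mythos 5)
Your proposal is correct and is essentially the paper's own argument spelled out: the paper's proof is just "substitute $c'$, use $c\le 1$, and use the form of $\Delta$ to simplify Theorem \ref{thm:plc-avgrobust-unsimplified}," and your monotonicity-in-$\err(f,\hy|S_i)$ reduction plus the exact cancellation at $e^\star=c\alpha_i\Delta+(1-\alpha_i)(1-\Delta)$ (which I checked) together with the coefficient comparison $\tfrac{2c\alpha_i}{1-\alpha_i-c\alpha_i}\le\tfrac{2\alpha_i}{1-2\alpha_i}$ (equivalent to $c\le1$) is precisely that simplification made explicit. The "moreover" part via Lemma \ref{lemma:avg-robust-implies-eta-robust} with $A=S_i$ matches the paper exactly.
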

\begin{proof}
    The first part directly follows from Theorem \ref{thm:plc-avgrobust-unsimplified} by substituting the value of $c'$, using $c\le 1$, and using the form of $\Delta$ to simplify the bound.
    Theorem \ref{thm:plc-advrobust} follows from taking $\Delta=3/4$, which was small enough for the error condition to hold empirically.
    The condition on $q$ in Theorem \ref{thm:plc-advrobust} implies that if $\P(f(\rvx) \ne\hy(\rvx) \text{ or } f \text{ not } \eta\text{-robust at } \rvx | S_i) \le \frac{1-\alpha_i+3c\alpha_i}{4}$, then \[
    \P(f(\rvx) \ne\hy(\rvx) \text{ or } f \text{ not } \eta\text{-robust at } \rvx | S_i) \le 1-q-\alpha_i,
    \]
    so the conditions of Theorem \ref{thm:plc-avgrobust-unsimplified} hold.
    This upper bound on $q$ could be replaced by instead assuming
    \[
        \P(f\ne\hy(\rvx) \text{ or } f \text{ not } \eta\text{-robust at } \rvx | S_i) \le \min\left(1-q-\alpha_i, \frac{1-\alpha_i+3c\alpha_i}{4}\right).
    \]
    The second part follows directly from Lemma \ref{lemma:avg-robust-implies-eta-robust}.
\end{proof}
\begin{proof}[Proof of Theorem \ref{thm:plc-avgrobust-unsimplified}]
Let $M_i = \{\vx \in S_i : f(\vx) \ne y(\vx)\}$ be the set of mistakes of $f$ on the \textit{true} labels in $S_i$.
Similarly, let $D_i = \{\vx \in S_i : f(\vx) \ne \hy(\vx)\}$ be the set of mistakes of $f$ on the \textit{weak} labels in $S_i$.
Define $U_i = S_i \setminus M_i$ and let $V_i = R_{\eta}(f) \cap U_i \cap S_i^{good}$.
Note that $V_i \in \cM'(S_i^{good},\cF)$, so if it's large enough, it expands according to our expansion assumption.
The following lemma shows that $V_i$ is large enough to expand.

\begin{lemma}
\label{lemma:v-big-enough}
$\P(V_i|S_i^{good}) > q$.
\end{lemma}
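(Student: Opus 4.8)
The plan is to write $V_i$ as the subset of $S_i^{good}$ obtained by deleting the ``bad for $f$'' part and then bound the probability of that deleted part directly from the hypothesis on $f$. Since $V_i = R_{\eta}(f) \cap U_i \cap S_i^{good}$ and $U_i = S_i \setminus M_i$, we have $V_i = S_i^{good} \setminus \big(M_i \cup \widebar{R_{\eta}(f)}\big)$, so
\[
\P(V_i \mid S_i^{good}) = 1 - \frac{\P\big(S_i^{good} \cap (M_i \cup \widebar{R_{\eta}(f)})\big)}{\P(S_i^{good})} ,
\]
and it suffices to upper bound the numerator. Note membership $V_i \in \cM'_{\eta}(S_i^{good},\cF)$ is already recorded before the lemma, so only the size estimate is needed here.

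The key structural fact is that on $S_i^{good}$ the pseudolabel is correct: $\hy(\vx) = y(\vx)$ for every $\vx \in S_i^{good}$, hence $M_i \cap S_i^{good} = D_i \cap S_i^{good}$, where $D_i = \{\vx \in S_i : f(\vx) \ne \hy(\vx)\}$. Therefore $S_i^{good} \cap (M_i \cup \widebar{R_{\eta}(f)}) = S_i^{good} \cap (D_i \cup \widebar{R_{\eta}(f)})$, and since $S_i^{good} \subseteq S_i$ this set has probability at most $\P\big(S_i \cap (D_i \cup \widebar{R_{\eta}(f)})\big)$. The hypothesis of Theorem~\ref{thm:plc-avgrobust-unsimplified} is exactly $\P\big(D_i \cup \widebar{R_{\eta}(f)} \mid S_i\big) \le 1 - q - \alpha_i$, i.e.\ $\P\big(S_i \cap (D_i \cup \widebar{R_{\eta}(f)})\big) \le (1-q-\alpha_i)\,\P(S_i)$.

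Plugging this into the displayed identity and using $\P(S_i^{good}) = (1-\alpha_i)\,\P(S_i)$ (which is just the definition $\alpha_i = \P(S_i^{bad}\mid S_i)$) yields
\[
\P(V_i \mid S_i^{good}) \ge 1 - \frac{(1-q-\alpha_i)\,\P(S_i)}{(1-\alpha_i)\,\P(S_i)} = \frac{q}{1-\alpha_i} > q ,
\]
where the final strict inequality uses $\alpha_i > 0$ (so $1-\alpha_i < 1$) for $q>0$; the boundary case $q=0$ is handled the same way. I do not expect any real obstacle: the only things to be careful about are the bookkeeping when passing between conditioning on $S_i$ and on $S_i^{good}$, and the single identity $M_i \cap S_i^{good} = D_i \cap S_i^{good}$, which is precisely where the ``correct pseudolabel'' definition of $S_i^{good}$ enters.
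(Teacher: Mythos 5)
Your argument is correct in substance and follows essentially the same route as the paper's proof: the paper argues by contradiction, writes $\P(V_i|S_i^{good}) = 1-\P((M_i\cap S_i^{good})\cup\overbar{R_{\eta}(f)}\mid S_i^{good})$, replaces $M_i\cap S_i^{good}$ by $D_i$ (you use the equality $M_i\cap S_i^{good}=D_i\cap S_i^{good}$, the paper only needs the inclusion), and then combines the hypothesis $\P(D_i\cup\overbar{R_{\eta}(f)}\mid S_i)\le 1-q-\alpha_i$ with $\P(S_i^{good}\mid S_i)=1-\alpha_i$; your version is the same computation done directly rather than inside a contradiction wrapper. The one genuine quibble is your parenthetical about the boundary: your chain yields $\P(V_i\mid S_i^{good})\ge q/(1-\alpha_i)$, which is strictly larger than $q$ only when $q>0$, so the case $q=0$ is \emph{not} ``handled the same way''---there you only obtain a non-strict bound, while the lemma's strict inequality is exactly what is needed to invoke the expansion condition (Definition \ref{def:robust-expansion} requires $\P(U|B)>q$). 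For comparison, the paper's own chain converts to conditioning on $S_i$ with a factor $(1-\alpha_i)$ where it should be $1/(1-\alpha_i)$; carried out correctly it also produces $q/(1-\alpha_i)$ and hence has the same $q=0$ boundary issue, which matters because the theorem is later applied with $q=0$ in the co-training comparison. So apart from either flagging that caveat honestly or supplying a separate argument for $\P(V_i\mid S_i^{good})>0$ when $q=0$, your write-up is, if anything, the cleaner of the two.
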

\begin{proof}
Suppose for a contradiction that $\P(V_i | S_i^{good}) \le q$.
Then by definition of $V_i$,
\begin{align*}
\P(V_i | S_i^{good}) &= 1-\P(\overbar{V_i} | S_i^{good})\\
&= 1-\P(\overbar{V_i} \cap S_i^{good} | S_i^{good})\\
&= 1-\P(\overbar{((S_i\setminus M_i)\cap R_{\eta}(f))} \cap S_i^{good} | S_i^{good})\\
&= 1-\P((M_i\cap S_i^{good})\cup \overbar{R_{\eta}(f)} | S_i^{good})
\end{align*}
Fix an arbitrary $\vx \in M_i \cap S_i^{good}$. By definition of $M_i$, $f(\vx) \ne y(\vx)$. By definition of $S_i^{good}$, $\hy(\vx) = y(\vx)$.
Hence $f(\vx) \ne \hy(\vx)$, so $\vx \in D_i$, and therefore $M_i\cap S_i^{good} \subset D_i$.
Then:
\begin{align*}
q \ge \P(V_i | S_i^{good}) &\ge 1-\P(D_i \cup \overbar{R_{\eta}(f)} | S_i^{good})\\
&= 1-(1-\alpha_i)\P((D_i \cup \overbar{R_{\eta}(f)}) \cap S_i^{good} | S_i)\\
&\ge 1-(1-\alpha_i)\P((D_i \cup \overbar{R_{\eta}(f)}) | S_i)\\
&= 1-(1-\alpha_i)\P(f(\rvx) \ne \hy(\rvx) \text{ or } f \text{ not } \eta\text{-robust at } \rvx | S_i)\\
&\ge 1 - (1-\alpha_i)(1-q-\alpha_i)\\
&= q + \alpha_i(2-q-\alpha_i)\\
&> q
\end{align*}
The second-to-last inequality used the assumption on $\P(f(\rvx) \ne \hy(\rvx) \text{ or } f \text{ not } \eta\text{-robust at } \rvx | S_i)$ and the final inequality used $q<1$ and $0<\alpha_i < 1/2$.
Assuming $\P(V_i | S_i^{good}) \le q$ thus leads to $q>q$, a contradiction.
So $\P(V_i | S_i^{good}) > q$.
\end{proof}

Recall that for a set $A \subset \cX$, we define $\widetilde{\cN}(A) \subset \cN(A)$ as the subset of points reachable from $A$ by a \textit{good edge} $f(\vx) = f(\vx')$. 
 By Lemma \ref{lemma:robust-nbhrd-enough-weight}, since $V_i \subset R_{\eta}(f)$,
\[
w(\widetilde{\cN}(V_i), V_i) \ge (1-\eta)w(\cN(V_i), V_i).
\]
Then by Lemma \ref{lemma:v-big-enough}, since $(S_i^{bad}, S_i^{good})$ satisfy $(c,q,\eta)$-robust expansion, 
\[
\P(\widetilde{\cN}(V_i) | S_i^{bad}) \ge P_{1-\eta}(V_i, S_i^{bad}) \ge c\P(V_i | S_i^{good}).
\]
Fix an arbitrary $\vx \in \widetilde{\cN}(V_i)$. By definition of $\widetilde{\cN}$, there exists $\vx' \in V_i$ such that $f(\vx) = f(\vx')$.
Since $\vx' \in V_i \subset U_i$, we have that $f(\vx) = f(\vx') = y(\vx')$ (since $\vx' \not\in M_i$), and $y(\vx')= y(\vx)$, since $\vx$ and $\vx'$ are both in $S_i$. 
This shows $f(\vx) = y(\vx)$ and therefore $\vx \not\in M_i$, so $\vx \in U_i = S_i\setminus M_i$.
Since $\vx$ was arbitrary, $\widetilde{\cN}(V_i) \subset U_i$.
Thus,
\begin{equation}
\label{eqn:first-pseudolabel-correction}
\P(U_i | S_i^{bad}) \ge c\P(V_i | S_i^{good}).
\end{equation}
Points $\vx \in U_i$ are correctly labeled by $f$, and points $\vx \in S_i^{bad}$ are \textit{incorrectly} labeled by the pseudolabeler $\hy$. This inequality thus guarantees that there must be some points that $f$ gets correct and $\hy$ gets incorrect. 
It also gives a quantitative lower bound on how much probability is assigned to these points.
Intuitively, \eqref{eqn:first-pseudolabel-correction} is already a ``pseudolabel correction'' result, and the remainder of the proof is deriving a final bound on the error from \eqref{eqn:first-pseudolabel-correction}.

The following lemma converts \eqref{eqn:first-pseudolabel-correction} into a relationship between $\P(U_i|S_i^{bad})$ and $\P(U_i \cap R_{\eta}(f)|S_i)$.
\begin{lemma}
\label{lemma:local-to-global}
    $\P(U_i | S_i^{bad}) \ge c'\P(U_i \cap R_{\eta}(f)|S_i) + c'\alpha_i \P(U_i \cap \overbar{R_{\eta}(f)} | S_i^{bad})$,
    where $c' = c/(1-\alpha_i+c\alpha_i)$.
\end{lemma}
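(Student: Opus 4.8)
The plan is to deduce the claimed inequality from the expansion consequence already derived in the proof, namely \eqref{eqn:first-pseudolabel-correction}, $\P(U_i | S_i^{bad}) \ge c\,\P(V_i | S_i^{good})$, using only elementary conditioning identities — this lemma is purely the ``local-to-global'' translation of that bound into a statement about conditioning on all of $S_i$, and no further use of the neighborhood structure is needed. First I would use that $\{S_i^{good}, S_i^{bad}\}$ partitions $S_i$ with $\P(S_i^{bad} | S_i) = \alpha_i$ and $\P(S_i^{good} | S_i) = 1-\alpha_i$, together with $U_i \subset S_i$, to write
\[
\P(U_i \cap R_{\eta}(f) | S_i) = (1-\alpha_i)\,\P(U_i\cap R_{\eta}(f) | S_i^{good}) + \alpha_i\,\P(U_i\cap R_{\eta}(f) | S_i^{bad}).
\]
Since $V_i = R_{\eta}(f)\cap U_i \cap S_i^{good}$, the first conditional probability on the right is exactly $\P(V_i | S_i^{good})$. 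Separately, I would split $\P(U_i | S_i^{bad})$ along the robust/non-robust dichotomy: $\P(U_i | S_i^{bad}) = \P(U_i\cap R_{\eta}(f) | S_i^{bad}) + \P(U_i\cap \overbar{R_{\eta}(f)} | S_i^{bad})$.

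Substituting these two observations into the right-hand side of the claimed inequality, the term $c'\alpha_i\,\P(U_i\cap R_{\eta}(f) | S_i^{bad})$ coming from the expansion of $\P(U_i\cap R_{\eta}(f)|S_i)$ and the term $c'\alpha_i\,\P(U_i\cap\overbar{R_{\eta}(f)} | S_i^{bad})$ combine to $c'\alpha_i\,\P(U_i | S_i^{bad})$, so the right-hand side equals $c'(1-\alpha_i)\,\P(V_i | S_i^{good}) + c'\alpha_i\,\P(U_i | S_i^{bad})$. Hence the claim is equivalent to
\[
(1 - c'\alpha_i)\,\P(U_i | S_i^{bad}) \ge c'(1-\alpha_i)\,\P(V_i | S_i^{good}).
\]
Finally I would substitute $c' = c/(1-\alpha_i+c\alpha_i)$ and simplify: a one-line computation gives $1 - c'\alpha_i = (1-\alpha_i)/(1-\alpha_i+c\alpha_i)$, which is strictly positive since $0<\alpha_i<1/2$ and $c>0$, whence $c'(1-\alpha_i)/(1-c'\alpha_i) = c$. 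Dividing the displayed inequality by $1-c'\alpha_i$ (direction preserved by positivity) turns it into exactly \eqref{eqn:first-pseudolabel-correction}, which has already been established.

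The argument is essentially bookkeeping, so I do not expect a genuine obstacle; the only points requiring care are keeping the robust-versus-non-robust and good-versus-bad decompositions straight when expanding the right-hand side, and verifying $1 - c'\alpha_i > 0$ before dividing. If desired, the algebra can be reorganized to avoid division entirely by instead multiplying \eqref{eqn:first-pseudolabel-correction} through by $c'(1-\alpha_i)/c = 1-c'\alpha_i$ and reading the steps backward.
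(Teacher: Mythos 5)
Your proof is correct and is essentially the paper's argument run in reverse: the paper starts from \eqref{eqn:first-pseudolabel-correction} and chains the same good/bad and robust/non-robust decompositions forward to reach the stated inequality, while you simplify the right-hand side of the claim and reduce it, via the identity $1-c'\alpha_i = (1-\alpha_i)/(1-\alpha_i+c\alpha_i)>0$, to \eqref{eqn:first-pseudolabel-correction}. No gap; the approaches are the same bookkeeping organized in opposite directions.
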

\begin{proof}
\begin{align*}
\P(U_i | S_i^{bad}) &\ge c\P(V_i | S_i^{good}) \qquad \text{(by \eqref{eqn:first-pseudolabel-correction})}\\
&= \frac{c}{1-\alpha_i}\P(V_i \cap S_i^{good} | S_i)\\
&= \frac{c}{1-\alpha_i}\P(U_i \cap R_{\eta}(f) \cap S_i^{good} | S_i)\\
&= \frac{c}{1-\alpha_i}\left(\P(U_i\cap R_{\eta}(f)|S_i) - \P(U_i \cap R_{\eta}(f) \cap S_i^{bad} | S_i)\right)
\end{align*}
Rearranging,
\begin{align*}
\frac{c}{1-\alpha_i}\P(U_i\cap R_{\eta}(f) | S_i) &\le \P(U_i | S_i^{bad}) + \frac{c}{1-\alpha_i}\P(U_i \cap R_{\eta}(f) \cap S_i^{bad} | S_i)\\
&= \P(U_i | S_i^{bad}) + \frac{c\alpha_i}{1-\alpha_i}\P(U_i \cap R_{\eta}(f)| S_i^{bad})\\
&= \P(U_i | S_i^{bad}) + \frac{c\alpha_i}{1-\alpha_i}\left(\P(U_i| S_i^{bad}) - \P(U_i \cap \overbar{R_{\eta}(f)} | S_i^{bad})\right)\\
&= \P(U_i|S_i^{bad})\left(1+\frac{c\alpha_i}{1-\alpha_i}\right) - \frac{c\alpha_i}{1-\alpha_i}\P(U_i \cap \overbar{R_{\eta}(f)} | S_i^{bad})
\end{align*}
And finally,
\[
\frac{c}{1-\alpha_i + c\alpha_i}\P(U_i \cap R_{\eta}(f) | S_i) + \frac{c\alpha_i}{1-\alpha_i + c\alpha_i}\P(U_i \cap \overbar{R_{\eta}(f)} | S_i^{bad}) \le \P(U_i | S_i^{bad}).
\]
\end{proof}

The following lemma decomposes the disagreement set $D_i$ into its components in $S_i^{good}$ and $S_i^{bad}$.
\begin{lemma}
\label{lemma:disagreement-decomposition}
$D_i \supset (M_i \cap S_i^{good}) \cup (U_i \cap S_i^{bad})$.
\end{lemma}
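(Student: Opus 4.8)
The plan is to show the two claimed pieces, $M_i \cap S_i^{good}$ and $U_i \cap S_i^{bad}$, are each contained in $D_i = \{\vx \in S_i : f(\vx) \ne \hy(\vx)\}$, and then take their union. This is a pointwise membership argument using only the definitions of the sets involved, so there is no real machinery required.

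First I would handle $M_i \cap S_i^{good}$. Fix $\vx$ in this set. By definition of $M_i$ we have $f(\vx) \ne y(\vx)$, and by definition of $S_i^{good}$ we have $\hy(\vx) = y(\vx)$. Chaining these gives $f(\vx) \ne \hy(\vx)$, and since $\vx \in S_i$, this shows $\vx \in D_i$. (This is exactly the inclusion $M_i \cap S_i^{good} \subset D_i$ already invoked inside the proof of Lemma \ref{lemma:v-big-enough}, so it can be cited rather than repeated.) Next I would handle $U_i \cap S_i^{bad}$. Fix $\vx$ in this set. Since $\vx \in U_i = S_i \setminus M_i$, we have $f(\vx) = y(\vx)$. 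Since $\vx \in S_i^{bad}$, by definition $\hy(\vx) \ne y(\vx)$. Combining, $f(\vx) = y(\vx) \ne \hy(\vx)$, so again $\vx \in D_i$.

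Having shown $M_i \cap S_i^{good} \subset D_i$ and $U_i \cap S_i^{bad} \subset D_i$, I would conclude $(M_i \cap S_i^{good}) \cup (U_i \cap S_i^{bad}) \subset D_i$, which is the claim.

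There is essentially no obstacle here: the only thing to be slightly careful about is that this is an inclusion ($D_i \supset \cdots$), not an equality — $D_i$ can also contain points of $M_i \cap S_i^{bad}$ (where $f$ is wrong on $y$ but may or may not agree with $\hy$) and points of $U_i \cap S_i^{good}$ are the only ones guaranteed \emph{not} to be in $D_i$; so no reverse inclusion should be attempted. The lemma will presumably be used downstream to lower-bound $\P(D_i \mid S_i)$ — i.e. $\err(f,\hy \mid S_i)$ — in terms of the gold-error quantity $\P(M_i \cap S_i^{good} \mid S_i)$ and the correction quantity $\P(U_i \cap S_i^{bad} \mid S_i)$, after noting the two sets on the right are disjoint (they lie in $S_i^{good}$ and $S_i^{bad}$ respectively), so that the probabilities simply add.
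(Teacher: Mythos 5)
Your argument is correct and matches the paper's proof essentially verbatim: both pieces are shown to lie in $D_i$ by the same pointwise chains ($f(\vx)\ne y(\vx)=\hy(\vx)$ on $M_i\cap S_i^{good}$ and $f(\vx)=y(\vx)\ne\hy(\vx)$ on $U_i\cap S_i^{bad}$), then the union is taken. Your closing remarks about disjointness and the downstream use of the lemma are accurate but not needed for the statement itself.
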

\begin{proof}
First, fix $\vx \in M_i \cap S_i^{good}$. 
Since $\vx \in M_i$, $f(\vx) \ne y(\vx)$. Since $\vx \in S_i^{good}$, $\hy(\vx) = y(\vx)$. Thus $f(\vx) \ne \hy(\vx)$, so $\vx \in D_i$.
Second, fix $\vx \in U_i \cap S_i^{bad}$. Since $\vx \in U_i = S_i \setminus M_i$, $f(\vx) = y(\vx)$. Since $\vx \in S_i^{bad}$, $\hy(\vx) \ne y(\vx)$.
Thus $f(\vx) \ne \hy(\vx)$, so $\vx \in D_i$.
\end{proof}
By Lemma \ref{lemma:disagreement-decomposition} and using $\P(S_i^{good} | S_i) = 1-\P(S_i^{bad}|S_i) = 1-\alpha_i$,
\begin{align*}
\P(D_i | S_i) &\ge \P(M_i|S_i^{good})(1-\alpha_i) + \P(U_i | S_i^{bad})\alpha_i
\end{align*}
Applying Lemma \ref{lemma:local-to-global},
\begin{equation}
\label{eqn:decomp-after-one-step}
\P(D_i | S_i) \ge (1-\alpha_i) \P(M_i|S_i^{good}) + c'\alpha_i\left(\P(U_i\cap R_{\eta}(f)|S_i) + \alpha_i\P(U_i \cap \overbar{R_{\eta}(f)}|S_i^{bad})\right).
\end{equation}
Applying Lemma \ref{lemma:local-to-global} again,
\begin{align*}
\P(U_i | S_i) &= \alpha_i \P(U_i|S_i^{bad}) + (1-\alpha_i)\P(U_i | S_i^{good})\\
&\ge c'\alpha_i\left(\P(U_i\cap R_{\eta}(f) | S_i) + \alpha_i\P(U_i \cap \overbar{R_{\eta}(f)} | S_i^{bad})\right) + (1-\alpha_i)\P(U_i|S_i^{good}),
\end{align*}
so we have 
\begin{equation*}
    \P(U_i|S_i^{good}) \le \frac{1}{1-\alpha_i}\left(\P(U_i|S_i) - c'\alpha_i\left(\P(U_i \cap R_{\eta}(f) | S_i) + \alpha_i\P(U_i \cap \overbar{R_{\eta}(f)} | S_i^{bad})\right)\right)
\end{equation*}
Combining this with $U_i = S_i \setminus M_i$,
\begin{align*}
\P(M_i|S_i^{good}) &= 1-\P(U_i | S_i^{good})\\
&\ge 1- \frac{1}{1-\alpha_i}\left(\P(U_i|S_i) - c'\alpha_i\left(\P(U_i\cap R_{\eta}(f) | S_i) + \alpha_i\P(U_i \cap \overbar{R_{\eta}(f)} | S_i^{bad})\right)\right)
\end{align*}
Plugging this into \eqref{eqn:decomp-after-one-step} gives:
\begin{align*}
\P(D_i|S_i) &\ge (1-\alpha_i) - \P(U_i|S_i) + 2c'\alpha_i\left(\P(U_i \cap R_{\eta}(f) | S_i) + \alpha_i\P(U_i \cap \overbar{R_{\eta}(f)} | S_i^{bad})\right)\\
&\ge (1-\alpha_i) - (1-\P(M_i|S_i)) + 2c'\alpha_i\left(1-\P(M_i\cup \overbar{R_{\eta}(f)}|S_i) + \P(U_i \cap \overbar{R_{\eta}(f)} \cap S_i^{bad} | S_i) \right)\\
&= \P(M_i|S_i) + \alpha_i(2c'-1) + 2c'\alpha_i\left(\P(U_i \cap \overbar{R_{\eta}(f)} \cap S_i^{bad} | S_i) - \P(M_i\cup \overbar{R_{\eta}(f)}|S_i)\right)\\
&\ge \P(M_i|S_i) + \alpha_i(2c'-1) - 2c'\alpha_i\P(M_i\cup \overbar{R_{\eta}(f)}|S_i)
\end{align*}
Let $p = \P(M_i|S_i)$.
Then, using the union bound for the last term, 
\begin{align*}
\P(D_i|S_i) &\ge p + \alpha_i(2c' - 1) - 2c'\alpha_i(p + \P(\overbar{R_{\eta}(f)}|S_i))\\
&= (1-2c'\alpha_i)p + \alpha_i(2c'-1) - 2c'\alpha_i\P(\overbar{R_{\eta}(f)}|S_i)
\end{align*}
Since $\P(D_i|S_i) = \err(f,\hy|S_i)$, rearranging terms we get:
\[
\err(f,y|S_i) = \P(M_i|S_i) \le \frac{\err(f,\hy|S_i) - \alpha_i(2c'-1) + 2c'\alpha_i\P(\overbar{R_{\eta}(f)}|S_i)}{1-2c'\alpha_i}.
\]  
\end{proof}
\subsection{Coverage expansion}
We directly prove the ``average-case-robust'' version of Theorem \ref{thm:covex-pseudorandom-advrobust}. Theorem \ref{thm:covex-pseudorandom-advrobust} then follows from the equivalence between expansion and robust expansion, and adversarial robustness and $\eta$-robustness, when $\eta=0$.
The following theorem also allows for $T_i$ to expand to $S_i^{good}$ and $S_i^{bad}$ in different amounts, controlled by the two expansion parameters $c_1$ and $c_2$.
This allows for empirically tighter bounds and is used in our experiments in Section \ref{sec:experiments}.
Theorem \ref{thm:covex-pseudorandom-advrobust} is a special case of Theorem \ref{thm:covex-pseudorandom-avgrobust-diffc} taking $\eta=0$ and $c_1=c_2$.

\begin{theorem}[Average-case-robust version of Theorem \ref{thm:covex-pseudorandom-advrobust}]
\label{thm:covex-pseudorandom-avgrobust-diffc}
Suppose there exists $c_1,c_2,q,\eta > 0$ such that $M_{\eta}(T_i, \mathcal{F}, y)$ satisfies $(c_1,q,\eta)$-expansion on $(S_i^{good}, T_i)$ and $M_{\eta}'(T_i, \mathcal{F}, y)$ satisfies $(c_2,q,\eta)$-expansion on $(S_i^{bad}, T_i)$.
Fix an arbitrary classifier $f: \cX \to \cY$ that satisfies:
\[
\err(f,\hy|S_i) + \P(\overbar{R_{\eta}(f)} | T_i) < c_1(1-q-\alpha_i)
\]
Let $\bar c = \max\{c_1,c_2\}$ and $\underline c = \min\{c_1,c_2\}$. Then the error of $f$ on $T_i$ is at most:
\[
\err(f,y | T_i) \le \left(1 + \frac{\alpha_i}{\underline c / \bar c -2\alpha_i}\right)\P(R_{\eta}(f)|T_i) + \max\left(q, \frac{\err(g,\hy| S_i) - c_2\alpha_i}{c_1-(c_1+c_2)\alpha_i)}\right).
\]
Moreover, if $f$ satisfies 
\[
\E_{\rvx\sim \cD|T_i, \rvx'\sim \cD|\cN(\rvx)}[f(\rvx) \ne f(\rvx')] \le \gamma,
\]
then the error of $f$ on $T_i$ is bounded by:
\[
\err(f,y | T_i) \le \left(1 + \frac{\alpha_i}{\underline c / \bar c -2\alpha_i}\right)\frac{\gamma}{\eta} + \max\left(q, \frac{\err(g,\hy| S_i) - c_2\alpha_i}{c_1-(c_1+c_2)\alpha_i)}\right).
\]
\end{theorem}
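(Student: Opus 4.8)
The plan is to control the \emph{robust} mistake fraction of $f$ on $T_i$ and absorb the non-robust part into $\P(\overbar{R_{\eta}(f)}|T_i)$. I would set $M_i=\{\vx\in T_i: f(\vx)\ne y(\vx)\}$, $W_i=M_i\cap R_{\eta}(f)$ (robust mistakes), $U_i=(T_i\setminus M_i)\cap R_{\eta}(f)$ (robust non-mistakes), and $D_i=\{\vx\in S_i: f(\vx)\ne\hy(\vx)\}$ (disagreement on the covered set). Two elementary facts drive everything: $\P(M_i|T_i)\le\P(W_i|T_i)+\P(\overbar{R_{\eta}(f)}|T_i)$, and $\P(W_i|T_i)+\P(U_i|T_i)=\P(R_{\eta}(f)|T_i)=1-\P(\overbar{R_{\eta}(f)}|T_i)$ since $W_i,U_i$ partition $T_i\cap R_{\eta}(f)$. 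Note $W_i\in\cM_{\eta}(T_i,\cF)$ and $U_i\in\cM_{\eta}'(T_i,\cF)$, so both are eligible to expand.

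The core step I would prove is: if $\P(W_i|T_i)>q$ then $\P(D_i|S_i^{good})>c_1\P(W_i|T_i)$, and symmetrically if $\P(U_i|T_i)>q$ then $\P(D_i|S_i^{bad})>c_2\P(U_i|T_i)$. For the first, robust expansion on $(S_i^{good},T_i)$ gives $P_{1-\eta}(W_i,S_i^{good})>c_1\P(W_i|T_i)$; since $W_i\subset R_{\eta}(f)$, Lemma~\ref{lemma:robust-nbhrd-enough-weight} shows the good-edge neighborhood $\widetilde{\cN}(W_i)$ carries at least a $(1-\eta)$ fraction of the edge weight into $W_i$, so it is feasible in the minimization defining $P_{1-\eta}(W_i,S_i^{good})$, hence $\P(\widetilde{\cN}(W_i)|S_i^{good})>c_1\P(W_i|T_i)$. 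Then every $\vx\in\widetilde{\cN}(W_i)\cap S_i^{good}$ lies in $D_i$: it has a good neighbor $\vx'\in W_i$ with $f(\vx)=f(\vx')$, and since $\vx,\vx'$ share the label-constant block $\cX_i$ and $f(\vx')\ne y(\vx')=y(\vx)$, we get $f(\vx)\ne y(\vx)$, while $\hy(\vx)=y(\vx)$ because $\vx\in S_i^{good}$. The symmetric claim uses that a good neighbor in $S_i^{bad}$ of a robust non-mistake is itself a non-mistake of $f$ carrying the wrong pseudolabel.

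With these in hand I would split on whether $\P(W_i|T_i)>q$. If not, $\err(f,y|T_i)\le q+\P(\overbar{R_{\eta}(f)}|T_i)$, which the claimed bound dominates (its coefficient on $\P(\overbar{R_{\eta}(f)}|T_i)$ is at least $1$ and the $\max$ is at least $q$). If $\P(W_i|T_i)>q$, I would first rule out $\P(U_i|T_i)\le q$: that would force $\P(W_i|T_i)\ge 1-q-\P(\overbar{R_{\eta}(f)}|T_i)$, whence $\err(f,\hy|S_i)\ge(1-\alpha_i)\P(D_i|S_i^{good})>(1-\alpha_i)c_1\bigl(1-q-\P(\overbar{R_{\eta}(f)}|T_i)\bigr)$ contradicts the hypothesis $\err(f,\hy|S_i)+\P(\overbar{R_{\eta}(f)}|T_i)<c_1(1-q-\alpha_i)$ after a short manipulation using $c_1\le 1$ and $q>0$. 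So both sets expand, and decomposing the disagreement rate over $S_i^{good}$ and $S_i^{bad}$,
\[ \err(f,\hy|S_i) > (1-\alpha_i)c_1\,\P(W_i|T_i) + \alpha_i c_2\,\P(U_i|T_i). \]
Substituting $\P(U_i|T_i)=1-\P(\overbar{R_{\eta}(f)}|T_i)-\P(W_i|T_i)$ and solving for $\P(W_i|T_i)$ — the coefficient $(1-\alpha_i)c_1-\alpha_i c_2=c_1-(c_1+c_2)\alpha_i$ is positive under the hypotheses — yields
\[ \P(W_i|T_i)\le\frac{\err(f,\hy|S_i)-c_2\alpha_i+c_2\alpha_i\,\P(\overbar{R_{\eta}(f)}|T_i)}{c_1-(c_1+c_2)\alpha_i}. \]
Adding $\P(\overbar{R_{\eta}(f)}|T_i)$ and using $\tfrac{c_2}{c_1-(c_1+c_2)\alpha_i}\le\tfrac{1}{\underline c/\bar c-2\alpha_i}$ (a two-line check separating $c_1\ge c_2$ from $c_1<c_2$) produces exactly the stated coefficient on $\P(\overbar{R_{\eta}(f)}|T_i)$, and the leftover constant is at most $\max\bigl(q,\tfrac{\err(f,\hy|S_i)-c_2\alpha_i}{c_1-(c_1+c_2)\alpha_i}\bigr)$, covering both cases. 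The ``moreover'' statement then follows by replacing $\P(\overbar{R_{\eta}(f)}|T_i)$ with $\gamma/\eta$ via Lemma~\ref{lemma:avg-robust-implies-eta-robust}.

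The hard part will be the two-sided bookkeeping: one must use \emph{both} expansions (robust mistakes of $f$ on $T_i$ into $S_i^{good}$, and robust non-mistakes into $S_i^{bad}$), since a single expansion only gives a bound that fails to correct for the teacher error $\alpha_i$ and hence does not beat trivial guessing. The delicate points are keeping the denominator $c_1-(c_1+c_2)\alpha_i$ positive, checking that the $\P(\overbar{R_{\eta}(f)}|T_i)$ coefficient collapses to the symmetric form $1+\alpha_i/(\underline c/\bar c-2\alpha_i)$ for both orderings of $c_1,c_2$, and invoking Lemma~\ref{lemma:robust-nbhrd-enough-weight} carefully so that \emph{robust} (rather than ordinary) expansion is what is actually used.
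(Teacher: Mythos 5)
Your proposal is correct and follows essentially the same route as the paper's proof of Theorem \ref{thm:covex-pseudorandom-avgrobust-diffc}: the same decomposition into robust mistakes and robust non-mistakes on $T_i$ (your $W_i,U_i$ are the paper's $U_i,V_i$), the same use of Lemma \ref{lemma:robust-nbhrd-enough-weight} to make the good-edge neighborhoods eligible for robust expansion, the same weighted disagreement inequality $\err(f,\hy|S_i)\ge(1-\alpha_i)c_1\P(W_i|T_i)+\alpha_i c_2\P(U_i|T_i)$, the same contradiction ruling out the mixed case, and the same coefficient comparison to reach $1+\alpha_i/(\underline c/\bar c-2\alpha_i)$, with the ``moreover'' part via Lemma \ref{lemma:avg-robust-implies-eta-robust}. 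The only difference is cosmetic organization of the case analysis.
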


\begin{proof}
Let $M_i = \{\vx \in T_i : f(\vx) \ne y(\vx)\}$ and $C_i = T_i \setminus M_i = \{\vx \in T_i : f(\vx) = y(\vx)\}$.
Define their robust subsets as $U_i = M_i \cap R_{\eta}(f)$ and $V_i = C_i \cap R_{\eta}(f)$.
We have $U_i \in \cM_\eta(T_i, \cF)$ and $V_i \in \cM'_\eta(T_i, \cF)$.
Recall that for a set $A$, $\widetilde\cN(A)$ is the subset of $\cN(A)$ consisting of neighbors reachable by good edges (edges where $f$ assigns the same label to both endpoints).
Since $U_i \subset R_{\eta}(f)$, Lemma \ref{lemma:robust-nbhrd-enough-weight} implies that $\widetilde{\cN}(U_i)$ has enough weight to qualify as one of the sets in the definition of the robust neighborhood size $P_{1-\eta}(U, S_i^{good})$.
In particular, it implies that $\P(\widetilde{\cN}(U_i) | S_i^{good}) \ge P_{1-\eta}(U_i, S_i^{good})$.
Similarly, $\widetilde{\cN}(V_i)$ satisfies $\P(\widetilde{\cN}(V_i) | S_i^{bad}) \ge P_{1-\eta}(V_i, S_i^{bad})$.
There are now three cases to consider:

\textbf{Case 1: $\P(U_i | T_i) > q$, $\P(V_i | T_i) > q$.}
In this case, the expansion assumptions imply:
\begin{align*}
\P(\widetilde{\cN}(U_i) | S_i^{good}) &\ge c_1\P(U_i | T_i)\\
\P(\widetilde{\cN}(V_i) | S_i^{bad}) &\ge c_2\P(V_i | T_i)
\end{align*}
For all $\vx \in \widetilde{\cN}(U_i) \cap S_i^{good}$ there exists $\vx'\in U_i$ with $f(\vx) = f(\vx')$.
Then:
    \[
      f(\vx) = 
      \mathrlap{\underbrace{\phantom{f(\vx') \ne y(\vx')}}_{\vx' \in U_i}}
      f(\vx') \ne
      \mathrlap{\overbrace{\phantom{y(\vx') = y(\vx)}}^{\text{Def. of } S_i, T_i}}
      y(\vx') = \underbrace{y(\vx) = \hy(\vx),}_{\vx\in S_i^{good}}
    \]
so $f(\vx) \ne \hy(\vx)$.
Similarly, for all $\vx \in \widetilde{\cN}(V_i) \cap S_i^{bad}$ there exists $\vx' \in V_i$ with $f(\vx) = f(\vx')$.
Then \[
      f(\vx) = 
      \mathrlap{\underbrace{\phantom{f(\vx') = y(\vx')}}_{\vx' \in V_i}}
      f(\vx') =
      \mathrlap{\overbrace{\phantom{y(\vx') = y(\vx)}}^{\text{Def. of } S_i, T_i}}
      y(\vx') = \underbrace{y(\vx) \ne \hy(\vx),}_{\vx\in S_i^{bad}}
      \]

so $f(\vx) \ne \hy(\vx)$.

Let $D_i = \{\vx \in S_i : f(\vx) \ne \hy(\vx)\}$ be the set of points in $S_i$ where $f$ and the teacher $\hy$ disagree.
These arguments showed that $\widetilde{\cN}(U_i) \cap S_i^{good} \subset D_i$ and $\widetilde{\cN}(V_i) \cap S_i^{bad} \subset D_i$.
Since these sets are disjoint, 
\begin{align*}
\P(D_i | S_i) &\ge \P(\widetilde{\cN}(U_i) \cap S_i^{good} | S_i) + \P(\widetilde{\cN}(U_i) \cap S_i^{bad} | S_i)\\
&= \P(\widetilde{\cN}(U_i) | S_i^{good})(1-\alpha_i) + \P(\widetilde{\cN}(U_i) | S_i^{bad})\alpha_i\\
&\ge c_1\P(U_i | T_i)(1-\alpha_i) + c_2\P(V_i | T_i)\alpha_i.
\end{align*}
Since $U_i \cup V_i = R_{\eta}(f) \cap T_i$ and $U_i$ and $V_i$ are disjoint subsets of $T_i$,
\[
\P(V_i | T_i) = \P(R_\eta(f) | T_i) - \P(U_i | T_i).
\]
Plugging this into the previous inequality and simplifying gives:
\begin{align*}
\P(U_i | T_i) \le \frac{\P(D_i | S_i) - c_2\alpha_i \P(R_{\eta}(f) | T_i)}{c_1-(c_1+c_2)\alpha_i} &= \frac{\P(D_i | S_i) - c_2\alpha_i  + c_2\alpha_i\P(\overbar{R_\eta(f)} | T_i)}{c_1-(c_1+c_2)\alpha_i}\\
&\le \frac{\P(D_i | S_i) - c_2\alpha_i}{c_1-(c_1+c_2)\alpha_i} + \frac{\alpha_i}{\underline c / \bar c -2\alpha_i}\P(\overbar{R_{\eta}(f)} | T_i).
\end{align*}
We can then bound $\P(M_i | T_i) = \P(U_i | T_i) + \P(M_i \cap \overbar{R_\eta(f)} | T_i) \le \P(U_i | T_i) + \P(\overbar{R_\eta(f)} | T_i)$,
so all combined, we have:
\[
\P(M_i | T_i) \le \left(1 + \frac{\alpha_i}{\underline c /\bar c -2\alpha_i}\right)\P(\overbar{R_{\eta}(f)} | T_i) + \frac{\P(D_i | S_i) - c_2\alpha_i}{c_1-(c_1+c_2)\alpha_i}
\]

\textbf{Case 2: $\P(U_i | T_i) \le q$.}
Here we directly upper-bound $\P(U_i | T_i)$ with $q$ and $\P(M_i | T_i) \le \P(U_i | T_i) + \P(\overbar{R_{\eta}(f)} | T_i) \le q + \P(\overbar{R_{\eta}(f)} | T_i)$.

\textbf{Case 3: $\P(U_i | T_i) > q$, $\P(V_i | T_i) \le q$.} Since $\P(U_i | T_i) + \P(V_i | T_i) = \P(R_\eta(f) | T_i)$, in this case we have $\P(U_i | T_i) \ge \P(R_\eta(f) | T_i) - q$.
Since $\P(U_i | T_i) > q$, $U_i$ expands, so as in Case 1 we have:
\[
\P(\widetilde{\cN}(U_i) | S_i^{good}) \ge c_1\P(U_i|T_i),
\]
and therefore:
\[
\P(D_i | S_i) \ge \P(\widetilde{\cN}(U_i) | S_i^{good})(1-\alpha_i) \ge c_1(1-\alpha_i)\P(U_i|T_i) \ge c_1(1-\alpha_i)(\P(R_{\eta}(f) | T_i) - q).
\]
But $\P(D_i | S_i) = \err(g,\hy|S_i)$ and we assumed:
\[
\err(g,\hy | S_i) + \P(\overbar{R_{\eta}(f)}|T_i) < c_1(1-q-\alpha_i) \le c_1(1-q)(1-\alpha_i),
\]
since $c_1 \le 1$, this implies
\[
\err(g,\hy | S_i) + c_1(1-\alpha_i)\P(\overbar{R_{\eta}(f)}|T_i) < c_1(1-q)(1-\alpha_i),
\]
so:
\begin{align*}
\err(g,\hy | S_i) &< c_1(1-\P(\overbar{R_{\eta}(f)}|T_i)-q)(1-\alpha_i)\\
&= c_1(\P(R_{\eta}(f)|T_i)-q)(1-\alpha_i).
\end{align*}
So Case 3 leads to a contradiction.
Combining Cases 1 and 2 yields the final bound.
The second part follows directly from Lemma \ref{lemma:avg-robust-implies-eta-robust}.
\end{proof}

If $c_1 = c_2 \equiv c$, we have $\underline c / \bar c  = 1$ and $c_1 - (c_1+c_2)\alpha_i = c(1-2\alpha_i)$, which gives a simpler bound for coverage expansion of average-case-robust classifiers. When $\eta=0$, $\P(R_{\eta}(f) | T_i) := \P(R(f)|T_i)$ by definition, and $(c,q,\eta)$-expansion is equivalent to $(c,q)$-expansion.
These two simplifications directly yield Theorem \ref{thm:covex-pseudorandom-advrobust} for coverage expansion of adversarially robust classifiers.

The following theorem gives a more loose bound that only assumes expansion from $T_i$ to $S_i^{good}$.
\begin{theorem}[Coverage expansion, weaker assumptions]
\label{thm:covex-direct-genrobust-apdx}
Suppose $\cM_{\eta}(T_i,\cF)$ satisfies  $(c,q,\eta)$-robust expansion on $(S_i^{good}, T_i)$ for some $c > 0$.
Fix an arbitrary classifier $f: \cX \to \cY$.
The error of $f$ on $T_i$ is bounded by:
\[
\err(f,y | T_i) \le \P(\overbar{R_{\eta}(f)} | T_i) + \max\left(q, \frac{\err(f,\hy| S_i)}{c(1-\alpha_i)}\right).
\]
\end{theorem}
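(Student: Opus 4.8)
The plan is to mirror the structure of the proof of Theorem~\ref{thm:covex-pseudorandom-avgrobust-diffc}, but with a single expansion hypothesis (from $T_i$ to $S_i^{good}$) in place of two, which collapses the three-case analysis there into two cases. First I would set $M_i = \{\vx \in T_i : f(\vx) \ne y(\vx)\}$, so that $\err(f,y\mid T_i) = \P(M_i \mid T_i)$, and let $U_i = M_i \cap R_{\eta}(f)$ be its $\eta$-robust part. By construction $U_i \in \cM_{\eta}(T_i,\cF)$, so it is eligible to expand. Since $\P(M_i\mid T_i) \le \P(U_i\mid T_i) + \P(\overbar{R_{\eta}(f)}\mid T_i)$, the whole argument reduces to showing $\P(U_i\mid T_i) \le \max\!\bigl(q,\ \err(f,\hy\mid S_i)/(c(1-\alpha_i))\bigr)$.

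The case split is on whether $\P(U_i\mid T_i) > q$. If $\P(U_i\mid T_i) \le q$ we are immediately done, since then $\P(M_i\mid T_i) \le q + \P(\overbar{R_{\eta}(f)}\mid T_i)$, which is dominated by the claimed bound. If $\P(U_i\mid T_i) > q$, robust expansion of $\cM_\eta(T_i,\cF)$ on $(S_i^{good},T_i)$ gives $P_{1-\eta}(U_i,S_i^{good}) > c\,\P(U_i\mid T_i)$. Because $U_i \subset R_{\eta}(f)$, Lemma~\ref{lemma:robust-nbhrd-enough-weight} shows the good-edge neighborhood $\widetilde{\cN}(U_i)$ captures at least a $(1-\eta)$ fraction of the edge weight incident on $U_i$, hence $\widetilde{\cN}(U_i)$ is feasible in the minimization defining $P_{1-\eta}(U_i,S_i^{good})$, and so $\P(\widetilde{\cN}(U_i)\mid S_i^{good}) \ge P_{1-\eta}(U_i,S_i^{good}) > c\,\P(U_i\mid T_i)$.

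Next I would show $\widetilde{\cN}(U_i) \cap S_i^{good} \subset D_i$, where $D_i = \{\vx \in S_i : f(\vx) \ne \hy(\vx)\}$: for $\vx \in \widetilde{\cN}(U_i) \cap S_i^{good}$ there is a good edge to some $\vx' \in U_i \subset M_i$, so $f(\vx) = f(\vx') \ne y(\vx')$; since $S_i$ and $T_i$ both lie in the block $\cX_i$ on which $y$ is constant, $y(\vx') = y(\vx) = \hy(\vx)$ (the last equality because $\vx \in S_i^{good}$), whence $f(\vx) \ne \hy(\vx)$. Converting the conditioning from $S_i^{good}$ to $S_i$ via $\P(E \cap S_i^{good}\mid S_i) = (1-\alpha_i)\,\P(E\mid S_i^{good})$ then gives $\err(f,\hy\mid S_i) = \P(D_i\mid S_i) \ge (1-\alpha_i)\,\P(\widetilde{\cN}(U_i)\mid S_i^{good}) > c(1-\alpha_i)\,\P(U_i\mid T_i)$, i.e.\ $\P(U_i\mid T_i) < \err(f,\hy\mid S_i)/(c(1-\alpha_i))$. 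Combining the two cases yields the desired bound on $\P(U_i\mid T_i)$, and adding $\P(\overbar{R_{\eta}(f)}\mid T_i)$ finishes. The adversarially robust statement follows by taking $\eta=0$, as elsewhere.

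I do not expect a serious obstacle here — this is the ``easy direction'' relative to Theorem~\ref{thm:covex-pseudorandom-avgrobust-diffc}, as it uses no structure about the $S_i^{bad}$ side. The only steps needing care are the invocation of Lemma~\ref{lemma:robust-nbhrd-enough-weight} to pass from the abstract robust-neighborhood-size quantity $P_{1-\eta}$ to the concrete good-edge set $\widetilde{\cN}(U_i)$, and keeping the $S_i^{good}$-versus-$S_i$ conditioning bookkeeping straight; everything else is the routine case analysis that produces the $\max$ in the statement.
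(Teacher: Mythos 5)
Your proposal is correct and follows essentially the same route as the paper's proof: the same reduction to bounding $\P(U_i \mid T_i)$ for the robust mistake set $U_i = M_i \cap R_{\eta}(f)$, the same use of Lemma~\ref{lemma:robust-nbhrd-enough-weight} to make $\widetilde{\cN}(U_i)$ feasible for $P_{1-\eta}(U_i, S_i^{good})$, the same inclusion $\widetilde{\cN}(U_i) \cap S_i^{good} \subset D_i$, and the same two-case split on whether $\P(U_i \mid T_i) > q$. No gaps; the bookkeeping with the $(1-\alpha_i)$ factor matches the paper exactly.
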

\begin{proof}
Define $M_i = \{\vx : f(\vx) \ne y(\vx)\} \cap T_i$ as the set of mistakes of $f$ in $T_i$, and let $U_i = M_i \cap R_{\eta}(f)$. Let $D_i = \{\vx \in S_i : f(\vx) \ne \hy(\vx)\}$ be the set of points in $S_i$ where $f$ disagrees with the weak labels.
Note that $\err(f,\hy|S_i) = \P(D_i | S_i)$ and $\err(f,y | T_i) = \P(M_i | T_i) \le \P(U_i | T_i) + \P(\overbar{R_{\eta}(f)} | T_i)$.
So the goal is to bound $\P(U_i | T_i)$.
Recall that $\widetilde\cN(U)$ is the subset of $\cN(U)$ consisting of neighbors reachable by good edges (edges where $f$ assigns the same label to both endpoints).
Since $U_i \subset R_{\eta}(f)$, Lemma \ref{lemma:robust-nbhrd-enough-weight} implies that $\widetilde{\cN}(U_i)$ has enough weight to qualify as one of the sets $V$ in the definition of the robust neighborhood size $P_{1-\eta}(U, S_i^{good})$.
In particular, it implies that $\P(\widetilde{\cN}(U_i) | S_i^{good}) \ge P_{1-\eta}(U, S_i^{good})$.
Note that $U_i \in \cM_{\eta}(T_i, \cF)$ since it is a mistake set intersected with a robust set.
Then, if $\P(U_i | T_i) > q$, $(c,q,\eta)$-robust expansion implies that 
\[
\P(\widetilde{\cN}(U_i) | S_i^{good}) \ge P_{1-\eta}(U, S_i^{good}) > c\P(U_i | T_i).
\]
We now proceed in two cases.
\paragraph{Case 1: $\P(U_i | T_i) > q$.}
In this case, because $(S_i^{good}, T_i)$ satisfy $(c,q,\eta)$-robust expansion, as discussed above, we know that
$\P(\widetilde{\cN}(U_i) | S_i^{good}) \ge c\P(U_i | T_i)$, and thus
\[
\P(\widetilde{\cN}(U_i) \cap S_i^{good} | S_i) \ge c(1-\alpha_i)\P(U_i|T_i).
\]
Fix $\vx \in \widetilde{\cN}(U_i) \cap S_i^{good}$. 
By the definition of $\widetilde{\cN}(U_i)$, there must be a point $\vx' \in U_i$ reachable from $\vx$ by a good edge.
That is, there is a point $\vx' \in U_i$ such that $f(\vx) = f(\vx')$.
Then the following set of inequalities holds:
\[
\mathrlap{\overbrace{\phantom{\hy(\vx) = y(\vx)}}^{\vx \in S_i^{good}}}
      \hy(\vx) = 
      \mathrlap{\underbrace{\phantom{y(\vx) = y(\vx')}}_{\text{Defn of } S_i,\ T_i}}
      y(\vx) = 
      \mathrlap{\overbrace{\phantom{y(\vx') \ne f(\vx')}}^{\vx' \in M_i}}
      y(\vx') \ne \underbrace{f(\vx') = f(\vx).}_{\text{Defn of } \widetilde{\cN}}
\]
This shows $f(\vx) \ne \hy(\vx)$, so $\vx \in D_i$. 
Hence $\widetilde\cN(U_i) \cap S_i^{good} \subset D_i$ and we have:
\[
\err(f,\hy|S_i) = \P(D_i | S_i) \ge \P(\widetilde\cN(U_i) \cap S_i^{good} | S_i) \ge c(1-\alpha_i)\P(U_i | T_i),
\]
so \[
\err(f,y|T_i) \le \P(\overbar{R_{\eta}(f)}|T_i) + \frac{\err(f,\hy|S_i)}{c(1-\alpha_i)}.
\]
\paragraph{Case 2: $\P(U_i | T_i) \le q$.}
In this case we directly use the bound $\P(U_i | T_i) \le q$, so we get $\err(g,y | T_i) \le \P(\overbar{R_{\eta}(f)}|T_i) + q$.
Combining the two cases yields the theorem.
\end{proof}

\subsection{Proposition \ref{prop:existing-bound}}
\label{apdx:prop-proofs}
To prove Proposition \ref{prop:existing-bound}, we reproduce (a tighter, intermediate version of) Theorem 3 from \citet{fu2020fast} and translate it to our notation, then simplify the bound in the setting of the example from Section \ref{sec:background}.

The goal of \citet{fu2020fast} is to estimate a set of graphical model parameters $\vmu$ such that, given a vector of labeling function outputs $\vlambda(\vx)$, the graphical model $\P_{\vmu}(\ry | \vlambda(\rvx))$ approximates $\P(\ry | \rvx)$.
A classifier is then trained on weak labels sampled from $\P_{\vmu}(\ry | \vlambda(\rvx))$.
Define $\err(f) = \P_{(\rvx, \ry)\sim \cD}[f(\rvx) \ne \ry]$ and $\err_{\vmu}(f) = \E_{(\rvx, \ry)\sim \cD}[\P_{\tilde{\ry}\sim \P_{\vmu}(\cdot | \vlambda(\rvx))}[f(\rvx) \ne \tilde{\ry}]]$.

\begin{theorem}{\citet[Theorem 3, intermediate step, p.39]{fu2020fast}}
Let $f: \cX \to \cY$ be an arbitrary classifier.
Then:
\begin{equation}
\label{eqn:flyingsquid-bound}
\err(f) \le \err_{\vmu}(f) + \E_{(\rvx',\ry') \sim \cD}\left[\sum_{y}|\P(\ry = y | \rvx = \rvx') - \P_{\vmu}(\ry = y | \lambda(\rvx) = \lambda(\rvx'))|\right].
\end{equation}
\end{theorem}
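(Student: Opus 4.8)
The plan is to establish \eqref{eqn:flyingsquid-bound} pointwise in $\vx$ and then take an expectation. First I would rewrite both error terms as expectations over the marginal law of $\rvx$: by the tower rule $\err(f) = \E_{\rvx\sim\cD}[\P(f(\rvx)\ne\ry \mid \rvx)]$, and since the inner probability in the definition of $\err_{\vmu}(f)$ does not involve $\ry$, also $\err_{\vmu}(f) = \E_{\rvx\sim\cD}[\P_{\tilde\ry\sim\P_{\vmu}(\cdot\mid\vlambda(\rvx))}[f(\rvx)\ne\tilde\ry]]$. Next, fixing $\rvx=\vx$ and using that $f(\vx)$ is a single label in $\cY$, I would expand the conditional $0/1$ losses: $\P(f(\vx)\ne\ry\mid\rvx=\vx) = \sum_{y\in\cY}\P(\ry=y\mid\rvx=\vx)\,\ind[f(\vx)\ne y] = 1 - \P(\ry=f(\vx)\mid\rvx=\vx)$, and identically $\P_{\tilde\ry\sim\P_{\vmu}(\cdot\mid\vlambda(\vx))}[f(\vx)\ne\tilde\ry] = 1 - \P_{\vmu}(\ry=f(\vx)\mid\vlambda(\vx))$.

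Subtracting these two identities, the $1$'s cancel, so for every $\vx$ we get $\P(f(\rvx)\ne\ry\mid\rvx=\vx) - \P_{\tilde\ry\sim\P_{\vmu}(\cdot\mid\vlambda(\vx))}[f(\vx)\ne\tilde\ry] = \P_{\vmu}(\ry=f(\vx)\mid\vlambda(\vx)) - \P(\ry=f(\vx)\mid\rvx=\vx)$. I would bound the right-hand side by its absolute value and then by the full sum over labels, since one term of a sum of nonnegative reals is at most the whole sum: $|\P_{\vmu}(\ry=f(\vx)\mid\vlambda(\vx)) - \P(\ry=f(\vx)\mid\rvx=\vx)| \le \sum_{y\in\cY}|\P(\ry=y\mid\rvx=\vx) - \P_{\vmu}(\ry=y\mid\vlambda(\vx))|$. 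This is the desired pointwise inequality (note the right side is actually twice the total variation distance between the two posteriors at $\vx$, so there is harmless slack).

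Finally I would take $\E_{\rvx\sim\cD}$ of the pointwise inequality: by the first step the left-hand side becomes $\err(f) - \err_{\vmu}(f)$, and the right-hand side becomes $\E_{\rvx'\sim\cD}[\sum_{y\in\cY}|\P(\ry=y\mid\rvx=\rvx') - \P_{\vmu}(\ry=y\mid\vlambda(\rvx)=\vlambda(\rvx'))|]$, which equals the expectation over $(\rvx',\ry')\sim\cD$ appearing in \eqref{eqn:flyingsquid-bound} because the integrand does not depend on $\ry'$; rearranging gives the claim. I do not expect any real obstacle: the entire argument is the elementary identity ``expected $0/1$ loss of a deterministic predictor against a label distribution $= 1 - $ (mass that distribution places on the predicted label)'' combined with linearity of expectation and the single-term bound $|p_{f(\vx)} - q_{f(\vx)}| \le \sum_y |p_y - q_y|$. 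The only things to keep straight are the two different conditioning events, $\rvx=\vx$ versus $\vlambda(\rvx)=\vlambda(\vx)$, and the observation that the last expectation, though written over $(\rvx',\ry')$, depends only on the marginal of $\rvx'$. This step uses nothing about the specific graphical-model structure of $\P_{\vmu}$; it holds for any surrogate label distribution expressed as a function of $\vlambda(\rvx)$.
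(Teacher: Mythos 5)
Your proof is correct. Note that the paper itself does not prove this statement at all: it is reproduced verbatim (in the paper's notation) from the cited source, \citet[Theorem 3, p.~39]{fu2020fast}, and is then used as a black box in the proof of Proposition \ref{prop:existing-bound}. So there is no in-paper argument to compare against; judged on its own, your derivation is sound and is essentially the standard one. The chain --- write both errors as $\E_{\rvx}$ of conditional $0/1$ losses, use that for a deterministic predictor the conditional $0/1$ loss against a label law $p$ is $1-p_{f(\vx)}$, cancel the $1$'s, and bound the single coordinate difference $|p_{f(\vx)}-q_{f(\vx)}|$ by $\sum_y |p_y-q_y|$ before integrating over $\rvx$ --- is exactly the kind of pointwise decomposition the cited result rests on, and your observation that nothing about the graphical-model structure of $\P_{\vmu}$ is used (only that it is a label distribution measurable with respect to $\vlambda(\rvx)$) is accurate and consistent with how the paper later instantiates the bound with the population-optimal $\vmu$. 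Two minor remarks: your parenthetical that the right-hand side is twice the total variation distance matches the paper's own gloss of that term, and the slack you flag (a single coordinate versus the full $\ell_1$ sum) is real but immaterial here, since the paper only needs the stated inequality as an upper bound to then lower-bound and simplify in its illustrative example.
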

The remainder of the theorem accounts for finite-sample issues and estimation error for the optimal graphical model parameters $\vmu$, but this is the fundamental relationship proven between a classifier's clean error and weak-label error.
The second term in the RHS of \eqref{eqn:flyingsquid-bound} is exactly twice the total variation distance between $\P(\ry | \rvx)$ and $\P_{\vmu}(\ry | \lambda(\rvx))$.
For the final theorem statement, Pinsker's inequality is used to upper bound this term using the KL divergence between the true probability $\P(\ry | \rvx)$ and the estimated graphical model probability $\P_{\vmu}(\ry | \vlambda(\rvx))$, but the TV version is tighter, so we proceed by analyzing the intermediate bound \eqref{eqn:flyingsquid-bound}.

In the example from Section \ref{sec:background}, there is only one labeling function:
\[
\lambda(\vx) = \hy(\vx) = \begin{cases}
1 &\texttt{good} \in \vx\\
0 &\texttt{bad} \in \vx\\
\abst &\text{otherwise,}
\end{cases}
\]
and we have assumed that the true label $\ry$ is a deterministic function of $\rvx$, i.e., $\ry = y(\rvx)$.
We can lower-bound \eqref{eqn:flyingsquid-bound} by using $\err_{\vmu}(f) \ge 0$ and exactly computing the total-variation distance term assuming the optimal graphical model parameters $\vmu$ are known, in which case the joint distribution $\P_{\vmu}(\ry, \lambda(\rvx))$ exactly matches the true joint distribution $\P(\ry, \lambda(\rvx))$:
\begin{align*}
&\P_{\vmu}(\ry = 0, \lambda(\rvx) = 0) = \P(\ry=0, \lambda(\rvx) = 0) = \P(\ry = 0)\P(\lambda(\rvx) = 0 | \ry = 0) = \frac{1}{2}(1-\alpha)\\
&\P_{\vmu}(\ry = 0, \lambda(\rvx) = 1) = \P(\ry=0, \lambda(\rvx) = 1) = \P(\ry = 0)\P(\lambda(\rvx) = 1 | \ry = 0) = \frac{1}{2}\alpha\\
&\P_{\vmu}(\ry = 1, \lambda(\rvx) = 1) = \P(\ry=1, \lambda(\rvx) = 1) = \P(\ry = 1)\P(\lambda(\rvx) = 1 | \ry = 1) = \frac{1}{2}(1-\alpha)\\
&\P_{\vmu}(\ry = 1, \lambda(\rvx) = 0) = \P(\ry=1, \lambda(\rvx) = 0) = \P(\ry = 1)\P(\lambda(\rvx) = 0 | \ry = 1) = \frac{1}{2}\alpha\\
&\P_{\vmu}(\ry = 0, \lambda(\rvx) = \abst) = \P(\ry=0, \lambda(\rvx) = \abst) = \P(\ry = 0)\P(\lambda(\rvx) = \abst | \ry = 0) = \frac{1}{2}\P(\lambda(\rvx) = \abst)\\
&\P_{\vmu}(\ry = 1, \lambda(\rvx) = \abst) = \P(\ry=1, \lambda(\rvx) = \abst) = \P(\ry = 1)\P(\lambda(\rvx) = \abst | \ry = 1) = \frac{1}{2}\P(\lambda(\rvx) = \abst),
\end{align*}
where we used the assumptions from Proposition \ref{prop:existing-bound} that $\P(\hy = 0 | \ry = 1) = \P(\hy = 1 | \ry = 0) = \alpha$ (the error rates of the weak labels are equal for both classes) and $\P(\lambda(\rvx) = \abst | \ry = y) = \P(\lambda(\rvx) = \abst)$ (the weak labels cover each class equally often).

Now we can use this joint distribution to compute the value of the total variation distance term in \eqref{eqn:flyingsquid-bound}. There are six cases.

\paragraph{Case 1: $\vx \in S_{0}^{good}$.}
Here $\vx$ is a true negative ($\vx \in S_{0}$) so $y(\vx) = 0$.
Since $\vx \in S_0^{good}$ (i.e., the pseudolabel agrees with the true label), $\lambda(\vx) = 0$.
Here we have:
\begin{align*}
    &\P(\ry = 1 | \rvx = \vx) = 0\\
    &\P(\ry = 0 | \rvx = \vx) = 1\\
    &\P_{\vmu}(\ry = 0 | \lambda(\rvx) = 0) = \frac{\P_{\vmu}(\ry = 0, \lambda(\rvx) = 0)}{\P_{\vmu}(\ry = 0, \lambda(\rvx) = 0) + \P_{\vmu}(\ry = 1, \lambda(\rvx) = 0)} = \frac{\frac{1}{2}(1-\alpha)}{\frac{1}{2}(1-\alpha) + \frac{1}{2}\alpha} = 1-\alpha\\
    &\P_{\vmu}(\ry = 1 | \lambda(\rvx) = 0) = \frac{\P_{\vmu}(\ry = 1, \lambda(\rvx) = 0)}{\P_{\vmu}(\ry = 0, \lambda(\rvx) = 0) + \P_{\vmu}(\ry = 1, \lambda(\rvx) = 0)} = \frac{\frac{1}{2}\alpha}{\frac{1}{2}(1-\alpha) + \frac{1}{2}\alpha} = \alpha
\end{align*}
And so the (doubled) total variation distance for this $\vx$ is 
\[
|1 - \P_{\vmu}(\ry = 0 | \lambda(\rvx) = 0)| + |0 - \P_{\vmu}(\ry = 1 | \lambda(\rvx) = 0)| = |1-(1-\alpha)| + |0 - \alpha| = 2\alpha.
\]
\paragraph{Case 2: $\vx \in S_{1}^{good}$.}
Here $\vx$ is a true positive ($\vx \in S_{1}$) so $y(\vx) = 1$.
Since $\vx \in S_1^{good}$ (i.e., the pseudolabel agrees with the true label), $\lambda(\vx) = 1$.
By the analogous argument to Case 1, the doubled TV distance term for this $\vx$ is $2\alpha$.

\paragraph{Case 3: $\vx \in S_{0}^{bad}$.}
Here $\vx$ is a true negative ($\vx \in S_{0}$) so $y(\vx) = 0$.
Since $\vx \in S_0^{bad}$ (i.e., the pseudolabel disagrees with the true label), $\lambda(\vx) = 1$.
Here we have:
\begin{align*}
    &\P(\ry = 1 | \rvx = \vx) = 0\\
    &\P(\ry = 0 | \rvx = \vx) = 1\\
    &\P_{\vmu}(\ry = 0 | \lambda(\rvx) = 1) = \frac{\P_{\vmu}(\ry = 0, \lambda(\rvx) = 1)}{\P_{\vmu}(\ry = 0, \lambda(\rvx) = 1) + \P_{\vmu}(\ry = 1, \lambda(\rvx) = 1)} = \frac{\frac{1}{2}\alpha}{\frac{1}{2}\alpha + \frac{1}{2}(1-\alpha)} = \alpha\\
    &\P_{\vmu}(\ry = 1 | \lambda(\rvx) = 1) = \frac{\P_{\vmu}(\ry = 1, \lambda(\rvx) = 1)}{\P_{\vmu}(\ry = 0, \lambda(\rvx) = 1) + \P_{\vmu}(\ry = 1, \lambda(\rvx) = 1)} = 1 - \alpha
\end{align*}
And so the (doubled) total variation distance for this $\vx$ is 
\[
|1 - \P_{\vmu}(\ry = 0 | \lambda(\rvx) = 1)| + |0 - \P_{\vmu}(\ry = 1 | \lambda(\rvx) = 1)| = |1-\alpha| + |0-(1-\alpha)| = 2(1-\alpha).
\]
\paragraph{Case 4: $\vx \in S_1^{bad}$.} Here $\vx$ is a true positive ($\vx \in S_{1}$) so $y(\vx) = 1$.
Since $\vx \in S_1^{bad}$ (i.e., the pseudolabel disagrees with the true label), $\lambda(\vx) = 0$.
By symmetry with Case 3, the (doubled) TV distance for this $\vx$ is $2(1-\alpha)$.
\paragraph{Case 5: $\vx \in T_0$.} Here $\vx$ is a true negative by definition of $T_0$, so $y(\vx) = 0$.
Thus $\P(\ry = 0 | \rvx=\vx) = 1$.
Since $\vx \in T$, the pseudolabeler abstains, i.e. $\lambda(\vx) = \abst$.
The (doubled) total variation distance for this $\vx$ is thus:
\[
|1 - \P_{\vmu}(\ry = 0 | \lambda(\rvx) = \abst)| + |0 - \P_{\vmu}(\ry = 1 | \lambda(\rvx) = \abst)| = |1-\frac{1}{2}| + |0 - \frac{1}{2}| = 1
\]
\paragraph{Case 6: $\vx \in T_1$.}
By symmetry with Case 5, in this case the doubled TV term is 1.

\paragraph{Finishing the proof.}
Finally, we can simplify \eqref{eqn:flyingsquid-bound}.
For any partition $\{V_i\}$ of $\cX$, 
\begin{align*}
&\E_{(\rvx',\ry') \sim \cD}\left[\sum_{y}|\P(\ry = y | \rvx = \rvx') - \P_{\vmu}(\ry = y | \lambda(\rvx) = \lambda(\rvx'))|\right] \\
&= \sum_i\P(\rvx \in V_i)\E_{(\rvx',\ry')}\left[\sum_{y}|\P(\ry = y | \rvx = \rvx') - \P_{\vmu}(\ry = y | \lambda(\rvx) = \lambda(\rvx'))| \middle\vert \rvx' \in V_i\right]
\end{align*}
Applying this with our 6 cases above, we obtain:
\begin{align*}
\E_{(\rvx',\ry') \sim \cD}\left[\sum_{y}|\P(\ry = y | \rvx = \rvx') - \P_{\vmu}(\ry = y | \lambda(\rvx) = \lambda(\rvx'))|\right] &= \P(S_0^{good}) \cdot 2\alpha\\
&+ \P(S_1^{good})\cdot 2\alpha\\
&+ \P(S_0^{bad})\cdot2(1-\alpha)\\
&+ \P(S_1^{bad})\cdot2(1-\alpha)\\
&+ \P(T_0)\\
&+ \P(T_1).
\end{align*}
Now we can group terms together using
\begin{align*}
P(S_y^{good}) = \P(S_y)\P(S_y^{good} | S_y) &= \P(S_y)(1-\alpha)\\
P(S_y^{bad}) = \P(S_y)\P(S_y^{bad} | S_y) &= \P(S_y)\alpha\\
P(S_0) + P(S_1) &= P(S)\\
P(T_0) + P(T_1) &= P(T)
\end{align*}
for $y\in\{0,1\}$ to obtain:
\begin{align*}
\E_{(\rvx',\ry') \sim \cD}\left[\sum_{y}|\P(\ry = y | \rvx = \rvx') - \P_{\vmu}(\ry = y | \lambda(\rvx) = \lambda(\rvx'))|\right] &= \P(S)4\alpha(1-\alpha) + \P(T).
\end{align*}
Since we ignored the nonnegative $\err_{\mu}(f)$ term in \eqref{eqn:flyingsquid-bound}, the following bound is tighter than \citet[Theorem 3]{fu2020fast}:
\[
\err(f) \le \P(S)4\alpha(1-\alpha) + \P(T).
\]
However, this is looser than $\alpha$ on $S$ (since $\alpha < 3/4$), so it can never account for pseudolabel correction, and charges every point in $T$ as an error, so it can't account for coverage expansion.
\subsection{Proposition \ref{prop:wei-bound-no-apply}}
Here we reproduce Theorem 4.3 from \citet{wei2020theoretical} in our notation and show how to apply their bound (initially designed for the full-coverage $\P(S) = 1$ pseudolabel correction setting) to the weak supervision setup.
The essential idea of the reduction is to treat points $\vx \in T$, where $\hy(\vx) = \abst$, as mistakes from the teacher.
Unfortunately, this effectively limits the application of the bound to cases where $\P(S)$ is large. 

\paragraph{Definitions.}
For a classifier $g$, let $M_i(g) = \{\vx \in \cX_i : g(\vx) \ne y(\vx)\}$ be the mistakes of $g$ on $\cX_i$ (with respect to the true labels $y(\vx)$).
In particular, $M_i(g) = \{\vx \in \cX_i : g(\vx) \ne i\}$.
Recall that the pseudolabeler is a classifier $\hy: \cX \to \cY \cup \{\abst\}$.
In our notation, $M_i(\hy) = T_i \cup S_i^{bad}$.
$M_i(\hy)$ contains $T_i$ because for all $\vx\in T_i$, $\hy(\vx) = \abst \ne y(\vx)$.

\begin{definition*}[Expansion, \citet{wei2020theoretical}]
    $\P(\cdot | \cX_i)$ satisfies $(c,a)$-expansion if for all $U \subset M_i(\hy)$ with $\P(U | \cX_i) \le a$, $\P(\cN(U) | \cX_i) \ge \min\{c\P(U | \cX_i), 1\}$.
\end{definition*}

\begin{theorem*}[{{\citet[Theorem 4.3]{wei2020theoretical}, our notation}}]
Let $\bar{a} = \max_i \P(M_i(\hy) | \cX_i)$ and suppose that $\bar{a} < 1/3$ and for all $i$, $\P(\cdot | \cX_i)$ satisfies $(c,\bar{a})$-expansion for $c > 3$.
Suppose the classifier $\hat{g}$ minimizes:
\[
L(g,\hy) := \frac{c+1}{c-1}\err(g,\hy) + \frac{2c}{c-1}\P[\exists \vx' \in \cN(\rvx) : g(\vx) \ne g(\vx')].
\]
Then $\hat{g}$ satisfies the following error bound:
\[
\err(\hat{g}, y) \le \frac{2}{c-1}\err(\hy, y) + \frac{2c}{c-1}\P[\exists \vx' \in \cN(\rvx) : y(\vx) \ne y(\vx')].
\]
\end{theorem*}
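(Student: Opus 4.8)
The plan is to combine the optimality of $\hat g$ against the ground-truth labeling, used as a competitor, with a population-level ``expansion denoising'' inequality, following the template of \citet{wei2020theoretical}. First I would exploit that $\hat g$ minimizes $L(\cdot,\hy)$ by plugging in the true labeling function $y$ (viewed as a classifier $\cX\to\cY$, a valid competitor): this gives $L(\hat g,\hy)\le L(y,\hy)$. Since disagreement is symmetric, $\err(y,\hy)=\err(\hy,y)$, and the consistency loss of the competitor $y$ is by definition $\P[\exists\vx'\in\cN(\rvx):y(\vx)\ne y(\vx')]$, so $L(y,\hy)=\frac{c+1}{c-1}\err(\hy,y)+\frac{2c}{c-1}\P[\exists\vx'\in\cN(\rvx):y(\vx)\ne y(\vx')]$. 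The two coefficients of $L$ are tuned so that $\frac{c+1}{c-1}-\frac{2}{c-1}=1$; thus the target bound is exactly $\err(\hat g,y)\le L(y,\hy)-\err(\hy,y)$, and this is the identity the rest of the argument must realize.

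The crux is a per-class expansion lemma. Fix a class $i$ and write $M_i=M_i(\hat g)$. I would split $M_i$ according to the teacher: mistakes where $\hy$ is correct lie in $\{\hat g\ne\hy\}$ and are charged to the pseudolabel-disagreement loss, while the ``doubly wrong'' part $W_i:=M_i\cap M_i(\hy)\subset M_i(\hy)$ must be controlled by expansion. Writing $\mathrm{Rob}(\hat g)=\{\vx:\forall\vx'\in\cN(\vx),\,\hat g(\vx)=\hat g(\vx')\}$, set $W_i^{\mathrm{rob}}=W_i\cap\mathrm{Rob}(\hat g)$; the non-robust remainder has conditional mass at most $\P[\exists\vx'\in\cN(\rvx):\hat g(\vx)\ne\hat g(\vx')\mid\cX_i]$. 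Since $W_i^{\mathrm{rob}}\subset M_i(\hy)$ has $\P(W_i^{\mathrm{rob}}\mid\cX_i)\le\bar{a}<\tfrac13$, the $(c,\bar{a})$-expansion of \citet{wei2020theoretical} applies and gives $\P(\cN(W_i^{\mathrm{rob}})\mid\cX_i)\ge\min\{c\,\P(W_i^{\mathrm{rob}}\mid\cX_i),1\}$. On $\mathrm{Rob}(\hat g)$ every neighbor is reached by a ``good edge,'' so each $\vx\in\cN(W_i^{\mathrm{rob}})\cap\cX_i$ has a witness $\vx'\in W_i^{\mathrm{rob}}$ with $\hat g(\vx)=\hat g(\vx')\ne i=y(\vx)$; hence $\vx$ is itself an $\hat g$-mistake on $\cX_i$ that is either a disagreement with $\hy$ (when $\hy$ is correct at $\vx$), or lies in $M_i(\hy)$, or is non-robust. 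Charging the expansion excess $\P(\cN(W_i^{\mathrm{rob}})\setminus W_i^{\mathrm{rob}}\mid\cX_i)\ge(c-1)\P(W_i^{\mathrm{rob}}\mid\cX_i)$ to these events bounds $\P(W_i^{\mathrm{rob}}\mid\cX_i)$ by $\frac{1}{c-1}$ times the per-class disagreement and non-robustness masses, and reassembling $M_i=\big(M_i\setminus M_i(\hy)\big)\sqcup W_i$ yields a per-class inequality for $\err(\hat g,y\mid\cX_i)$. Here $c>3$ and $\bar a<1/3$ are the conditions that keep the expansion multiplicative (so the $\min$ never collapses to $1$ in a way that breaks the bound), a supporting check I would verify at the outset.

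Finally I would take the $\P(\cX_i)$-weighted sum of the per-class inequalities to pass from conditional to unconditional quantities, producing a bound on $\err(\hat g,y)$ in terms of $\err(\hat g,\hy)$, the consistency loss $\P[\exists\vx'\in\cN(\rvx):\hat g(\vx)\ne\hat g(\vx')]$, and the teacher error $\err(\hy,y)=\sum_i\P(\cX_i)\P(M_i(\hy)\mid\cX_i)$, with $c$ controlling all coefficients; substituting the optimality inequality $L(\hat g,\hy)\le L(y,\hy)$ from the first step then exchanges the $\hat g$-dependent losses for the corresponding quantities of $y$. The main obstacle is precisely this constant accounting: a naive charging double-counts the disagreement mass (once in $M_i\setminus M_i(\hy)$ and once in the expansion excess) and leaves the teacher error with coefficient $\frac{c+1}{c-1}$ rather than the claimed $\frac{2}{c-1}$, so the delicate step is to organize the excess-charging and the weighting so that the teacher-error contribution lands at exactly $\frac{2}{c-1}\err(\hy,y)$ while the disagreement and consistency contributions align term-by-term with the tuned coefficients $\frac{c+1}{c-1}$ and $\frac{2c}{c-1}$ of $L$, letting the optimality comparison with $y$ close the argument.
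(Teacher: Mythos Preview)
The paper does not give its own proof of this statement; it is merely reproduced from \citet{wei2020theoretical} in order to argue Proposition~\ref{prop:wei-bound-no-apply}. Your plan matches the original Wei et al.\ argument: the key intermediate lemma is that for \emph{any} classifier $g$ (under the expansion hypotheses), $\err(g,y)\le L(g,\hy)-\err(\hy,y)$; once that is established, plugging in the optimality inequality $L(\hat g,\hy)\le L(y,\hy)$ and using $(c+1)/(c-1)-1=2/(c-1)$ finishes immediately, exactly as you observed. The per-class split of $M_i(\hat g)$ by teacher correctness, the isolation of the robust doubly-wrong set $W_i^{\mathrm{rob}}\subset M_i(\hy)$, and the use of $(c,\bar a)$-expansion on it to charge the excess to disagreement and non-robustness are all the right moves.

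Where your proposal stops short is precisely the step you flag: you describe the target shape of the coefficients but do not carry out the charging that actually lands them. That is the whole technical content of the lemma, so the proposal is an accurate outline but not yet a proof. For a concrete template, the paper \emph{does} prove the closely related per-class bound (Theorems~\ref{thm:wei-plc} and~\ref{thm:wei-plc-avgrobust}, its analogue of Wei et al.'s Theorem~A.2). There the argument runs by contradiction on $\P(U_i\mid S_i^{bad})>q$ with $U_i=M_i\cap S_i^{bad}\cap R_\eta(f)$, and the device that avoids the double-counting you worry about is the set identity $(A_3\cup(S_i\setminus D_i))\cap R_\eta(f)=(A_1\cup S_i^{good})\cap R_\eta(f)$, which trades the ``disagreement on $S_i^{good}$'' mistakes against the ``agreement on $S_i^{bad}$'' mistakes and the pseudolabel-correct mass directly, producing the additive $-\alpha_i$ term without inflating the disagreement coefficient. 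Adapting that identity (and then invoking optimality, which Theorem~\ref{thm:wei-plc} does not use) is how your outline would be completed.
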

This bound shows that when the expansion (according to their definition) is large and the ground-truth classifier $\hy$ is adversarially robust over $\cN$ on most points, a classifer $\hat{g}$ that minimizes $L(g,\hy)$ enjoys a good upper bound on it error. 
In particular, this bound can be lower than the error of the pseudolabels $\err(\hy,y)$.
However, the assumptions require that $\P(M_i(\hy) | \cX_i) < 1/3$ for all $i$.
Since $T_i \subset M_i(\hy)$, this requires that $\P(T_i | \cX_i) < 1/3$ for all $i$.

In addition to requiring high coverage, this application of \citet[Theorem 4.3]{wei2020theoretical} also gives one unified bound for the classifier error on the full input space $\cX$ instead of dealing with coverage expansion and pseudolabel correction effects separately.
Empirically, in real weak supervision settings, the two effects don't always occur together, so ideally a theory for weak supervision would treat the two effects separately.
We take this approach with our bounds.

\subsection{Note on finite-sample error bounds}
\citet{frei2022self} give a finite-sample analysis that shows self-training can give weak-to-strong generalization effects under more strict distributional assumptions than what we consider here.
\citet{wei2020theoretical} and \citet{cai2021theory} both provide finite-sample error bounds for the student model in addition to using expansion to prove relationships between the weak label error and true label error on the population distribution.
They follow a modular analysis framework that is also followed (for example) by \citet{blum1998combining, balcan2004co, lang2022training}: first, the population error on the clean labels is related via structural assumptions to the population error on the weak labels. 
Second, and almost orthogonally, off-the-shelf concentration/finite-sample arguments are applied to the weak label population error. 
This approach is modular in the sense that once the population error relation is established, the finite-sample issues can be dealt with by purely considering $\err(f,\tilde{y}|S)$, and the remainder of the analysis is almost unrelated to each paper’s specific setup of self-training, co-training, domain adaptation, or weak supervision. 
For example, \citet{wei2020theoretical}, \citet{cai2021theory} both apply an off-the-shelf generalization result for deep networks from \citet{wei2019improved} to $\err(f, \tilde{y} | S)$ in this step. 
The key novelty in each paper is thus to establish reasonable structural assumptions that relate the clean and weak errors on the population data, so this is also the focus of our results. 
Other work with expansion assumptions, such as \citet{haochen2022beyond}, which gives error bounds for linear classifiers trained on top of contrastive representations, also focuses exclusively on the population case because of the same argument.

\section{Connections and comparisons to existing bounds}
\label{apdx:other-bounds}
\subsection{Co-training with conditionally independent views}
\label{apdx:co-training}
Suppose that $\cX = \cX_1 \times \cX_2$, so each example $\vx = (\vx_1, \vx_2)$ has two ``views''.
Assume that the weak labels $\hy$ can be written as a function of $\vx_1$ alone, and the hypothesis class $\mathcal{F}$ is such that each $f\in\cF$ is a function of $\vx_2$ alone.
Finally, suppose the distribution of $\rvx$ is such that for all $A \subset \cX_1$, $B\subset \cX_2$, and $i\in \cY$,
\[
\P[\rvx_1 \in A, \rvx_2 \in B | \ry(\rvx) = i ] = \P[\rvx_1 \in A | \ry(\rvx) = i]\P[\rvx_2 \in B | \ry(\rvx) = i ].
\]
That is, the two views are conditionally independent given the (unobserved) true label.

This is precisely the conditionally independent view setup of \citet{blum1998combining}. 
They prove the following relationship between the clean error and weak label error of any classifier.
\begin{theorem}[\citet{blum1998combining}, rephrased]
\label{thm:blum-mitchell}
    Fix $f: \cX_2 \to \cY$. 
    Let $\alpha_i = \P(S_i^{bad} | S_i)$.
    In the conditionally independent view setting, the errors of $f$ satisfy:
    \begin{align*}
    \err(f,y | S_i) \le \frac{\err(f,\hy | S_i) - \alpha_i}{(1-2\alpha_i)} \qquad \err(f,y | T_i) \le \frac{\err(f,\hy | S_i) - \alpha_i}{(1-2\alpha_i)}
    \end{align*}
\end{theorem}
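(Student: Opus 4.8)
The plan is to exploit the conditional independence of the two views to show that the law of $f(\rvx)$ over $\cY$, conditioned on the true class $\ry(\rvx)=i$, is unaffected by any further conditioning on $S_i$, $T_i$, $S_i^{good}$, or $S_i^{bad}$ — each of which becomes a $\sigma(\rvx_1)$-measurable event once we condition on $\ry(\rvx)=i$. First I would record that $f(\rvx)=f(\rvx_2)$ is $\sigma(\rvx_2)$-measurable, while $\hy(\rvx)=\hy(\rvx_1)$ and the indicators $\ind[\rvx\in S]$, $\ind[\rvx\in T]$ are $\sigma(\rvx_1)$-measurable. The stated assumption that $\rvx_1$ and $\rvx_2$ are independent conditioned on $\ry(\rvx)=i$ therefore implies that, conditioned on $\ry(\rvx)=i$, the variable $f(\rvx)$ is independent of the pair $(\hy(\rvx),\ind[\rvx\in S])$. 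Writing $p_i := \P(f(\rvx)\ne i \mid \ry(\rvx)=i)$ for the class-$i$ error of $f$, and noting that $S_i=\{\ry=i\}\cap\{\hy(\rvx_1)\ne\abst\}$ and $T_i=\{\ry=i\}\cap\{\hy(\rvx_1)=\abst\}$ are $\sigma(\rvx_1)$-events given $\ry=i$, this gives at once $\err(f,y|S_i)=\P(f(\rvx)\ne i\mid S_i)=p_i$ and likewise $\err(f,y|T_i)=p_i$. In particular the two claimed bounds coincide, and it suffices to bound $p_i$.

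Next I would relate $p_i$ to the weak-label disagreement $\err(f,\hy|S_i)$ by splitting $S_i$ into $S_i^{good}$ and $S_i^{bad}$, which have conditional probabilities $1-\alpha_i$ and $\alpha_i$ within $S_i$:
\[
\err(f,\hy|S_i) = (1-\alpha_i)\,\P(f(\rvx)\ne\hy(\rvx)\mid S_i^{good}) + \alpha_i\,\P(f(\rvx)\ne\hy(\rvx)\mid S_i^{bad}).
\]
On $S_i^{good}$ we have $\hy(\rvx)=i$, and since $S_i^{good}$ is a $\sigma(\rvx_1)$-event given $\ry(\rvx)=i$, the independence above yields $\P(f(\rvx)\ne\hy(\rvx)\mid S_i^{good})=\P(f(\rvx)\ne i\mid\ry(\rvx)=i)=p_i$. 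On $S_i^{bad}$ I would partition further by the value $j\ne i$ taken by $\hy$; on each piece $\{\ry(\rvx)=i,\ \hy(\rvx)=j\}$ the same argument gives $\P(f(\rvx)\ne j\mid\cdot)=1-\P(f(\rvx)=j\mid\ry(\rvx)=i)$. Since $\sum_{j\ne i}\P(f(\rvx)=j\mid\ry(\rvx)=i)=p_i$, each of these probabilities is at most $p_i$, so averaging over the pieces gives $\P(f(\rvx)\ne\hy(\rvx)\mid S_i^{bad})\ge 1-p_i$ (with equality when $|\cY|=2$, where $\hy=1-i$ on $S_i^{bad}$ forces $f\ne\hy\iff f=i$).

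Combining the two terms gives $\err(f,\hy|S_i)\ge(1-\alpha_i)p_i+\alpha_i(1-p_i)=(1-2\alpha_i)p_i+\alpha_i$; since $0<\alpha_i<\tfrac12$ we may divide by $1-2\alpha_i>0$ to obtain $p_i\le\frac{\err(f,\hy|S_i)-\alpha_i}{1-2\alpha_i}$, and substituting $\err(f,y|S_i)=\err(f,y|T_i)=p_i$ finishes both inequalities. The only delicate point is making rigorous the claim that conditioning on a $\sigma(\rvx_1)$-event does not change the law of $f(\rvx)$ given the class — this is precisely where the multi-view structure and conditional-independence hypotheses are essential (and it is what rules out the exact-teacher-fitting behaviour discussed in Section~\ref{sec:background}); the remaining bookkeeping for $|\cY|>2$ in the $S_i^{bad}$ term is elementary, and the binary case, which already recovers \citet{blum1998combining} and \citet{lang2022training}, is cleaner still. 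I expect the multi-view independence step to be the main conceptual obstacle; everything after it is routine algebra.
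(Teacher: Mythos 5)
Your proof is correct, but it takes a genuinely different route from the paper. You give a direct, self-contained argument: conditional independence of the views given $\ry=i$ makes the law of $f(\rvx_2)$ invariant under conditioning on the $\sigma(\rvx_1)$-measurable events $S_i$, $T_i$, $S_i^{good}$, $S_i^{bad}$, so $\err(f,y|S_i)=\err(f,y|T_i)=p_i$, and the decomposition of $\err(f,\hy|S_i)$ over $S_i^{good}$ and $S_i^{bad}$ (with the multiclass bound $\P(f\ne\hy\mid S_i^{bad})\ge 1-p_i$) yields $(1-2\alpha_i)p_i+\alpha_i\le\err(f,\hy|S_i)$; all steps check out, including the division by $1-2\alpha_i>0$. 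The paper instead obtains Theorem \ref{thm:blum-mitchell} as a corollary of its own machinery: it proves a lemma that in the conditionally-independent-view setup (with $\cN(\vx)$ the set of points sharing $\vx_2$) the relevant set families satisfy $(c,q)$-expansion with $c=1$, $q=0$ and every $f\in\cG$ is perfectly robust, and then plugs $c=1$, $q=0$, $\P(\widebar{R(f)}\mid\cdot)=0$ into Theorems \ref{thm:plc-avgrobust-unsimplified} and \ref{thm:covex-pseudorandom-avgrobust-diffc}. Your approach is more elementary, needs no precondition on $f$ whatsoever (the paper's route formally requires the mild and in fact vacuity-free condition $\err(f,\hy|S_i)\le 1-\alpha_i$), and makes transparent exactly where conditional independence enters; the paper's route buys something different, namely the demonstration that its expansion framework strictly subsumes the \citet{blum1998combining} setting, which is the whole point of that appendix section. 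So as a proof of the stated inequalities your argument is fine (and close in spirit to the original co-training computation), but it would not serve the paper's purpose of exhibiting the theorem as a special case of the expansion bounds.
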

This exactly matches our pseudolabel correction result in Theorem \ref{thm:plc-avgrobust-unsimplified} and our coverage expansion result from Theorem \ref{thm:covex-pseudorandom-advrobust} when $c=1$, $q=0$, and $\P(\overbar{R(f)}|T_i) = 0$.
We now show how to set $\cN$ such that the conditionally independent view distribution provably expands with these parameters and any classifier has $\P(\overbar{R(f)}|T_i) = 0$.
Our results thus directly generalize those of \citet{blum1998combining} in three directions, allowing for the views to be dependent, allowing small sets to not have good expansion, and allowing for the case when the classifier is not perfectly robust over $\cN$.
\begin{lemma}
    Suppose the hypothesis class $\cG$ is such that $g: \cX_2 \to \cY$ for all $g \in \cG$. That is, the hypotheses are functions of $\vx_2$ alone.
    In the conditionally-independent view setup, the appropriate families of sets $\cM(\cdot, \cG)$ and $\cM'(\cdot, \cG)$ satisfy $(c,q)$-expansion on the pairs of sets $(S_i^{good}, T_i)$, $(S_i^{bad}, T_i)$, $(S_i^{good}, S_i^{bad})$, and $(S_i^{bad}, S_i^{good})$, all with $c=1$ and $q=0$.
    Additionally, any classifier $g\in \cG$ is adversarially robust at all $\vx \in \cX$, so $\P(R(g)|A) = 0$ for all $g\in \cG$ and all $A\subset \cX$.
    Plugging these coefficients into our Theorems \ref{thm:plc-avgrobust-unsimplified} and  \ref{thm:covex-pseudorandom-advrobust} yields Theorem \ref{thm:blum-mitchell}.
    Our bounds therefore exactly recover the \citet{blum1998combining}'s bounds in the conditionally-independent-view setting.
\end{lemma}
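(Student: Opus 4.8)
The plan is to choose a neighborhood function $\cN$ adapted to the two-view structure so that robustness becomes automatic, then deduce $(1,0)$-expansion for all the relevant families purely from conditional independence, and finally obtain Theorem~\ref{thm:blum-mitchell} by substituting $c=1$, $q=0$ and zero non-robust mass into Theorems~\ref{thm:plc-avgrobust-unsimplified} and~\ref{thm:covex-pseudorandom-advrobust}. Concretely, I would take $\cN(\vx)=\{\vx'\in\cX:\vx_2'=\vx_2\}$, so that two points are neighbors iff they agree on the second view; this is symmetric, as required. Since every $g\in\cG$ is a function of $\vx_2$ alone, $g$ is constant on each $\cN(\vx)$, hence $r(g,\vx)=0$ for all $\vx$, so $R(g)=R_0(g)=\cX$ and therefore $\P(\overbar{R(g)}\mid A)=0$ for every $g\in\cG$ and every $A\subseteq\cX$. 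This already settles the robustness part of the lemma and kills every $\P(\overbar{R(f)}\mid\cdot)$ term appearing in the two theorems.

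\textbf{Expansion.} Next I would record the shapes of the sets. Because $\hy$ depends only on $\vx_1$ and $y\equiv i$ on $\cX_i$, each of $S_i,S_i^{good},S_i^{bad},T_i$ has the form $\cX_i\cap\{\rvx_1\in E\}$ for some $E\subseteq\cX_1$; call these ``view-1 sets''. For any $g\in\cG$ and view-1 set $B$, since $R(g)=\cX$ the mistake set is $U(B,g)=B\cap\{\rvx_2\in W\}$ with $W=g^{-1}(\cY\setminus\{i\})$ and the robust non-mistake set is $B\cap\{\rvx_2\in W^{c}\}$, so \emph{every} element of $\cM(B,\cG)$ or $\cM'(B,\cG)$ has the form $U=B\cap\{\rvx_2\in W\}$ for some $W\subseteq\cX_2$. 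The core computation is then: fix a target $A=\cX_i\cap\{\rvx_1\in E_A\}$, a source $B=\cX_i\cap\{\rvx_1\in E_B\}$, and $U=B\cap\{\rvx_2\in W\}$; if $A$ or $B$ is null the expansion claim is vacuous, so assume both non-null. Conditioning on a non-null view-1 event inside $\cX_i$ leaves the law of $\rvx_2$ given $\rvx\in\cX_i$ unchanged (conditional independence), so $\P(U\mid B)=\P(\rvx_2\in W\mid\cX_i)$; and $\cN(U)=\{\vx':\vx_2'\in\pi_2(U)\}$, where $\pi_2(U)$ denotes the set of second-view values occurring in $U$. For any $\vx_2\in W$ with $\P(\rvx_2=\vx_2\mid\cX_i)>0$, conditional independence gives $\P(\rvx_1\in E_B,\,\rvx_2=\vx_2\mid\cX_i)=\P(\rvx_1\in E_B\mid\cX_i)\,\P(\rvx_2=\vx_2\mid\cX_i)>0$, so $B$ contains a point with second view $\vx_2$, i.e. $\vx_2\in\pi_2(U)$; hence $\pi_2(U)\supseteq W$ up to a $\P(\cdot\mid\cX_i)$-null set, and $\P(\cN(U)\mid A)=\P(\rvx_2\in\pi_2(U)\mid\cX_i)=\P(\rvx_2\in W\mid\cX_i)=\P(U\mid B)$. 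This is exactly $(1,0)$-expansion (with equality, hence the required inequality); instantiating $A,B$ and the family $\cM$ or $\cM'$ over the four listed pairs completes that part. \textbf{The hard part} is precisely showing that $\pi_2(U)$ recovers all of $W$ up to null sets: this is where conditional independence, the discreteness of $\cX$, and the non-nullness of $B$ are actually used; everything else is bookkeeping.

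\textbf{Recovering Blum--Mitchell.} With $c=1$ one has $c'=c/(1-\alpha_i+c\alpha_i)=1$, so plugging $\P(\overbar{R(f)}\mid S_i)=0$ into Theorem~\ref{thm:plc-avgrobust-unsimplified} yields $\err(f,y\mid S_i)\le(\err(f,\hy\mid S_i)-\alpha_i)/(1-2\alpha_i)$, and plugging $c=1$, $q=0$, $\P(\overbar{R(f)}\mid T_i)=0$ into Theorem~\ref{thm:covex-pseudorandom-advrobust} yields $\err(f,y\mid T_i)\le\max\{0,(\err(f,\hy\mid S_i)-\alpha_i)/(1-2\alpha_i)\}$; these are exactly the two inequalities of Theorem~\ref{thm:blum-mitchell} (the outer $\max$ only sharpens the coverage bound). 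The last thing to verify is the preconditions of the two theorems: under robustness and $q=0$ they both reduce to $\err(f,\hy\mid S_i)\le 1-\alpha_i$, and whenever this fails the right-hand side of Theorem~\ref{thm:blum-mitchell} already exceeds $1$, so the bound is vacuous and nothing needs proving.
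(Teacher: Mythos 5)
Your proposal is correct and matches the paper's own proof in all essentials: the same neighborhood $\cN(\vx)=\{\vx':\vx_2'=\vx_2\}$, the same automatic-robustness observation for hypotheses depending on $\vx_2$ alone, and the same conditional-independence factorization showing the expansion ratio equals exactly $1$ for each of the four set pairs (the paper does the computation for $(S_i^{good},T_i)$ and notes the other pairs are analogous, whereas you give one unified view-1/view-2 computation). Your extra care about null sets and about the theorems' preconditions (reducing to $\err(f,\hy|S_i)\le 1-\alpha_i$, with the bound vacuous otherwise) is a harmless refinement of what the paper leaves implicit.
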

\begin{proof}
Let $\cN(\vx) = \cN(\vx_1, \vx_2) := \{(\vx', \vx_2)\}$ be the set of all points in $\cX$ that share the same value of $\vx_2$ as $\vx$.
That is, $\vx' \in \cN(\vx)$ iff $\vx'_2 = \vx_2$.
By construction, any $g: \cX_2 \to \cY$ has $\P(R(g)) = 1$: all points in the same neighborhood have the same value of $\vx_2$, so $g$ must be constant on every neighborhood.
This proves $r(f,\vx) = 0$ for all $\vx$.

We will show $c=1$ for the pair $(S_i^{good}, T_i)$. Completely analogous arguments hold for the three other pairs.
The expansion amount $c$ between $S_i^{good}$ and $T_i$ for set size $q \ge 0$ is given by:
\begin{align*}
c = \min_{U : \P(U|T_i) > q}\frac{\P(\cN(U) | S_i^{good})}{\P(U|T_i)} = \frac{\P(\cN(U), S_i^{good} | \ry = i)}{\P(S_i^{good}|\ry = i)}\frac{\P(T_i | \ry = i)}{\P(U,T_i|\ry = i)}
\end{align*}
Fix an arbitrary $g \in \cG$ and let set $U \subset T_i$ be the set of $\vx \in T_i$ such that $g(\vx) \ne y(\vx)$.
A point $\vx\in S_i^{good}$ is in $\cN(U)$ iff there exists $\vx'\in U$ with $\vx \in \cN(\vx')$.
By our neighborhood construction, this is equivalent to $\vx_2 = \vx'_2$.
Let $U_2 = \{\vx_2 : \vx \in U\}$ and consider a point $\vx \in S_i^{good}$.
Then $\vx\in S_i^{good}$ satisfies $\vx \in \cN(U)$ iff $\vx_2 \in U_2$.
Likewise, a point $\vx\in T_i$ is in $U$ iff $\vx_2 \in U_2$, since the errors of $g$ (and thus, membership in $U$) only depend on $\vx_2$.
Now consider membership in $S_i^{good}$: 
$\vx \in S_i^{good}$ iff $\hy(\vx_1) = i$.
Conditioned on $\ry = i$, $\vx \in T_i$ iff $\hy(\vx_1) = \abst$.
We can thus rewrite the expression above as:
\[
\frac{\P(\cN(U), S_i^{good} | \ry = i)}{\P(S_i^{good}|\ry = i)}\frac{\P(T_i | \ry = i)}{\P(U,T_i|\ry = i)} = \frac{\P(\vx_2 \in U_2, \hy(\vx_1) = i | \ry = i)}{\P(\hy(\vx_1) = i |\ry = i)}\frac{\P(\hy(\vx_1) = \abst| \ry = i)}{\P(\vx_2\in U_2,\hy(\vx_1) = \abst|\ry = i)}
\]
By the conditional independence assumption on the two views, the numerator of the first term and the denominator of the second term factor and cancel out their denominator and numerator, respectively.
This yields:
\[
\frac{\P(\vx_2 \in U_2| \ry = i)}{1}\frac{1}{\P(\vx_2\in U_2|\ry = i)} = 1.
\]
Since $U$ was arbitrary, this shows that in the conditionally-independent view setup has $(c,q)$-expansion for the pair $(S_i^{good}, T_i)$ with $c=1$ and $q=0$.
The argument for the remaining three pairs of sets is completely analogous.
\end{proof}

\subsection{\citet{wei2020theoretical}'s pseudolabel correction bound}
\label{apdx:ours-vs-wei}
Proposition \ref{prop:wei-bound-no-apply} shows that \citet{wei2020theoretical}'s error bound can't easily be ported to the full weak supervision setting (i.e., pseudolabel correction \textit{and} coverage expansion) because it requires high coverage $\P(S)$.
In this section, we show that even for pseudolabel correction, our bound is significantly different from Wei et al.'s bound and has three desirable properties that \citet[Theorem A.2]{wei2020theoretical} (the more general version of \citet[Theorem 4.3]{wei2020theoretical}) lacks: (a) our bound more directly generalizes the \citet{blum1998combining} bounds in the conditionally-independent-view setting from Section \ref{apdx:co-training}, (b) our expansion parameter naturally appears in our bound, making it clear that more expansion yields a tighter bound, and (c) our bound empirically applies to more classifiers since it places less strict conditions on the student model.

First, we prove a direct analogue of \citet[Theorem A.2]{wei2020theoretical} using our multiplicative expansion assumptions, since the original uses a slightly different additive expansion assumption.
The proofs are almost identical.

\begin{theorem}[\citet{wei2020theoretical} Theorem A.2, restated]
\label{thm:wei-plc}
Suppose $\cM(S_i^{bad},\cF)$ satisfies $(c,q)$-expansion on $(S_i^{good}, S_i^{bad})$ for some $c > \alpha_i/(1-\alpha_i)$.
Let $\tilde{c} = c\frac{1-\alpha_i}{\alpha_i}$ and suppose that the classifier $f$ satisfies:
\[
\P(f(\vx) \ne \hy(\vx) \text{ or } f \text{ not robust at } \vx | S_i) \le \alpha_i(1 + q(\tilde{c}-1))
\]
Then:
\[
\err(f,y | S_i) \le 2\left(q\alpha_i + \P(\overbar{R(f)} | S_i)\right) + \err(f,\hy|S_i) - \alpha_i.
\]
\end{theorem}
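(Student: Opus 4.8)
The plan is to follow the template of Theorem \ref{thm:plc-advrobust} and of \citet[Theorem A.2]{wei2020theoretical}, but carrying the multiplicative $(c,q)$-expansion hypothesis through the argument. Write $M_i = \{\vx\in S_i : f(\vx)\ne y(\vx)\}$ for the true-mistake set, $D_i = \{\vx\in S_i : f(\vx)\ne\hy(\vx)\}$ for the weak-disagreement set (so $\err(f,\hy|S_i)=\P(D_i|S_i)$), and $U = \{\vx\in S_i^{bad}\cap R(f) : f(\vx)\ne y(\vx)\} = M_i\cap S_i^{bad}\cap R(f)$, which belongs to $\cM(S_i^{bad},\cF)$ since $f\in\cF$. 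The theorem will follow from two ingredients: (i) a purely set-theoretic ``budget'' inequality that bounds $\err(f,y|S_i)$ by $\err(f,\hy|S_i)-\alpha_i$ plus twice $\P(M_i\cap S_i^{bad}|S_i)$; and (ii) the claim $\P(U\mid S_i^{bad})\le q$, which handles the $S_i^{bad}$ term and is proved by contradiction using expansion.

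\textbf{Ingredient (i).} Decompose $D_i$ over $S_i^{good}$ and $S_i^{bad}$. On $S_i^{good}$ we have $\hy=y$, so $D_i\cap S_i^{good}=M_i\cap S_i^{good}$. On $S_i^{bad}$ we have $\hy\ne y$, so any point where $f$ agrees with $\hy$ is a true mistake, i.e.\ $S_i^{bad}\setminus D_i\subset M_i$, giving $\P(D_i\cap S_i^{bad}\mid S_i)\ge\alpha_i-\P(M_i\cap S_i^{bad}\mid S_i)$. Adding the two pieces, $\err(f,\hy|S_i)\ge\P(M_i\cap S_i^{good}|S_i)+\alpha_i-\P(M_i\cap S_i^{bad}|S_i)$, and rearranging yields
\[
\err(f,y|S_i)=\P(M_i\cap S_i^{good}|S_i)+\P(M_i\cap S_i^{bad}|S_i)\le \err(f,\hy|S_i)-\alpha_i+2\,\P(M_i\cap S_i^{bad}|S_i).
\]
Since $M_i\cap S_i^{bad}\subset U\cup\overbar{R(f)}$, it now suffices to show $\P(U\mid S_i^{bad})\le q$, which gives $\P(M_i\cap S_i^{bad}|S_i)\le q\alpha_i+\P(\overbar{R(f)}|S_i)$ and hence the claimed bound.

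\textbf{Ingredient (ii).} Suppose for contradiction that $\P(U\mid S_i^{bad})>q$. By $(c,q)$-expansion, $\P(\cN(U)\mid S_i^{good})>c\,\P(U\mid S_i^{bad})$. For any $\vx\in\cN(U)\cap S_i^{good}$ there is $\vx'\in U\subset S_i^{bad}\cap\cX_i$ with $\vx\in\cN(\vx')$; since $\vx'\in R(f)$, $f(\vx)=f(\vx')\ne y(\vx')=y(\vx)=\hy(\vx)$, so $\vx\in D_i$ — note robustness is used only at $\vx'$, not at $\vx$. Hence $\P(D_i\mid S_i^{good})>c\,\P(U\mid S_i^{bad})$. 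On the other side, on $S_i^{bad}$ a robust non-mistake of $f$ still disagrees with $\hy$ (because $\hy\ne y$ there), so $S_i^{bad}\setminus U\subset D_i\cup\overbar{R(f)}$ and $\P(D_i\cup\overbar{R(f)}\mid S_i^{bad})\ge 1-\P(U\mid S_i^{bad})$. Combining via $\P(\cdot\mid S_i)=(1-\alpha_i)\P(\cdot\mid S_i^{good})+\alpha_i\P(\cdot\mid S_i^{bad})$ and writing $p=\P(U\mid S_i^{bad})$,
\[
\P(D_i\cup\overbar{R(f)}\mid S_i)>(1-\alpha_i)c\,p+\alpha_i(1-p)=\alpha_i+\alpha_i(\tilde c-1)\,p>\alpha_i\bigl(1+q(\tilde c-1)\bigr),
\]
where $\tilde c=c(1-\alpha_i)/\alpha_i>1$ because $c>\alpha_i/(1-\alpha_i)$, and the last step uses $p>q$. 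This contradicts the hypothesis $\P(f\ne\hy\text{ or }f\text{ not robust}\mid S_i)\le\alpha_i(1+q(\tilde c-1))$, so $\P(U\mid S_i^{bad})\le q$, completing the proof.

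\textbf{Anticipated obstacle.} The delicate point is the second half of ingredient (ii): deriving a contradiction from expansion alone — lower-bounding only $\P(D_i\cap S_i^{good})$ — is too weak, and one must additionally notice that on the mislabeled region $S_i^{bad}$ essentially \emph{everything} outside the robust-true-mistake set $U$ already lies in $D_i\cup\overbar{R(f)}$. That observation is what produces the coefficient $\tilde c-1$ (rather than $\tilde c$) and makes the contradiction match precisely the slack $\alpha_i(1-q)$ built into the hypothesis on $f$; the $-\alpha_i$ improvement in the final bound then falls out of the budget inequality (i) for free.
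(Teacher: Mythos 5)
Your proposal is correct. The crucial step—showing $\P(U\mid S_i^{bad})\le q$ by contradiction—is essentially the paper's argument (for the $\eta=0$ case): you expand the robust bad-mistake set $U\in\cM(S_i^{bad},\cF)$ into $S_i^{good}$, note that every neighbor there lands in the disagreement set $D_i$, observe that on $S_i^{bad}$ everything outside $U$ already lies in $D_i\cup\overbar{R(f)}$, and play the resulting lower bound $\alpha_i\bigl(1+(\tilde c-1)p\bigr)$ against the precondition on $\P(f\ne\hy\text{ or not robust}\mid S_i)$; the paper does exactly this, packaging the second observation as its Lemma on $\alpha_i$ and organizing the algebra through a partition $V_1,V_2,V_3$ of $S_i$. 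Where you genuinely depart from the paper is the final assembly: you convert $\P(U\mid S_i^{bad})\le q$ into the stated bound via a clean ``budget'' inequality, $\err(f,y\mid S_i)\le\err(f,\hy\mid S_i)-\alpha_i+2\P(M_i\cap S_i^{bad}\mid S_i)$, obtained by decomposing $D_i$ over $S_i^{good}$ and $S_i^{bad}$, whereas the paper partitions the true-mistake set $M_i$ into three pieces $A_1,A_2,A_3$, tracks their intersections with $R(f)$, and uses a set identity to bound $\P(A_3\cap R(f)\mid S_i)$ before union-bounding. The two routes yield the same final bound; yours is shorter and makes transparent that the only role of expansion is to cap the mass of robust mistakes on $S_i^{bad}$, while the paper's bookkeeping yields, along the way, a slightly tighter intermediate form in which $\err(f,\hy\mid S_i)+\P(\overbar{R(f)}\mid S_i)$ is replaced by the probability of the union event. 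One small point worth making explicit in your Case-(ii) neighbor argument: with $\eta=0$, robustness at $\vx'\in U$ forces $f(\vx)=f(\vx')$ for every $\vx\in\cN(\vx')$, so the inclusion $\cN(U)\cap S_i^{good}\subset D_i$ is exact; for the $\eta>0$ generalization the paper needs the weighted-graph device ($\widetilde{\cN}$ and robust expansion) precisely because this pointwise inclusion fails.
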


\noindent \paragraph{Note:} The following result shows our robust expansion technique can also be used to directly generalize this result to average-case-robust classifiers, whereas \citet[Theorem A.2]{wei2020theoretical} only applies to adversarially robust classifiers. Theorem \ref{thm:wei-plc} is a special case of Theorem \ref{thm:wei-plc-avgrobust} with $\eta=0$.
\begin{theorem}
\label{thm:wei-plc-avgrobust}
Suppose $M_{\eta}(S_i^{bad},\cF)$ satisfies $(c,q,\eta)$-expansion on $(S_i^{good}, S_i^{bad})$ for some $c > \alpha_i/(1-\alpha_i)$.
Let $\tilde{c} = c\frac{1-\alpha_i}{\alpha_i}$ and additionally suppose:
\[
\P(f(\vx) \ne \hy(\vx) \text{ or } f \text{ not $\eta$-robust at } \vx | S_i) \le \alpha_i(1 + q(\tilde{c}-1))
\]
Then:
\[
\err(f,y | S_i) \le 2\left(q\alpha_i + \P(\overbar{R_{\eta}(f)}|S_i)\right) + \err(f,\hy|S_i) - \alpha_i.
\]
Furthermore, if $f$ satisfies:
\[
\E_{\rvx'\sim \cD|S_i, \rvx\sim \cD|\cN(\rvx')}[f(\rvx) \ne f(\rvx')] \le \gamma
\]
for some $\gamma \ge 0$, then $\P(\overbar{R_{\eta}(f)}|S_i) \le \frac{\gamma}{\eta}$.
\end{theorem}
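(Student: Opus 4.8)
The plan is to recycle the machinery already built in the excerpt---Lemma \ref{lemma:disagreement-decomposition}, Lemma \ref{lemma:robust-nbhrd-enough-weight}, and Lemma \ref{lemma:avg-robust-implies-eta-robust}---but to organize the argument around a contradiction, in the style of \citet{wei2020theoretical}, rather than the telescoping substitutions used to prove Theorem \ref{thm:plc-avgrobust-unsimplified}. Write $M_i = \{\vx \in S_i : f(\vx) \ne y(\vx)\}$, $D_i = \{\vx \in S_i : f(\vx) \ne \hy(\vx)\}$, and abbreviate $m_g = \P(M_i \mid S_i^{good})$, $m_b = \P(M_i \mid S_i^{bad})$, $\rho = \P(\overbar{R_{\eta}(f)} \mid S_i)$. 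A first reduction shows the whole theorem comes down to bounding $m_b$: by Lemma \ref{lemma:disagreement-decomposition}, $D_i \supset (M_i \cap S_i^{good}) \cup (S_i^{bad}\setminus M_i)$, and these two sets are disjoint, so $\err(f,\hy\mid S_i) \ge (1-\alpha_i)m_g + \alpha_i(1-m_b)$; combining this with the identity $\err(f,y\mid S_i) = \alpha_i m_b + (1-\alpha_i)m_g$ gives
\[
\err(f,y\mid S_i) \le 2\alpha_i m_b + \err(f,\hy\mid S_i) - \alpha_i .
\]
Hence it suffices to prove $m_b \le q + \rho/\alpha_i$, since then $2\alpha_i m_b \le 2q\alpha_i + 2\rho = 2\bigl(q\alpha_i + \P(\overbar{R_\eta(f)}\mid S_i)\bigr)$.

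The set that carries the expansion is $U := R_{\eta}(f) \cap \{\vx\in S_i^{bad} : f(\vx)\ne y(\vx)\}$, which is exactly a member of $\cM_{\eta}(S_i^{bad},\cF)$. I would split on whether $\P(U\mid S_i^{bad}) \le q$. In the easy case $\P(U\mid S_i^{bad}) \le q$, note that $M_i\cap S_i^{bad}$ and $U$ differ only by non-robust points, so $m_b \le \P(U\mid S_i^{bad}) + \P(\overbar{R_\eta(f)}\cap S_i^{bad}\mid S_i^{bad}) \le q + \rho/\alpha_i$ and we are done.

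The crux is ruling out $\P(U\mid S_i^{bad}) > q$. There, robust expansion gives $P_{1-\eta}(U,S_i^{good}) > c\,\P(U\mid S_i^{bad})$, and since $U\subset R_\eta(f)$, Lemma \ref{lemma:robust-nbhrd-enough-weight} shows the good-edge neighborhood already captures this much mass: $\P(\widetilde{\cN}(U)\mid S_i^{good}) > c\,\P(U\mid S_i^{bad})$. Following a good edge from $\vx\in\widetilde{\cN}(U)\cap S_i^{good}$ to $\vx'\in U$ gives $f(\vx)=f(\vx')\ne y(\vx')=y(\vx)=\hy(\vx)$, so $\widetilde{\cN}(U)\cap S_i^{good}\subset D_i$. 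Now I lower bound $\beta := \P(D_i\cup\overbar{R_\eta(f)}\mid S_i)$ by splitting over $S_i^{good}$ and $S_i^{bad}$: on $S_i^{good}$ it is at least $(1-\alpha_i)\P(\widetilde{\cN}(U)\mid S_i^{good}) > (1-\alpha_i)c\,\P(U\mid S_i^{bad})$; on $S_i^{bad}$, every point is either a non-mistake (hence in $D_i$), or a robust mistake (hence in $U$), or non-robust, so $S_i^{bad}\setminus U\subset D_i\cup\overbar{R_\eta(f)}$ and the $S_i^{bad}$ contribution is at least $\alpha_i(1-\P(U\mid S_i^{bad}))$. Writing $u = \P(U\mid S_i^{bad})>q$ and using $c(1-\alpha_i)-\alpha_i = \alpha_i(\tilde c - 1) > 0$, these two disjoint contributions add to $\beta > \alpha_i + u\,\alpha_i(\tilde c - 1) > \alpha_i(1 + q(\tilde c - 1))$, contradicting the standing hypothesis on $f$. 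So $\P(U\mid S_i^{bad})\le q$ always, and the displayed bound follows. The ``furthermore'' clause is immediate from Lemma \ref{lemma:avg-robust-implies-eta-robust} applied with the set taken to be $S_i$.

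The main obstacle is the bookkeeping in this Case~1 contradiction: one must split $\beta$ into genuinely disjoint $S_i^{good}$ and $S_i^{bad}$ pieces and verify the decomposition $S_i^{bad}\setminus U\subset D_i\cup\overbar{R_\eta(f)}$, so that the resulting lower bound on $\beta$ matches the constant $\alpha_i(1+q(\tilde c -1))$ in the hypothesis exactly rather than only up to slack. Everything else---the reduction to $m_b$ and the easy case---is routine given the lemmas already proved in the excerpt.
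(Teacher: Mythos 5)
Your proof is correct, and its crux coincides with the paper's: both isolate the robust bad-mistake set $U = R_{\eta}(f)\cap \{\vx\in S_i^{bad} : f(\vx)\ne y(\vx)\}\in\cM_{\eta}(S_i^{bad},\cF)$ and rule out $\P(U\,|\,S_i^{bad})>q$ by combining $(c,q,\eta)$-robust expansion with Lemma \ref{lemma:robust-nbhrd-enough-weight} and the good-edge argument $\widetilde{\cN}(U)\cap S_i^{good}\subset D_i$, contradicting the hypothesis $\P(D_i\cup\overbar{R_{\eta}(f)}\,|\,S_i)\le\alpha_i(1+q(\tilde c-1))$. Where you differ is the bookkeeping surrounding this crux. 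The paper partitions the true-mistake set $M_i$ into $A_1,A_2,A_3$ and $S_i$ into $V_1,V_2,V_3$, derives the contradiction in the form $1>1$ via Lemma \ref{lemma:alpha-eqn-genrobust}, and then needs a second round of set algebra (bounding $\P(A_3\cap R_{\eta}(f)\,|\,S_i)$) plus a union bound to reach the stated inequality. You instead front-load the reduction $\err(f,y\,|\,S_i)\le 2\alpha_i\,\P(M_i\,|\,S_i^{bad})+\err(f,\hy\,|\,S_i)-\alpha_i$ via Lemma \ref{lemma:disagreement-decomposition}, so that only the estimate $\P(M_i\,|\,S_i^{bad})\le q+\P(\overbar{R_{\eta}(f)}\,|\,S_i)/\alpha_i$ remains to be shown, and you obtain the contradiction by directly lower-bounding $\P(D_i\cup\overbar{R_{\eta}(f)}\,|\,S_i)$ over the disjoint pieces $S_i^{good}$ and $S_i^{bad}$ (your inclusion $S_i^{bad}\setminus U\subset D_i\cup\overbar{R_{\eta}(f)}$ checks out, and the resulting bound $\alpha_i+u\,\alpha_i(\tilde c-1)$ with $u>q$ matches the hypothesis constant exactly). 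The handling of the ``furthermore'' clause via Lemma \ref{lemma:avg-robust-implies-eta-robust} with $A=S_i$ is the same as the paper's. Both routes give the identical final bound; yours trades the paper's explicit partitions for shorter and arguably more transparent accounting, while the paper's $V$-decomposition mirrors the original argument of \citet{wei2020theoretical} more literally.
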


\begin{proof}
We directly prove Theorem \ref{thm:wei-plc-avgrobust} since Theorem \ref{thm:wei-plc} is a special case with $\eta=0$.
The proof is largely similar to \citet[Theorem A.2]{wei2020theoretical} but with our multiplicative version of the expansion assumptions and our generalization to robust expansion.
Let $M_i = \{\vx \in S_i | f(\vx) \ne y(\vx)\}$ be the set of mistakes of $f$ in $S_i$; note that $\err(f,y | S_i) = \P(M_i | S_i)$.
Let $D_i =  \{\vx \in S_i | f(\vx) \ne \hy(\vx)\}$ be the set of disagreements between $f$ and $\hy$ in $S_i$; note that $\err(f,\hy | S_i) = \P(D_i | S_i)$.

    We can partition $M_i$ into three sets:
    \begin{align*}
    &A_1 = (S_i \setminus D_i) \cap S_i^{bad}\\
    &A_2 = D_i \cap M_i \cap S_i^{bad}\\
    &A_3 = D_i \cap S_i^{good}
    \end{align*}
    Points $\vx \in A_1$ must be in $M_i$ since $f(\vx) = \hy(\vx) \ne y(\vx)$ by the definitions of $S_i\setminus D_i$ and $S_i^{bad}$, respectively, so $A_1 = A_1 \cap M_i$.
    Points $\vx \in A_3$ must be in $M_i$ since $f(\vx) \ne \hy(\vx) = y(\vx)$ by the definitions of $D_i$ and $S_i^{good}$, respectively, so $A_3 \subset M_i \cap S_i^{good}$.
    Finally, any point in $M_i \cap S_i^{good}$ must be in $D_i$ since otherwise we'd have $f(\vx) = \hy(\vx) = y(\vx)$ and thus $\vx \not\in M_i$, so $M_i \cap S_i^{good} \subset A_3$.
    Together, this indicates that $\{A_1, A_2, A_3\}$ is a partition of $M_i$, since we've shown:
    \begin{align*}
        A_1 \cup A_2 \cup A_3 &= \phantom{\cup}(S_i \setminus D_i) \cap M_i \cap S_i^{bad}\\
        &\phantom{=} \cup D_i \cap M_i \cap S_i^{bad}\\
        &\phantom{=} \cup D_i \cap M_i \cap S_i^{good}\\
        &= M_i \cap S_i^{bad} \cup M_i \cap S_i^{good}\\
        &= M_i        
    \end{align*}

Now define $U_i = M_i \cap S_i^{bad} \cap R_{\eta}(f)$. Note that $A_1 \cap R_{\eta}(f)$ and $A_2 \cap R_{\eta}(f)$ are disjoint subsets of $U_i$.
Since $U_i \subset R_{\eta}(f)$, Lemma \ref{lemma:robust-nbhrd-enough-weight} implies that $w(\widetilde{\cN}(U_i), U_i) \ge (1-\eta)w(\cN(U_i), U_i)$, so $\P(\widetilde\cN(U_i) | S_i^{good}) \ge P_{1-\eta}(S_i^{good}, U_i)$.
Since $U_i$ is a mistake set intersected with a robust set, $U_i \in \cM(S_i^{bad},\cF, y)$.

Suppose for the sake of contradiction that $\P(U_i | S_i^{bad}) > q$.
    By the expansion assumption, we have:
    \begin{align*}
    \P(\widetilde\cN(U_i) \cap S_i^{good} | S_i) &= (1-\alpha_i)\P(\widetilde\cN(U_i) | S_i^{good}) \ge (1-\alpha_i)P_{1-\eta}(S_i^{good}, U_i)\\
    &> c(1-\alpha_i)\P(U_i | S_i^{bad}) = c\frac{(1-\alpha_i)}{\alpha_i}\P(U_i | S_i).
    \end{align*}
    Since we assumed $c > \frac{\alpha_i}{1-\alpha_i}$, for some $\tilde{c} > 1$ we have \[
    \P(\widetilde\cN(U_i) \cap S_i^{good} | S_i) > \tilde{c}\cdot \P(U_i | S_i).
    \]    

    Now we decompose $S_i$ into three disjoint sets (note the differences between $V_j$'s and $A_j$'s):
    \begin{align*}
    V_1 &= S_i \setminus D_i\\
    V_2 &= D_i \cap S_i^{bad}\\
    V_3 &= D_i \cap S_i^{good}
    \end{align*}    
Consider a point $\vx \in \widetilde\cN(U_i) \cap S_i^{good}$. By definition of $\widetilde\cN$, there exists $\vx' \in U_i$ reachable from $\vx$ by a good edge, so $f(\vx) = f(\vx')$. 
Since $\vx$ and $\vx'$ are both in $S_i$, $y(\vx) = y(\vx')$. 
Since $U_i \subset M_i$, $f(\vx') \ne y(\vx')$, so we must have $f(\vx) \ne y(\vx) = \hy(\vx)$.
    Hence $\vx \in D_i \cap S_i^{good}$.
    This shows that $\widetilde\cN(U_i) \cap S_i^{good} \subset V_3$.
    Using the decomposition above, we have:
    \begin{align}
    1 = \P(S_i | S_i) &= \P(V_1 | S_i) + \P(V_2 | S_i) + \P(V_3 | S_i) \nonumber\\
    &\ge \P(V_1 | S_i) + \P(V_2 | S_i) + \P(\widetilde\cN(U_i) \cap S_i^{good} | S_i) \nonumber\\
    &> \P(V_1 | S_i) + \P(V_2 | S_i) + \tilde{c}\P(U_i | S_i). \label{eqn:vee-sum-genrobust}
    \end{align}

\begin{lemma}    
\label{lemma:alpha-eqn-genrobust}
\[
\alpha_i \le \P(V_2 | S_i) + \P(U_i | S_i) + \P(S_i^{bad} \cap V_1 \cap \overbar{R_{\eta}(f)} | S_i)
\]
\end{lemma}
\begin{proof}
    \begin{align*}
    \alpha_i = \P(S_i^{bad} | S_i) &= \P(S_i^{bad} \cap D_i | S_i) + \P(S_i^{bad} \cap (S_i \setminus D_i) | S_i)\\
    &= \P(V_2 | S_i) + \P(S_i^{bad} \cap V_1 \cap R_{\eta}(f) | S_i) + \P(S_i^{bad} \cap V_1 \cap \overbar{R_{\eta}(f)} | S_i).
    \end{align*}
    Consider $\vx \in S_i^{bad} \cap V_1 \cap R_{\eta}(f)$. Since $\vx \in S_i^{bad}$, $\hy(\vx) \ne f(\vx)$. Since $\vx \in V_1 = S_i \setminus D_i$, $f(\vx) = \hy(\vx)$. Thus $f(\vx) \ne y(\vx)$ and hence $\vx \in M_i$. 
    Because $U_i = M_i \cap S_i^{bad}\cap R_{\eta}(f)$, this shows that $S_i^{bad} \cap V_1 \cap R_{\eta}(f) \subset U_i$. The lemma follows.
    
\end{proof}
Plugging Lemma \ref{lemma:alpha-eqn-genrobust} into \eqref{eqn:vee-sum-genrobust} yields:
\begin{align}
1 &> \P(V_1 | S_i) + \alpha_i + (\tilde{c}-1)\P(U_i | S_i) - \P(V_1 \cap S_i^{bad} \cap \overbar{R_{\eta}(f)} | S_i)\nonumber \\
&= \P(V_1 \cap \overbar{(S_i^{bad} \cap \overbar{R_{\eta}(f)})} | S_i) + \alpha_i + (\tilde{c}-1)\P(U_i | S_i)\nonumber \\
&= \P(V_1 \cap (S_i^{good} \cup R_{\eta}(f)) | S_i) +  \alpha_i + (\tilde{c}-1)\P(U_i | S_i)\nonumber \\
&\ge \P(V_1 \cap R_{\eta}(f) | S_i) + \alpha_i + (\tilde{c}-1)\P(U_i | S_i) \label{eqn:vee-sum-final-genrobust}.
\end{align}
Now recall that $V_1 = S_i \setminus D_i$, so:
\begin{align*}
\P(V_1 \cap R_{\eta}(f) | S_i) &= 1-\P(D_i \cap \overbar{R_{\eta}(f)} | S_i)\\
&= 1-\P(f(\vx) \ne \hy(\vx) \text{ or } f \text{ not $\eta$-robust at } \vx | S_i) 
\end{align*}
We assumed in the theorem statement that:
\begin{align*}
\P(f(\vx) \ne \hy(\vx) \text{ or } f \text{ not $\eta$-robust at } \vx | S_i) &\le \alpha_i(1 + q(\tilde{c}-1))\\
&\le \alpha_i + (\tilde{c}-1)\alpha_i\P(U_i | S_i^{bad})\\
&= \alpha_i + (\tilde{c}-1)\P(U_i | S_i).
\end{align*}
And hence \eqref{eqn:vee-sum-final-genrobust} gives $1 > 1$, a contradiction.
It must therefore be the case that $\P(U_i | S_i^{bad}) < q$.
Since $A_1\cap R_{\eta}(f)$ and $A_2 \cap R_{\eta}(f)$ are disjoint subsets of $U_i$, we have that $\P(A_1 \cap R_{\eta}(f) | S_i) + \P(A_2 \cap R_{\eta}(f) | S_i) \le q\alpha_i$.

We can now finish the bound.
    \begin{align*}
        \err(f,y|S_i) &= \P(M_i | S_i) = \P(A_1 \cup A_2 \cup A_3 | S_i)\\
        &= \P((A_1 \cup A_2 \cup A_3) \cap R_{\eta}(f) | S_i) + \P((A_1\cup A_2 \cup A_3) \cap \overbar{R_{\eta}(f)} | S_i)\\
        &\le \P((A_1 \cup A_2 \cup A_3) \cap R_{\eta}(f) | S_i) + \P(\overbar{R_{\eta}(f)} | S_i)\\
        &\le q\alpha_i + \P(A_3 \cap R_{\eta}(f) | S_i) + \P(\overbar{R_{\eta}(f)} | S_i).
    \end{align*}
    Since $(A_3 \cup (S_i\setminus D_i)) \cap R_{\eta}(f) = (A_1 \cup S_i^{good})\cap R_{\eta}(f)$,
    we have:
    \[
    (A_3 \cap R_{\eta}(f)) \cup (S_i \setminus D_i) \cap R_{\eta}(f) = A_1 \cap R_{\eta}(f) \cup S_i^{good} \cap R_{\eta}(f).
    \]
    All sets that are arguments to the unions above are disjoint, so:
    \[
        \P(A_3 \cap R_{\eta}(f) | S_i) + \P(\overbar{D_i} \cap R_{\eta}(f) | S_i) = \P(A_1 \cap R_{\eta}(f) | S_i) + \P(S_i^{good} \cap R_{\eta}(f) | S_i)
    \]
    Using $\P(\overbar{D_i} \cap R_{\eta}(f) | S_i) = 1-\P(D_i \cup \overbar{R_{\eta}(f)} | S_i)$, $\P(S_i^{good} \cap R_{\eta}(f)) \le 1-\alpha_i$, and $\P(A_1 \cap R_{\eta}(f) | S_i) \le q\alpha_i$, we obtain:
    \[
        \P(A_3 \cap R_{\eta}(f) | S_i) \le \P(D_i \cup \overbar{R_{\eta}(f)} | S_i) + q\alpha_i - \alpha_i.
    \]    
    Plugging this in to the earlier bound gives:
    \[
     \err(g,y|S_i) \le \P(\overbar{R_{\eta}(f)} | S_i) + 2q\alpha_i + \P(D_i \cup \overbar{R_{\eta}(f)} | S_i) - \alpha_i,
    \]
    which we can leave as-is:
    \[
     \err(f,y|S_i) \le \P(\overbar{R_{\eta}(f)} | S_i) + 2q\alpha_i + \P(f(\vx) \ne y(\vx) \text{ or } f \text{ not $\eta$-robust at } \vx | S_i) - \alpha_i,
    \]
    or union-bound to obtain:
    \[
     \err(f,y|S_i) \le 2\left(q\alpha_i +\P(\overbar{R_{\eta}(f)} | S_i)\right) + \err(f,\hy | S_i) - \alpha_i.
    \]

    The second part of the theorem follows directly from Lemma \ref{lemma:avg-robust-implies-eta-robust}.

\end{proof}

\paragraph{Comparing Theorems \ref{thm:plc-advrobust} and \ref{thm:wei-plc} in the Co-Training setting.}
In Section \ref{apdx:co-training} we argued that the conditionally-independent-view setup of \citet{blum1998combining} satisfies our expansion assumptions with $c=1$ and $q=0$, and any model is inherently perfectly robust because it's only trained on one of the views, so $\P(\overbar{R(f)}|S_i) = 0$ for all $f\in \cF$.
We can plug these values into the conditions of Theorem \ref{thm:wei-plc}.
In particular, Theorem \ref{thm:wei-plc} only applies to classifiers $f$ that satisfy:
\[
\err(f,\hy | S_i) \le \alpha_i\left(1 + q(\tilde{c} - 1)\right) = \alpha_i,
\]
But in the conditionally independent view setting, $\min_{f\in\cF} \err(f,\hy|S_i) = \alpha_i$.
In this case, the error bound in Theorem \ref{thm:wei-plc} simplifies to:
\[
\err(f,y|S_i) \le \err(f,\hy|S_i) - \alpha_i = 0
\]
Thus, in this setting, Theorem \ref{thm:wei-plc} only applies to classifiers that minimize the weak error as much as possible, and therefore have 0 error on the true labels.
In contrast, our Theorem \ref{thm:plc-avgrobust-unsimplified} applies to any $f$ with $\err(f,\hy|S_i) \le 1-q-\alpha_i = 1-\alpha_i$.
When $c=1$, the value of $c'$ in Theorem \ref{thm:plc-avgrobust-unsimplified} reduces to $c'=1$, and our bound exactly recovers the Blum and Mitchell bound for any classifier with error at most $1-\alpha_i$.
This relaxation of which classifiers qualify for the bound would be important for finite-sample guarantees, since Theorem \ref{thm:wei-plc} only applies to $f$ for which $\err(f,\hy|S_i) - \min_{f'\in \cF}\err(f',\hy|S_i) = 0$.
It is not possible to guarantee that we can obtain such an $f$ from a finite sample using standard improper PAC learning techniques.
In contrast, our bound still applies to student models $f$ with $\err(f,\hy|S_i) - \min_{f'\in \cF}\err(f',\hy|S_i) > 0$.

Our result can also recover the same bound as Theorem \ref{thm:wei-plc} in this setting: when $c=1$, we can take $\Delta=1$ in Corollary \ref{corr:simplified}. 
The condition on the classifier then becomes $\err(f,\hy|S_i) \le \alpha_i$, and in that case the bound in Corollary \ref{corr:simplified} simplifies to
\begin{align*}
\err(f,y|S_i) &\le \err(f,\hy|S_i) + \alpha_i(1-2c\Delta)\\
&= \err(f,\hy|S_i) -\alpha_i = 0
\end{align*}
so Theorem \ref{thm:plc-avgrobust-unsimplified} can exactly match Theorem \ref{thm:wei-plc} in this case.
This discussion shows that our result can match the bound from Theorem \ref{thm:wei-plc} in this setting, but is more stable, since it applies to classifiers that do not attain the exact optimal weak label error on the population.

\paragraph{Non-Robust coefficient.}
The bound in Theorem \ref{thm:plc-avgrobust-unsimplified} has a coefficient of $\frac{2\alpha_i}{1-2\alpha_i}$ on the probability of non-robust points $\P(\overbar{R(f)} | S_i)$.
The bound in Theorem \ref{thm:wei-plc-avgrobust} has a coefficient of 2. 
Our dependence on the probability of non-robust points is better when $\alpha < 1/3$.

\paragraph{Expansion term not in the bound.} Theorem \ref{thm:wei-plc} does not have the amount of expansion $c$ in the error bound---instead, the expansion is only present in the pre-conditions that the classifier needs to meet for the bound to apply. 
In \citet{wei2020theoretical}, the authors avoid this by introducing a loss function that contains the expansion $c$ and assuming the classifier $f$ minimizes that loss, essentially re-introducing a dependence on $c$.
However, achieving this bound (\citet[Theorem 4.3]{wei2020theoretical}) therefore requires \textit{knowing the expansion up-front} in order to minimize the suitably-scaled loss function.
In contrast, our pseudolabel correction bound directly has the amount of expansion $c$ on the right-hand-side.

\begin{table}[tb]
\caption{Comparison of measured values of the amount of expansion $c$ (``exp. $c$ val.'') on the data described in Section \ref{sec:experiments}. Theorem \ref{thm:plc-avgrobust-unsimplified} (our pseudolabel correction result) requires $\cM'(S_i^{good}, \cF)$ to expand on the pair $(S_i^{bad}, S_i^{good})$ for some $c>0$. That is, it requires robust non-mistakes to expand to points with the wrong pseudolabels.
We call this ``good-to-bad'' or G2B expansion.
On the other hand, Theorem \ref{thm:wei-plc} requires $\cM(S_i^{bad}, \cF)$ to expand on the pair $(S_i^{good}, S_i^{bad})$ for some $c > \alpha_i/(1-\alpha_i)$. In other words, it requires robust mistakes to expand to points with the correct pseudolabel.
We call this ``bad-to-good'' or B2G expansion.
These results show that empirically, we may have the G2B $c > 0$, so our bounds apply, but the B2G $c < \alpha_i/(1-\alpha_i)$, so Theorem \ref{thm:wei-plc} does not apply. This is the case for the $i=1$ values.}
\label{table:wei-not-enough}
\centering
\begin{tabular}{cllll}
\toprule
$i$ & $\alpha_i$ & $\alpha_i/(1-\alpha_i)$ & \thead{$(S_i^{bad}, S_i^{good})$ \\ (G2B) exp.  $c$ val.} & \thead{$(S_i^{good}, S_i^{bad})$ \\ (B2G) exp. $c$ val.}\\
\midrule
0 & 0.11 & 0.12 & 0.85 & 0.17\\
1 & 0.33 & \textbf{0.49} & 0.50 & \textbf{0.32}\\
\bottomrule
\end{tabular}
\end{table}

\paragraph{Empirically less sensitive.}
Theorem \ref{thm:wei-plc} requires $c > \alpha_i/(1-\alpha_i)$ for it to apply. This is not an artifact of our conversion from \citet{wei2020theoretical}'s additive expansion to our multiplicative expansion, as shown by the following lemma. 
\begin{lemma}
\label{lemma:additive-implies-multiplicative}
    Suppose that the distribution $\P(\cdot | S_i)$ satisfies $(q,\delta)$-additive expansion on $S_i^{bad}$ \citep[Definition A.1]{wei2020theoretical} for some $\delta > 0$. Then $\P(\cdot | S_i)$ satisfies $(c,\frac{q}{\alpha_i})$-expansion on $(S_i^{good}, S_i^{bad})$ with $c > \alpha_i / (1-\alpha_i)$.
\end{lemma}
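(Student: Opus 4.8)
The plan is to unwind both expansion definitions and push Wei et al.'s additive guarantee through the change of conditioning measure that relates $\P(\cdot\mid S_i)$ to $\P(\cdot\mid S_i^{good})$ and $\P(\cdot\mid S_i^{bad})$; once this bookkeeping is in place the lemma is essentially a one-line calculation. Recall that, specialized to the measure $\P(\cdot\mid S_i)$ and to subsets of $S_i^{bad}$, \citet[Definition A.1]{wei2020theoretical}'s additive expansion asserts that every $V \subseteq S_i^{bad}$ of sufficient mass has its neighbours inside $S_i^{good}$ carrying at least $\P(V\mid S_i) + \delta$ of mass, i.e.\ the correctly-pseudolabeled neighbours of $V$ carry strictly more mass than $V$ itself. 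The target, Definition~\ref{def:relative-expansion} applied to the pair $(S_i^{good},S_i^{bad})$, instead asks for a \emph{multiplicative} bound $\P(\cN(U)\mid S_i^{good}) > c\,\P(U\mid S_i^{bad})$ for every $U\subseteq S_i^{bad}$ with $\P(U\mid S_i^{bad}) > q/\alpha_i$, so the job is to manufacture such a $c$ and check that it necessarily exceeds $\alpha_i/(1-\alpha_i)$.

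The steps I would carry out, in order: (1) For $U \subseteq S_i^{bad}$ write $\P(U\mid S_i) = \alpha_i\,\P(U\mid S_i^{bad})$, so the hypothesis $\P(U\mid S_i^{bad}) > q/\alpha_i$ is exactly $\P(U\mid S_i) > q$ --- this is the source of the $q/\alpha_i$ threshold in the conclusion and lines the ``base-case'' exclusions up. (2) Apply the additive expansion to such $U$ to get $\P(\cN(U)\cap S_i^{good}\mid S_i) \ge \P(U\mid S_i) + \delta$. (3) Convert the left side back into an $S_i^{good}$-conditional probability: $\P(\cN(U)\mid S_i^{good}) = \P(\cN(U)\cap S_i^{good}\mid S_i)/\P(S_i^{good}\mid S_i) = \P(\cN(U)\cap S_i^{good}\mid S_i)/(1-\alpha_i)$. (4) Turn the additive increment into a multiplicative one: since $\P(U\mid S_i) \le \P(S_i^{bad}\mid S_i) = \alpha_i$, we have $\P(U\mid S_i) + \delta \ge \P(U\mid S_i)(1 + \delta/\alpha_i) = \P(U\mid S_i^{bad})(\alpha_i + \delta)$. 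Chaining (2)--(4) gives $\P(\cN(U)\mid S_i^{good}) \ge \frac{\alpha_i + \delta}{1-\alpha_i}\,\P(U\mid S_i^{bad})$; taking $c$ to be $\frac{\alpha_i + \delta}{1-\alpha_i}$ (or a hair below it, to make the inequality strict in the edge case $\P(U\mid S_i^{bad})=1$) delivers $(c, q/\alpha_i)$-expansion, and $\delta > 0$ forces $c > \frac{\alpha_i}{1-\alpha_i}$.

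The one place that genuinely matters --- and the reason the threshold $\alpha_i/(1-\alpha_i)$ is unavoidable rather than an artifact of our restatement --- is step (3): Wei et al.'s guarantee only controls the neighbour mass landing inside $S_i^{good}$, and renormalizing that mass by $\P(S_i^{good}\mid S_i) = 1-\alpha_i$ instead of by $1$ is precisely what deflates a multiplicative constant of ``$1 + \delta/\alpha_i$'' (measured against $S_i$) down to ``$(\alpha_i+\delta)/(1-\alpha_i)$'' (measured against $S_i^{good}$). I would therefore be careful to use the additive hypothesis in its ``neighbours in $S_i^{good}$'' form rather than a weaker ``neighbours anywhere in $S_i$'' form: with the latter one could lose up to $\alpha_i$ of mass to $\cN(U)\cap S_i^{bad}$ and the estimate would collapse. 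I would also flag explicitly that the additive-to-multiplicative conversion in step (4) relies on $\P(U\mid S_i) \le \alpha_i$. The remaining bookkeeping --- matching the size-threshold conventions and checking that no subset of $S_i^{bad}$ above the threshold is dropped by the $\alpha_i$-rescaling --- is routine and I would dispatch it at the end.
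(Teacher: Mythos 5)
Your proposal is correct and follows essentially the same route as the paper's proof: rescale the additive guarantee $\P(\cN(U)\cap S_i^{good}\mid S_i)\ge \P(U\mid S_i)+\delta$ by the conditional measures, using $\P(U\mid S_i)=\alpha_i\P(U\mid S_i^{bad})$ and $\P(\cN(U)\mid S_i^{good})(1-\alpha_i)=\P(\cN(U)\cap S_i^{good}\mid S_i)$, which also explains the $q/\alpha_i$ threshold. Your step (4) additionally extracts the explicit uniform constant $c=(\alpha_i+\delta)/(1-\alpha_i)$ via $\P(U\mid S_i^{bad})\le 1$, a minor sharpening of what the paper leaves implicit in its pointwise strict inequality.
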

\begin{proof}
$(q,\delta)$-additive expansion implies that for any $U \subset S_i^{bad}$ with $P(U|S_i) > q$,
\begin{align*}
    \P(\cN(U) \setminus S_i^{bad} | S_i) &\ge \P(U|S_i) + \delta\\
    \P(\cN(U) \cap S_i^{good} | S_i) &\ge \P(U \cap S_i^{bad} | S_i) + \delta\\
    \P(\cN(U) | S_i^{good})(1-\alpha_i) &\ge \P(U | S_i^{bad})\alpha_i + \delta,
\end{align*}
so $\P(\cN(U) | S_i^{good}) > \frac{\alpha_i}{1-\alpha_i}\P(U | S_i^{bad})$.
We assumed $U \subset S_i^{bad}$ was an arbitrary set with $\P(U|S_i) = \P(U|S_i^{bad})\alpha_i > q$, so $\P(U|S_i^{bad}) > q/\alpha_i$.
Hence this example satisfies $(c,q/\alpha_i)$-expansion on the pair of sets $(S_i^{good}, S_i^{bad})$.
\end{proof}
Lemma \ref{lemma:additive-implies-multiplicative} shows that if \citet{wei2020theoretical}'s additive expansion assumptions hold, our $(S_i^{good}, S_i^{bad})$ expansion assumption holds with $c > \alpha_i/(1-\alpha_i)$. 
However, our experiments suggest that this amount of expansion may not be present empirically for the error sets of actual trained classifiers.
In contrast, our pseudolabel correction result only requires $c>0$ and the bounds computed using our empirical measurements are close to the true error of the classifiers.
Table \ref{table:wei-not-enough} shows an example of this on the data from Section \ref{sec:experiments}. For label 1, $\alpha_i/(1-\alpha_i) = 0.49$ but the measured value of the expansion between  $(S_i^{good}, S_i^{bad})$ is only 0.32, so Theorem \ref{thm:wei-plc} does not apply.
Of course, this does not rule out the possibility that \ref{thm:wei-plc} may apply with a different choice of the neighborhood $\cN$, but for at least one natural setup ($\cN$ the set of paraphrases of the text input $\vx$), there is not enough measured expansion for it to apply.

\section{Checking expansion: Statistical theory}
\label{apdx:checking-expansion}
\begin{theorem}[Theorem \ref{thm:expansion-generalization}, formal]
\label{thm:expansion-generalization-apdx}
For $U \in \cM$, define: 
\[
c(U) := \frac{\P(n(\rvx) \in U | \rvx \in A)}{\P(\rvx \in U | \rvx \in B)}
\]
and its empirical version:
\[
\hat{c}(U) := \frac{\frac{1}{n_A}\sum_{i=1}^{n_A}\ind[n(\vx_i) \in U]}{\frac{1}{m}\sum_{j=1}^m\ind[\vx_i \in U]}
\]
Suppose there is a lower bound $\bar{q} > 0$ such that $q(U) \ge \bar{q}$ for all $U \in \cM$. 
Define $\gamma = n_B\bar{q}/n_A$ and let $m=n_A + n_B$.
Then for any $\delta \in (0,1]$, with probability at least $1-\delta$ over the sampling of $\cS_A$ and $\cS_B$,
\[
\sup_{U\in \cM} \hat{c}(U) - c(U) \le 4(4+\sqrt{\gamma})\sqrt{\frac{\VC(\cM)\log\frac{2em}{\VC(\cM)} + \log\frac{8}{\delta}}{n_A\bar{q}^2}}.
\]
\end{theorem}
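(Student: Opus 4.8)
The plan is to decouple the estimation error in the numerator and the denominator of $\hat c(U)$, control each by a VC uniform-convergence inequality, and recombine through an elementary bound for ratios that uses the hypothesis $q(U)\ge\bar q$ to keep the denominator away from $0$. Write $N(U)=\P(n(\rvx)\in U\mid \rvx\in A)$ and $D(U)=\P(\rvx\in U\mid \rvx\in B)=q(U)$, with empirical versions $\widehat N(U)=\frac{1}{n_A}\sum_{i}\ind[n(\vx_i)\in U]$ over $\cS_A$ and $\widehat D(U)=\frac{1}{n_B}\sum_{j}\ind[\vx_j\in U]$ over $\cS_B$, so that $c(U)=N(U)/D(U)$ and $\hat c(U)=\widehat N(U)/\widehat D(U)$. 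For the denominator I would apply a standard VC deviation inequality to the set system $\cM$ on the $n_B$-sample $\cS_B$: with probability at least $1-\delta/2$, $\sup_{U\in\cM}|\widehat D(U)-D(U)|\le\epsilon_B$, where $\epsilon_B$ is a universal constant times $\sqrt{(\VC(\cM)\log\frac{2en_B}{\VC(\cM)}+\log\frac{8}{\delta})/n_B}$. For the numerator, the key observation is that $\vx\mapsto\ind[n(\vx)\in U]$ ranges over the pullback set system $\{\{\vx:n(\vx)\in U\}:U\in\cM\}$, whose VC dimension is at most $\VC(\cM)$ — precomposing with the fixed oracle map $n$ cannot create new dichotomies — so the same inequality gives, with probability at least $1-\delta/2$, $\sup_{U\in\cM}|\widehat N(U)-N(U)|\le\epsilon_A$ with $\epsilon_A$ a universal constant times $\sqrt{(\VC(\cM)\log\frac{2en_A}{\VC(\cM)}+\log\frac{8}{\delta})/n_A}$. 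Bounding $n_A,n_B\le m$ inside the logarithms replaces both complexity terms by the single quantity $Y:=\VC(\cM)\log\frac{2em}{\VC(\cM)}+\log\frac{8}{\delta}$.

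On the intersection of these two events — which has probability at least $1-\delta$ by a union bound — I would bound the one-sided gap $\hat c(U)-c(U)$ (one-sidedness suffices, since $c(U)$ already underestimates the true expansion $\P(\cN(U)\mid A)/\P(U\mid B)$ by construction of the oracle). Provided $n_B$ is large enough that $\epsilon_B\le\bar q/2$, we get $\widehat D(U)\ge D(U)-\epsilon_B\ge\bar q/2$ uniformly, and then the identity $\frac{\widehat N}{\widehat D}-\frac{N}{D}=\frac{D(\widehat N-N)+N(D-\widehat D)}{\widehat D\,D}$ together with $N(U)\le 1$ and $D(U)\ge\bar q$ yields $\hat c(U)-c(U)\le \frac{2\epsilon_A}{\bar q}+\frac{2\epsilon_B}{\bar q^2}$. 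The first term is a universal constant times $\sqrt{Y/(n_A\bar q^2)}$, which is the source of the ``$4\cdot 4$'' in the stated bound; for the second term I would trade the $1/\sqrt{n_B}$ rate for a $1/\sqrt{n_A}$ rate by factoring out $\gamma=n_B\bar q/n_A$, writing $\frac{2\epsilon_B}{\bar q^2}=O\big(\tfrac{1}{\bar q^2}\sqrt{Y/n_B}\big)=O\big(\sqrt\gamma\cdot\sqrt{Y/(n_A\bar q^2)}\big)$. Adding the two contributions and carrying the constants through gives the claimed bound $4(4+\sqrt\gamma)\sqrt{Y/(n_A\bar q^2)}$.

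The main obstacle is the denominator side. Because $D(U)$ can be as small as $\bar q$, the ratio $\widehat N/\widehat D$ is intrinsically unstable, so the crux is (i) showing $\widehat D(U)$ stays bounded below uniformly over $\cM$ — this is exactly what the hypothesis $q(U)\ge\bar q$ buys, and it is what silently imposes a mild lower bound on $n_B$ in terms of $1/\bar q$ and $\VC(\cM)$, a requirement tracked by the factor $\gamma$ — and (ii) converting the denominator's $n_B$-rate into the $n_A$-rate in which the theorem is stated. A secondary subtlety, noted already in the body, is that the empirical neighbor graph on $\cS_A\times\cS_B$ may be empty even when the population expands; this is the reason the numerator is estimated through the oracle $n$ rather than from adjacency within the sample, and it is why the ``VC dimension of the pullback class'' step — including the degenerate case where $n$ is not injective, which only decreases the shatter count — needs to be stated carefully rather than waved through.
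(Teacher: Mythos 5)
Your overall architecture is genuinely different from the paper's: you decouple the numerator and denominator, apply a standard (additive) VC uniform-convergence bound to each class separately --- the pullback observation $\VC(\{\{\vx: n(\vx)\in U\}: U\in\cM\})\le \VC(\cM)$ is correct and plays the same role as the paper's trick of absorbing the oracle into the sample via $\vx_i\gets n(\vx_i)$ --- and then recombine through the ratio identity. The paper instead proves a concentration bound for the ratio statistic itself at a fixed $U$ (Lemma \ref{lemm:fixed-U-concentration}), using a \emph{multiplicative} Chernoff bound on $\hat q(U)$ (whose exponent scales with $n_B\bar q$, not $n_B\bar q^2$) coupled with a Hoeffding bound on $\hat p(U)$, and only then runs symmetrization/ghost-sample plus Sauer--Shelah on the ratio directly. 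This difference is not cosmetic: it is exactly where your argument breaks.

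The concrete gap is the recombination step. Your second term is $\tfrac{2\epsilon_B}{\bar q^2}$ with $\epsilon_B\asymp\sqrt{Y/n_B}$, and the claimed conversion $\tfrac{1}{\bar q^{2}}\sqrt{Y/n_B}=O\bigl(\sqrt{\gamma}\,\sqrt{Y/(n_A\bar q^{2})}\bigr)$ is false: substituting $n_B=\gamma n_A/\bar q$ gives $\tfrac{1}{\bar q^{2}}\sqrt{Y/n_B}=\tfrac{1}{\sqrt{\gamma\bar q}}\sqrt{Y/(n_A\bar q^{2})}$, and $\tfrac{1}{\sqrt{\gamma\bar q}}$ is not $O(4+\sqrt\gamma)$ --- the ratio of your term to the claimed one is $n_A/(\bar q^{3/2}n_B)$, which is unbounded as $\bar q\to 0$ (e.g.\ $n_A=n_B$, $\bar q$ small). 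So as written you prove a strictly weaker bound, with denominator-error sensitivity $\epsilon_B/\bar q^{2}$ rather than the theorem's effective $1/\bar q$ rate; in addition, your derivation needs $\epsilon_B\le\bar q/2$, i.e.\ $n_B\gtrsim(\VC(\cM)\log m+\log(1/\delta))/\bar q^{2}$, a hypothesis absent from the theorem (the paper never lower-bounds $\hat q$ additively, so it needs no such condition). The root cause is that an additive uniform bound on $|\hat q(U)-q(U)|$ ignores that the indicator's variance is $\approx q(U)\le$ whatever small value $\bar q$ may take; the paper's multiplicative Chernoff exploits this and is what produces the $(4+\sqrt\gamma)$ structure. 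To repair your route you would need a \emph{relative} uniform deviation bound for the denominator class (e.g.\ a bound on $\sup_{U}\,(q(U)-\hat q(U))/\sqrt{q(U)}$), or else fold the two deviations into a single per-set ratio concentration and symmetrize over the double sample as the paper does; with only additive VC bounds the stated inequality does not follow.
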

\begin{proof}[Proof of Theorem \ref{thm:expansion-generalization-apdx}]
We start off with a lemma providing concentation results for a fixed set $U\in \cM$.
\begin{lemma}[Concentration for a fixed $U$]
\label{lemm:fixed-U-concentration}
Fix an arbitrary set $U\in \cM$.
Define: 
\[
p(U) := \P(n(\rvx) \in U | \rvx \in A) \qquad q(U) := \P(\rvx \in U | \rvx \in B)
\]
And their empirical versions:
\[
\hat{p}(U) := \frac{1}{n_A}\sum_{\vx_i \in \cS_A}\ind[n(\vx_i) \in U] \qquad \hat{q}(U) := \frac{1}{n_B}\sum_{\vx_i \in \cS_B}\ind[\vx_i \in U].
\]
Suppose there is a lower bound $\bar{q} > 0$ such that $q(U) \ge \bar{q}$ for all $U \in \cM$. 
Define $\gamma = n_B\bar{q}/n_A$.
Then for any $\delta \in (0,1]$,
\[
\P\left[\frac{\hat p(U)}{\hat q(U)} - \frac{p(U)}{q(U)} > \delta\right] \le 2\exp\left(-2n_A\delta^2 \cdot \frac{\gamma \bar{q}^2 }{\left(4+\sqrt{\gamma}\right)^2}\right),
\]
\end{lemma}
where the probability is with respect to the sampling of $\cS_A$ and $\cS_B$.
\begin{proof}
Fix $\epsilon > 0$ and let $E$ be the event that $(1-\epsilon)q(U) \le \hat q(U)$.
Let $W$ refer to the event that $\frac{\hat p(U)}{\hat q(U)} - \frac{p(U)}{q(U)} > \delta$.
Then
\begin{align*}
\P[W] &= \P[W|E]\P[E] + \P[W|\overbar{E}]\P[\overbar{E}]\\
&\le \P[W|E] + \P[\overbar{E}].
\end{align*}
By the multiplicative Chernoff bound,
\[
\P[\overbar{E}] = \P[\hat q(U) < (1-\epsilon)q(U)] \le \exp(-\epsilon^2 n_B\cdot q(U) / 2).
\]
Now the goal is to bound $\P[W | E]$.
\begin{align*}
\P[W | E] = \P\left[\frac{\hat p(U)}{\hat q(U)} - \frac{p(U)}{q(U)} > \delta \middle| E\right] &\le \P\left[\frac{\hat p(U)}{(1-\epsilon)q(U)} - \frac{p(U)}{q(U)} > \delta\middle| E\right]\\
&= \P\left[\frac{\hat p(U)}{(1-\epsilon)q(U)} - \frac{p(U)}{q(U)} > \delta\right]\\
&= \P\left[\hat{p} > p + q\delta - \epsilon(p + q\delta)\right]\\
&\le \P[\hat p > p + \bar{q}\delta - 2\epsilon],
\end{align*}
where we used in the second line that the samples from $A$ and $B$ are independent (and thus, $\hat{p}(U)$ and $\hat{q}(U)$ are independent), and in the last line that $q \ge \bar{q}$, $0\le p \le 1$, $0\le q \le 1$, and $\delta \in (0,1]$.
Setting $\epsilon' = \bar{q}\delta -2\epsilon$, and using the Chernoff-Hoeffding theorem, we obtain:
\begin{align*}
    \P\left[W | E\right] \le \P[\hat p > \E[\hat p] + \epsilon'] \le \exp(-2(\epsilon')^2\cdot n_A)
\end{align*}
Combining results, we have that for any $\epsilon > 0$:
\[
\P[W] \le \exp(-2(\epsilon')^2 \cdot n_A) + \exp(-\epsilon^2 n_B \cdot \bar{q} / 2).
\]
Let $\gamma = n_B\bar{q}/n_A$.
Setting $\epsilon = \frac{2\bar{q}\delta}{4+\sqrt{\gamma}}$ (this is valid since $\epsilon > 0$) yields $\epsilon' = \frac{\bar{q}\delta\sqrt{\gamma}}{4+\sqrt{\gamma}}$, so we obtain
\[
\P[W] \le 2\exp\left(-2n_A\delta^2 \cdot \frac{\gamma \bar{q}^2 }{\left(4+\sqrt{\gamma}\right)^2}\right).
\]

\end{proof}

\begin{corollary}
\label{cor:abs-concentration}
An almost exactly symmetric argument yields:
\[
\P\left[\frac{p(U)}{q(U)}- \frac{\hat p(U)}{\hat q(U)} > \delta\right] \le 2\exp\left(-n_A\delta^2 \cdot \frac{\gamma \bar{q}^2 }{\left(2+\sqrt{\gamma}\right)^2}\right),
\]
and thus:
\[
\P\left[\left|\frac{p(U)}{q(U)}- \frac{\hat p(U)}{\hat q(U)} > \delta\right|\right] \le 4\exp\left(-n_A\delta^2 \cdot \frac{\gamma \bar{q}^2 }{\left(4+\sqrt{\gamma}\right)^2}\right)
\]
\end{corollary}
Now we show how to apply this deviation bound for $\hat{p}(U)/\hat{q}(U)$ in place of Hoeffding's inequality in the symmetrization argument from \citet{bousquet2003introduction}.

\begin{lemma}[Symmetrization]
\label{lem:symm}
Suppose we have a \emph{ghost sample} of $n_A$ additional points $\vx_i'$ drawn i.i.d. from $P(\cdot | A)$ and $n_B$ additional points $\vz_i'$ drawn i.i.d. from $P(\cdot | B)$.
Let $\hat{c}'(U) = \hat{p}'(U)/\hat{q}'(U)$ denote the empirical expansion of set $U$ on the ghost sample, and let $\hat{c}(U) = \hat{p}(U)/\hat{q}(U)$ denote the empirical expansion of set $U$ on the original sample. 
Then for any $t>0$ such that $n_A\cdot t^2 \ge \frac{(4+\sqrt{\gamma})^2\log 16}{\gamma\bar{q}^2}$:
\[\P\left[\sup_{U \in \cM} \hat{c}(U) - c(U) > t\right] \le 2\P\left[\sup_{U \in \cM} \hat{c}(U) - \hat{c}'(U) > t/2\right].\]
\end{lemma}
\begin{proof}[Proof of Lemma \ref{lem:symm}]
This follows \citet{bousquet2003introduction} exactly, except we replace the application of one inequality with the deviation bound derived above.
Let $U_n$ be the set achieving the supremum on the left-hand-side.
This depends on the sample $(\vx_1,\ldots, \vx_{n_A + n_B})$.
\begin{align*}
\ind_{\hat{c}(U_n) - c(U_n) >t}\ind_{\hat{c}'(U_n) - c(U_n) < t/2} &= \ind_{\hat{c}(U_n) - c(U_n) >t \wedge c(U_n) - \hat{c}'(U_n) \ge -t/2}\\
&\le \ind_{\hat{c}(U_n) - \hat{c}'(U_n) > t/2}
\end{align*}
Taking the expectation over the ghost sample $(\vx'_1,\ldots, \vx'_{n_A + n_B})$,
\[
\ind_{\hat{c}(U_n) - c(U_n) >t}\P'[\hat{c}'(U_n) - c(U_n) <t/2] \le \P'[\hat{c}(U_n) - \hat{c}'(U_n) > t/2]
\]
From Lemma \ref{lemm:fixed-U-concentration},
\begin{align*}
P'[\hat{c}(U_n) - c(U_n) \ge t/2] &\le 2\exp\left(-\frac{n_At^2}{2} \cdot \frac{\gamma \bar{q}^2 }{\left(4+\sqrt{\gamma}\right)^2}\right)\\
&\le \frac{1}{2}
\end{align*}
by the condition on $n_A \cdot t^2$.
Hence:
\[
\ind_{\hat{c}(U_n) - c(U_n) >t} \le 2\P'[\hat{c}(h_n) - \hat{c}'(h_n) > t/2],
\]
and taking the expectation over the original sample $(\vx_1,\ldots, \vx_{n_A + n_B})$ finishes the proof.
\end{proof}
Recall that the (deterministic) neighborhood oracle $n$ is fixed ahead of training and thus does not depend on the random sample(s).
For $\vx_i \in \cS_A$, and $\vx_i'\in \cS_A'$, we can redefine $\vx_i \gets n(\vx_i)$---that is, we apply the neighborhood oracle as part of the sampling process for points in $A$.
Let $m = n_A + n_B$ and define $\cM_{\vx_1,\ldots,\vx_m} = \{(\ind_{\vx_1 \in U}, \ldots \ind_{\vx_n \in U}) : U\in \cM\}$.
Recall that the \emph{growth function} of class $\cM$ is defined as $\cS_{\cM}(m) = \sup_{(\vx_1,\ldots,\vx_m)}|\cM_{\vx_1,\ldots,\vx_m}|$.
Now to finish the proof, observe that the sup in the right-hand-side of the Lemma \ref{lem:symm} result only depends on the \emph{finite} set of vectors $\cM_{\vx_1,\ldots,\vx_m, \vx'_1,\ldots, \vx'_m}$, due to our simplification $\vx_i \gets n(\vx_i)$ for $\vx_i \in \cS_A, \cS_A'$. 
That is,
\begin{align*}
\P\left[\sup_{U\in \cM} \hat{c}(U) - c(U) > t\right] &\le 2\P\left[\sup_{U\in \cM_{\vx_1,\ldots,\vx_m, \vx'_1,\ldots, \vx'_m}} \hat{c}(U) - \hat{c}'(U) > t/2\right]\\
&\le\sum_{U\in \cM_{\vx_1,\ldots,\vx_m,\vx'_1,\ldots,\vx'_m}}\P[\hat{c}(U) - \hat{c}'(U) > t/2]\\
&\le 2\sum_{U\in \cM_{\vx_1,\ldots,\vx_m,\vx'_1,\ldots,\vx'_m}}\P[|\hat{c}(U) - c(U)| > t/4]\\
&\le8\cS_{\cH}(2m)\exp\left(-n_A\cdot t^2\frac{\gamma\bar{q}^2}{16(4+\sqrt{\gamma})^2}\right).
\end{align*}
The first line simply applies the union bound over $\cM_{\vx_1,\ldots,\vx_m, \vx'_1,\ldots, \vx'_m}$. The second line uses the fact that if $a-b > c$ then for any real $x$, either $|a-x|$ or $|b-x|$ or both are larger than $c/2$, applies a union bound, and uses that $\hat{c}(U)$ and $\hat{c}'(U)$ are identically distributed.
The last line applies Corollary \ref{cor:abs-concentration} and uses the definition of the growth function. 
The Sauer-Shelah lemma \citep{vapnik1971uniform, sauer1972density, shelah1972combinatorial} implies that for any class $\cH$ with $\VC(\cH) = d$, $\cS_{\cH}(m) \le \left(\frac{em}{d}\right)^{d}$.
Then setting:
\[
t \ge 4(4+\sqrt{\gamma})\sqrt{\frac{\VC(\cM)\log\frac{2em}{\VC(\cM)} + \log\frac{8}{\delta}}{n_A\bar{q}^2}}
\]
guarantees that
\[
\P\left[\sup_{U\in \cM} \hat{c}(U) - c(U) > t\right] \le \delta
\]
and completes the proof.
\end{proof}

The following proposition shows the complexity of the class of mistake sets generated by $f\in \cF$ is bounded by complexity of the hypothesis class $\cF$ itself.
\begin{proposition}
\label{prop:vc-bound}
Suppose $\cF$ is a binary hypothesis class and $y: \cX \to \{0,1\}$ is the ground-truth classifier (we do not assume $y \in \cF)$. 
Fix a set $A\subset \cX$ and for each $f\in \cF$ let
$U(A,f,y) = \{\vx\in A : f(\vx) \ne y(\vx)\}$ be $f$'s mistakes on $A$. 
Define $\cM = \{U(A,f,y) : f \in \cF\}$.
Then $\VC(\cM) \le \VC(\cF)$.
\end{proposition}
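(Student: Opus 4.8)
The plan is to prove the stronger-looking (but equivalent) statement that every finite point set shattered by $\cM$ is also shattered by $\cF$; taking suprema over shattered sets then yields $\VC(\cM)\le\VC(\cF)$. The crux is the observation that, restricted to points of $A$, membership of $\vx$ in $U(A,f,y)$ is exactly the indicator $\ind[f(\vx)\ne y(\vx)]$, which is $f(\vx)$ XOR-ed with the \emph{fixed} bit $y(\vx)$. Since XOR-ing with a fixed vector is a bijection of $\{0,1\}^d$, the family $\cM$ realizes exactly the same collection of labelings on any finite subset of $A$ as $\cF$ does.

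To carry this out, I would first take an arbitrary set $Z=\{\vx_1,\dots,\vx_d\}$ shattered by $\cM$ and observe that necessarily $Z\subset A$: every member of $\cM$ is a subset of $A$, so a point outside $A$ lies in no member of $\cM$, and hence the labeling that marks that point with a $1$ cannot be realized — contradicting shattering. So all $\vx_j\in A$, and for every $f\in\cF$ and every $j$ we have the pointwise identity $\ind[\vx_j\in U(A,f,y)] = \ind[f(\vx_j)\ne y(\vx_j)] = f(\vx_j)\oplus y(\vx_j)$, viewing labels in $\{0,1\}$.

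Next, fix an arbitrary target pattern $\vb\in\{0,1\}^d$ and define $\vb'$ by $b'_j = b_j\oplus y(\vx_j)$. By shattering of $Z$ by $\cM$ there is some $f\in\cF$ with $\ind[\vx_j\in U(A,f,y)] = b'_j$ for all $j$; by the identity above this says $f(\vx_j)\oplus y(\vx_j) = b'_j$, i.e. $f(\vx_j) = b'_j\oplus y(\vx_j) = b_j$ for all $j$. As $\vb$ ranges over all of $\{0,1\}^d$ so does $\vb'$ (it is a coordinatewise relabeling), so $\cF$ realizes every labeling of $Z$, i.e. $\cF$ shatters $Z$, whence $d\le\VC(\cF)$. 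Taking the supremum over all shattered $Z$ gives $\VC(\cM)\le\VC(\cF)$. I do not expect a genuine obstacle here; the only steps that require care are the remark that a set shattered by $\cM$ must lie inside $A$ (so that the XOR identification is valid coordinatewise) and the trivial bookkeeping that translation by the fixed vector $(y(\vx_1),\dots,y(\vx_d))$ permutes $\{0,1\}^d$. Note the argument never uses $y\in\cF$, and in fact shows $\VC(\cM)=\VC(\{f|_A : f\in\cF\})$.
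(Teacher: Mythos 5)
Your proof is correct and follows essentially the same route as the paper's: the paper fixes a target labeling $y'$, forms the disagreement set $W=\{\vx_i : y'(\vx_i)\ne y(\vx_i)\}$, and uses shattering by $\cM$ to find $f\in\cF$ whose mistake pattern equals $W$, which is precisely your XOR-with-$y$ bijection argument. Your additional remark that any set shattered by $\cM$ must lie inside $A$ is a small point the paper glosses over (it simply fixes $V\subset A$), but it does not change the substance of the argument.
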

\begin{proof}
Fix a finite set of points $V \subset A$ with $|V| = d$, so $V = (\vx_1,\ldots,\vx_d)$.
Suppose that $\cM$ shatters $V$, so:
\[
|\{U \cap V : U \in \cM\}| = 2^{d}
\]
Fix an arbitrary labeling $y': \cX \to \{0,1\}$.
Let $W = \{\vx_i \in V : y'(\vx_i) \ne y(\vx_i)\}$.
Since $W \in \cM$, by definition of $\cM$ there exists $f \in \cF$ such that $f(\vx_i) \ne y(\vx_i)$ if and only if $\vx_i \in W$.
If $\vx_i \in W$, then $f(\vx_i) \ne y(\vx_i)$ and $y'(\vx_i) \ne y(\vx_i)$; since there are only two labels, it must be the case that $f(\vx_i) = y'(\vx_i)$.
If $\vx_i \not\in W$, then $f(\vx_i) = y(\vx_i) = y'(\vx_i)$.
This shows that for any labeling $y'$ of the points in $V$, there exists $f \in \cF$ that has zero error on that labeling.
Hence any set shattered by $\cM$ is shattered by $\cF$, so $\VC(\cM) \le \VC(\cF)$.\end{proof}
\section{Experiment Details}
\label{apdx:experiments}
\paragraph{Dataset details.}\looseness=-1
We train on the IMDb dataset of movie reviews \citep{imdbdata} (HuggingFaceHub ID \texttt{stanfordnlp/imdb}), which has 25000 training examples and 25000 test examples, each with exactly 50/50 positive/negative split, and an unspecified license.
For the teacher model described in Section \ref{sec:experiments}, based on the presence of unigrams `horrible' and `incredible', the coverage rate is $\P(\rvx \in S) = 0.06$.
The fully-supervised (gold) error of a linear classifier trained on top of the SentenceBERT representation we use is 10.6\% (when trained using the optimization procedure and hyperparameters described below; we did not perform hyperparameter tuning, since this number is merely used as an ideal lower bound for the performance of a weakly-supervised classifier).
For $i \in \{0,1\}$ and points $\vx \in S_i^{good}$, we obtained neighbors in $T_i$ and $S_i^{bad}$ using the oracle procedure described in Section \ref{sec:experiments} and shown graphically in Figure \ref{fig:nlp-paraphrase-examples}.
In each neighbor sampling step, we enforce the constraint $y(\vx') = i$ using the model trained on the gold labels to perform rejection sampling.
This ensures (as much as possible) that all neighbors have the same ground-truth label, as required by the definitions.

\paragraph{Model training.} We train the linear classifiers using the AdamW optimizer \citep{loshchilov2018decoupled} with global learning rate 0.01 and a weight decay of 0.1, and linear learning rate decay over 500 optimizer steps.
For the heuristic in Section \ref{sec:checking-expansion}, we retrain a model 5 times on 5 different random subsets of the covered training samples, each 80\% of the original, and use the other 20\% of covered samples as a validation set to perform early stopping with the \textit{weak label}.
We do \textit{not} perform early stopping with the true gold label, which is common practice in many works on programmatic weak supervision and gives a large performance gain \citep{zhang2021wrench, burns2023weak}.
Since the goal of our experiments is to give evidence that expansion holds in practice, we do not tune the learning rate or weight decay parameters at all, since the initial values we chose already led to weak-to-strong generalization effects.

\paragraph{Compute.} We used an internal machine with 4xA100 80GB GPUs to extract all deep network embeddings and to train the linear classifiers on top of those embeddings.
Embeddings were only extracted once. 
Classifiers were trained 5 times with different random seeds and data subsets to perform the heuristic procedure in Section \ref{sec:checking-expansion}.
This step was repeated several times as we developed different versions of our bounds and recomputed different notions of expansion w.r.t. the mistake sets of the trained classifiers.

\paragraph{Results.} Table \ref{table:imdb-results} shows the results of training using the SentenceBERT representations on the pseudolabels described in Section \ref{sec:experiments}. The table also shows the accuracy of the weak labels themselves in the $\hy(\vx)$ row. 
Pseudolabel correction and coverage expansion occur in different amounts depending on the class. 
For example, the student consistently improves over $\hy$ on $S_0$ but not on $S_1$. This justifies our choice to analyze these effects separately for different classes.
As discussed in Section \ref{sec:experiments}, Tables \ref{table:expansion-numbers} and \ref{table:expansion-numbers-covex} show the measured values of expansion, the worst-case weak label error of the student across the 5 training runs, and the values of our bounds from Theorems \ref{thm:plc-avgrobust-unsimplified} and \ref{thm:covex-pseudorandom-avgrobust-diffc}, respectively.
Since the goal is to give evidence for the presence of expansion, we ignore $\P(R_{\eta}(f)|\cdot)$ when computing the values of the bounds and focus on the terms involving amount of expansion.
The bounds in Table \ref{table:expansion-numbers} show that our results can effectively distinguish between cases where pseudolabel correction does (on $S_0$) and does not (on $S_1$) occur.

Table \ref{table:expansion-numbers-covex} measures the expansion of the set families $\cM(T_i, \cF)$ and $\cM'(T_i,\cF)$ on the set pairs $(S_i^{good}, T_i)$ and $(S_i^{bad}, T_i)$, respectively.
First, the fact that both expansion numbers are nonzero gives evidence for the extra structure described in Section \ref{sec:cov-expansion-bounds} and assumed in Theorem \ref{thm:covex-pseudorandom-advrobust}/Theorem \ref{thm:covex-pseudorandom-avgrobust-diffc}.
As mentioned in Section \ref{sec:cov-expansion-bounds}, we could have also just assumed expansion on $(S_i^{good},T_i)$, but this extra structure gives a tighter bound, and these results suggest that it is present empirically.
Table \ref{table:expansion-numbers-covex} shows that even though the coverage $\P(\vx\in S) = 0.06$ is very low in this example, good representations (in this case, representations such that $\cM(\cdot, \cG \circ \phi)$ expands for $\cG$ linear), can have good coverage expansion.
We use the tighter bound from Theorem \ref{thm:covex-pseudorandom-avgrobust-diffc}, which allows for different amounts of expansion between $(S_i^{good}, T_i)$ and $(S_i^{bad}, T_i)$.
\begin{table*}
\centering
\caption{Test accuracy breakdown for linear probe trained with true (gold) and weak labels on the IMDb data.
Performance of the weakly-trained model is broken down across the covered sets ($S_0$, $S_1$) and the uncovered sets ($T_0$, $T_1$). 
One standard deviation across five training folds is shown in parentheses.
Pseudolabel correction and coverage expansion occur in different amounts depending on the class. For example, the student consistently improves over $\hy$ on $S_0$ but not on $S_1$. This justifies our choice to analyze these effects separately for different classes.}
\label{table:imdb-results}
\vspace{2mm}
\begin{tabular}{llllllll}
\toprule
\textbf{Model} & \textbf{\thead{Test Acc.\\ (gold train)}} & \textbf{\thead{Test Acc.\\ (weak train)}} & 
 $S_0$ (4\%) & $S_1$ (2\%) & $T_0$ (46\%) & $T_1$ (48\%)\\
\toprule
$\hy(\vx)$ & & & 89.5 \phantom{(1.4)} & 67.1 \phantom{(1.4)} & n/a & n/a\\
\midrule
BoW & 84.9 & 50.1 \phantom{(1.4)} & 89.5 \phantom{(1.4)} & 67.1 \phantom{(1.4)} & 100.0 \phantom{(1.4)} & \phantom{0}0.0 \phantom{(1.4)}\\
SentenceBERT   & 89.4 & 81.8 (1.4) & 95.5 (1.4) & 69.1 (4.6) & 85.8 (3.1) & 74.9 (4.8)\\
\bottomrule
\end{tabular}
\end{table*}

\begin{figure}
    \centering
    \includegraphics[width=\textwidth]{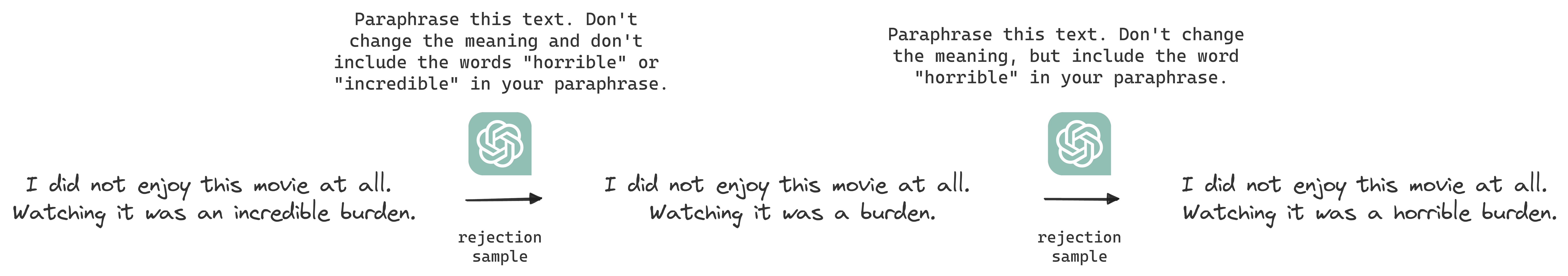}
    \vspace{-4mm}
    \caption{Example of our neighborhood oracle $n$, constructed using a targeted paraphrase procedure. For a covered point $\vx \in S$ (in this case, $\vx \in S_0^{bad}$, since it is a true negative point mislabeled by our example weak rules $\hy$), we first generate an uncovered point $\vx'\in T_i$ using a constrained paraphrase model and rejection sampling to ensure the ground-truth label remains negative (we use a model trained on the gold labels as a stand-in for the ground truth $y$).
    Next, we use GPT-4 to rewrite $\vx'$ using the \textit{opposite} word from \{\texttt{horrible}, \texttt{incredible}\} than the one that originally appeared.
    This generates another point $\vx'' \in S_i$.
    Since we enforce that $\vx$ and $\vx''$ are covered by different words, we know that if $\vx \in S_i^{good}$ (resp. $S_i^{bad}$), $\vx'' \in S_i^{bad}$ (resp. $S_i^{good}$).}
    \label{fig:nlp-paraphrase-examples}
\end{figure}
\end{document}